\newcommand{\fs}[1]{\scriptsize $\pm$#1}
\theoremstyle{plain}
\newtheorem{theorem}{Theorem}[section]
\newtheorem{lemma}[theorem]{Lemma}
\newtheorem{corollary}[theorem]{Corollary}
\theoremstyle{definition}
\newtheorem{assumption}[theorem]{Assumption}
\theoremstyle{remark}
\crefname{lemma}{Lemma}{Lemmas}
\Crefname{lemma}{Lemma}{Lemmas}
\crefname{assumption}{Assumption}{Assumptions}
\Crefname{assumption}{Assumption}{Assumptions}
\title{Enhancing Gradient-based Discrete Sampling via Parallel Tempering}
\author{%
  Luxu LIANG\\
  School of Mathematics\\
  Renmin University of China\\
  \texttt{lianglux@ruc.edu.cn} \\
  \And
  Yuhang JIA\\
  Department of Mathematical Sciences\\
  Tsinghua University \\
  \texttt{jia-yh22@mails.tsinghua.edu.cn} \\
  \AND
  Feng ZHOU \\
  Center for Applied Statistics and School of Statistics \\
  Renmin University of China \\
  \texttt{feng.zhou@ruc.edu.cn} \\
}
\begin{document}

\maketitle

\begin{abstract}
While gradient-based discrete samplers are effective in sampling from complex distributions, they are susceptible to getting trapped in local minima, particularly in high-dimensional, multimodal discrete distributions, owing to the discontinuities inherent in these landscapes. To circumvent this issue, we combine parallel tempering, also known as replica exchange, with the discrete Langevin proposal and develop the Parallel Tempering enhanced Discrete Unadjusted Langevin Algorithm~(PT-DULA) and Parallel Tempering enhanced Discrete Metropolis Adjusted Langevin Algorithm~(PT-DMALA), which are simulated at a series of temperatures. Significant energy differences prompt sample swaps, which are governed by a Metropolis criterion specifically designed for discrete sampling to ensure detailed balance is maintained. Additionally, we introduce an automatic scheme to determine the optimal temperature schedule and the number of chains, ensuring adaptability across diverse tasks with minimal tuning. Theoretically, we establish both asymptotic and non-asymptotic convergence analyses of our algorithms. Empirical results further emphasize the superiority of our method in sampling from complex, multimodal discrete distributions, including synthetic problems, restricted Boltzmann machines, and deep energy-based models.
\end{abstract}

\section{Introduction}\label{introduction}
Discrete structures are prevalent in fields such as statistics~\citep{robert1999monte, doucet2000sequential, ait2003effects, neal2000markov, ishwaran2001gibbs}, physics~\citep{baumgartner2012monte, zarfaty2022discrete, negri2015efficient}, bioinformatics~\citep{bollback2006simmap, yu2013shrinkage, wang2010gibbs}, and computer science~\citep{wang2019bert, peters2018probabilistic, meng2022concrete, dawid2024introduction}, underscoring the need for efficient discrete samplers. Since direct sampling from a target probability distribution $\pi(\theta) \propto \exp(U(\theta))$ defined on a discrete space $\Theta$ is often intractable, Markov chain Monte Carlo (MCMC) methods are commonly employed. Recent advances~\citep{zanella2020informed, grathwohl2021oops, zhang2022langevin, sun2021path, sun2022optimal, sun2023any, xiang2023efficient, pynadath2024gradient} utilize gradient information within discrete distributions to enhance proposal distributions, significantly improving sampling efficiency.

A key limitation of gradient-based methods is their tendency to being trapped in local modes due to the reliance on gradient information~\citep{pynadath2024gradient,ziyin2021sgd}, particularly when dealing with well-separated modes, which hinders both accuracy and efficiency in sampling. In continuous domains, various techniques, such as parallel tempering~(PT)~\citep{chen2020accelerating,swendsen1986replica}, cyclical step sizes~\citep{zhang2019cyclical}, and flat histograms~\citep{berg1991multicanonical, deng2020contour}, have been proposed to mitigate this issue. Among these methods, PT is favored for its simplicity and parallelization. By simulating Langevin chains at varying temperatures and incorporating a swap mechanism, PT accelerates convergence while balancing exploration and exploitation. 

Despite their success in continuous domains, adapting such techniques to discrete spaces poses considerable challenges. Sampling from discrete multimodal distributions is even more challenging, as the discontinuous nature of the space inherently leads to more severe multimodality. Despite the urgent need, the development of effective gradient-based samplers capable of navigating such complex landscapes in discrete settings remains largely unexplored.

In this paper, we propose a method integrating PT with discrete Langevin sampling, enhancing the efficiency and accuracy of gradient-based samplers for discrete, multimodal distributions. Intuitively, the high-temperature chains serve as bridges, connecting different modes. To ensure detailed balance, we employ a tailored Metropolis step to determine swaps. To further improve practicality, we develop an automatic scheme for selecting temperature levels and the number of chains, making our method adaptable across various applications. Our contributions are summarized as follows:

1) We enhance the discrete Langevin proposal~\citep{zhang2022langevin} for multimodal distributions by incorporating PT, with optimized temperature schedules and chain configurations. The resulting method enables flexible, dataset-adaptive adjustments with minimal manual tuning, effectively balancing exploration and exploitation in discrete spaces.

2) We provide both asymptotic (including mixing time analysis, which may be of independent interest) and non-asymptotic convergence analyses of our algorithms, and theoretically establish a provably tighter lower bound on the convergence rate compared to DLP.

3) We demonstrate the superiority of our method for both sampling and learning tasks, including synthetic mixture models, restricted Boltzmann machines, and deep energy-based models.

\section{Related Works}\label{sec:2}
\paragraph{Gradient-based Discrete Sampling.}
Gradient-based discrete sampling has gained popularity for tackling complex discrete sampling tasks, with its origins rooted in Locally-Balanced Proposals~(LBP)~\citep{zanella2020informed}, which leverage local density ratios to enhance sampling efficiency. It has been extended to continuous-time Markov processes~\citep{power2019accelerated} and been used in Multiple-try Metropolis (MTM) algorithms~\citep{gagnon2023improving} to achieve fast convergence. \citet{grathwohl2021oops} expanded LBP by incorporating first-order Taylor approximations, ensuring computational feasibility and improving performance. To facilitate sampling in high-dimensional discrete spaces, LBP were further extended to explore larger neighborhoods through a sequence of small moves~\citep{sun2021path}. \citet{zhang2022langevin} proposed Discrete Langevin Proposal~(DLP) by adapting continuous Langevin MCMC methods to discrete spaces, allowing parallel updates of all coordinates based on gradient information.~\citet{sun2023discrete} further generalize Langevin Monte Carlo (LMC) to discrete spaces via Wasserstein gradient flow, deriving the Discrete Langevin Monte Carlo (DLMC) algorithm, which further improves sampling efficiency. Additionally, DLP has also been refined with an adaptive mechanism to automatically adjust step sizes for better efficiency~\citep{sun2023any}. While these approaches have achieved notable success, sampling from discrete, complex, multimodal distributions remains a significant challenge.


\paragraph{Sampling on Multimodal Distributions.}
Various algorithms have been proposed to enhance exploration in complex, multimodal distributions, including importance sampling~\citep{wang2001efficient}, simulated annealing~\citep{kirkpatrick1983optimization}, simulated tempering~\citep{marinari1992simulated}, cyclic step-size scheduling~\citep{zhang2019cyclical}, dynamic weighting~\citep{wong1997dynamic}, and replica exchange Monte Carlo~\citep{earl2005parallel,swendsen1986replica}. Among these, simulated annealing and simulated tempering SGMCMC~\citep{ge2018simulated} accelerate convergence with dynamic temperatures. However, simulated annealing is sensitive to fast-decaying temperatures, and simulated tempering requires approximating the normalizing constant. Replica exchange MCMC (reMCMC) uses multiple chains at different temperatures with exchanges, offering easier implementation and parallelism. Studies have analyzed reMCMC’s acceleration effect~\citep{chen2020accelerating}, spectral gap properties~\citep{dong2022spectral}, and efficiency in deep learning~\citep{deng2020non, deng2020accelerating}. To the best of our knowledge, although PT has shown promise in continuous Langevin dynamics, and discrete domains often exhibit more severe multimodality due to inherent discontinuities, the potential of PT to improve gradient-based samplers in multimodal discrete domains remains untapped.~\citet{pynadath2024gradient} proposed a cyclic scheduling strategy that alternates step sizes, enhancing the handling of multimodal distributions. \citet{zhengexploring} attempted to integrate replica exchange with gradient-based sampling; however, their approach lacks a rigorous theoretical foundation, and the two replicas encounter a specific issue, as discussed in \cref{sec_4_1}.

\section{Preliminaries}
This section provides a formal definition of the problem and reviews relevant methods.
\subsection{Problem Definition}
We aim to sample from a discrete target distribution $\pi: \Theta \to [0, 1]$ defined as
\begin{equation*}
\pi(\theta) = \frac{1}{Z} \exp\left(U(\theta)\right), \quad \theta \in \Theta \subseteq \mathbb{R}^d,
\end{equation*}
where $U$ is the energy function, and $Z$ the normalizing constant. Following standard settings in gradient-based discrete sampling~\citep{grathwohl2021oops, zhang2022langevin}, the domain $\Theta$ is finite and coordinate-wise factorized, i.e., $\Theta = \prod_{i=1}^d \Theta_i$, with typical choices including binary $\{0,1\}^d$ and categorical $\{0,1,\dots,N\}^d$ spaces. The energy function $U$ is assumed differentiable\footnote{Noted that this assumption can be relaxed via Newton’s Series Approximation~\citep{xiang2023efficient}.} across $\mathbb{R}^d$.

\subsection{Replica Exchange Langevin Dynamics}
The replica exchange Langevin Dynamics~(reLD) is a widely used sampling method for non-convex exploration in continuous spaces. The method updates according to the following dynamics, for $k = 1, \dots, K$ and $i = 1, 2, \dots, n$,
\[
\theta^{(k)}_{i+1} = \theta^{(k)}_i + \frac{\alpha_k}{2} \nabla U(\theta^{(k)}_i) + \sqrt{\frac{\alpha_k}{\beta_k}} \xi_k,
\]
where $\{\alpha_k\}_{k=1}^{K}$ represent the step sizes, $\{\beta_k\}_{k=1}^{K}$ are the inverse temperature parameters, and $\{\xi_k\}_{k=1}^{K}$ are independent Gaussian noises drawn from $\mathcal{N}(0, I_{d \times d})$. In the typical set-up, the first chain is designated as the low-temperature chain. The gradient $\nabla U(\cdot)$ guides the algorithm toward high-probability regions. To further improve the mixing rate over Langevin dynamics, reLD enables interaction through a chain-swap mechanism between neighboring replicas. Specifically, the probability to swap the $i$-th samples between $\theta_i^{(k)}$ and $\theta_i^{(k+1)}$ is determined by $ s_k: \Theta \times \Theta \to \mathbb{R}^+$, which is given by, for $k=1, \cdots, K-1$,
\begin{equation}\label{traditional_swap}
\!s_k\!\left(\theta_i^{(k)}, \theta_i^{(k+1)}\!\right)\!=\! \min\!\left\{1,  e^{\left(\beta_{k}- \beta_{k+1}\right)\! \left[U(\theta_i^{(k+1)}) - U(\theta_i^{(k)})\right]}\!\right\}.
\end{equation}
Intuitively, the probability of swap in reLD depends on the energy values in $\theta_i^{(k)}$ and $\theta_i^{(k+1)}$. When the low-temperature chain is trapped in a local minimum and the high-temperature chain explores modes with much lower energy, swapping allows the former to escape and characterize new modes, while the latter continues broader exploration. 

\subsection{Discrete Langevin Sampler}
The Discrete Langevin Proposal (DLP)~\citep{zhang2022langevin} is a gradient-based method for sampling from high-dimensional discrete distributions. For a target distribution $\pi(\theta) \propto \exp\left(U(\theta)\right)$, DLP proposes a new sample $\theta'$ based on a Taylor expansion:
$$
q(\theta^{\prime} | \theta) = \frac{\exp\left(-\frac{1}{2\alpha} \left\| \theta^{\prime} - \theta - \frac{\alpha}{2} \nabla U(\theta) \right\|_2^2 \right)}{Z_\Theta(\theta)}, 
$$
where $\theta, \ \theta^{\prime}\in \Theta$, $\nabla U(\theta)$ is the gradient of the energy function evaluated at $\theta$, and $Z_\Theta(\theta)$ normalizes the distribution. A key insight is that, for $i=1, \cdots, d$, the update rule can be factorized by coordinate:
\begin{equation}\label{factorize}
\text{Cat} \left[ \text{Softmax} \left( 
\frac{1}{2} \nabla U(\theta)_i (\theta_{i}^{\prime} - \theta_{i}) 
- \frac{1}{2\alpha}(\theta_{i}^{\prime} - \theta_{i})^2 
\right)\right], 
\end{equation}
with $\theta_i^{\prime} \in \Theta_i$, DLP remains scalable and efficient for complex distributions. It can be used with or without the Metropolis-Hastings (M-H) step, corresponding to DMALA and DULA~\citep{zhang2022langevin}, respectively.

\section{Methodology}\label{sec_4}
In this section, we introduce our proposed algorithms in~\cref{sec_4_1,sec_4_2}, and discuss the optimal temperature schedule and the number of chains in~\cref{sec_4_3}.

\subsection{Parallel Tempering Enhanced Discrete Langevin Proposal}\label{sec_4_1}


\begin{wrapfigure}{r}{0.6\textwidth}
  \centering
  \vspace{-15pt}
  \subfloat{
    \includegraphics[width=0.29\textwidth]{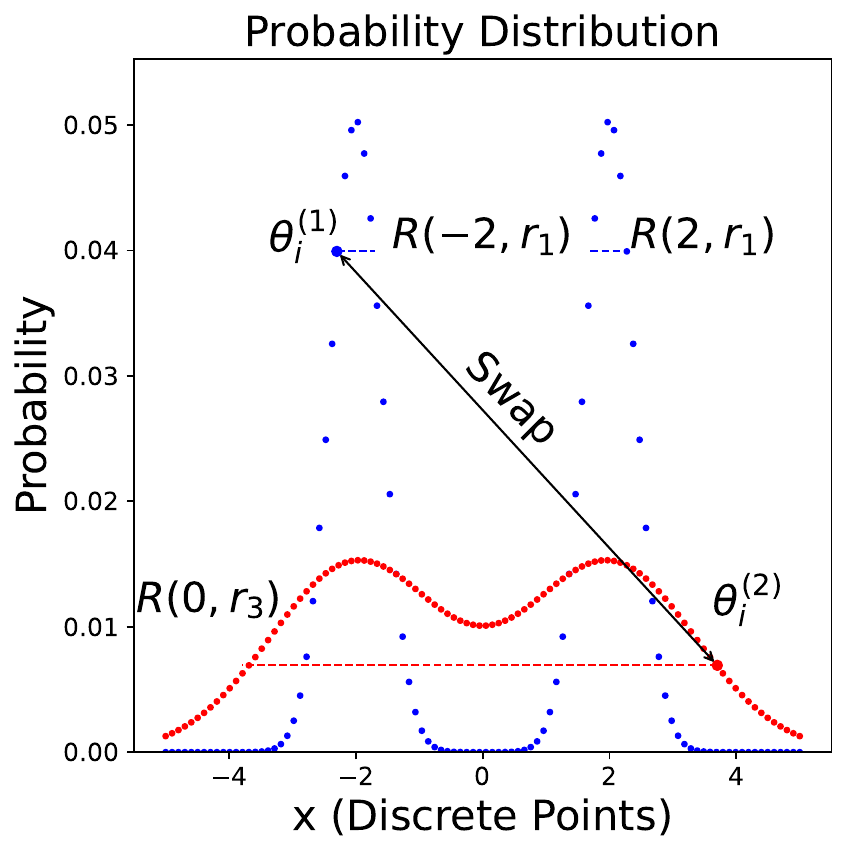}
  }
  \subfloat{
    \includegraphics[width=0.29\textwidth]{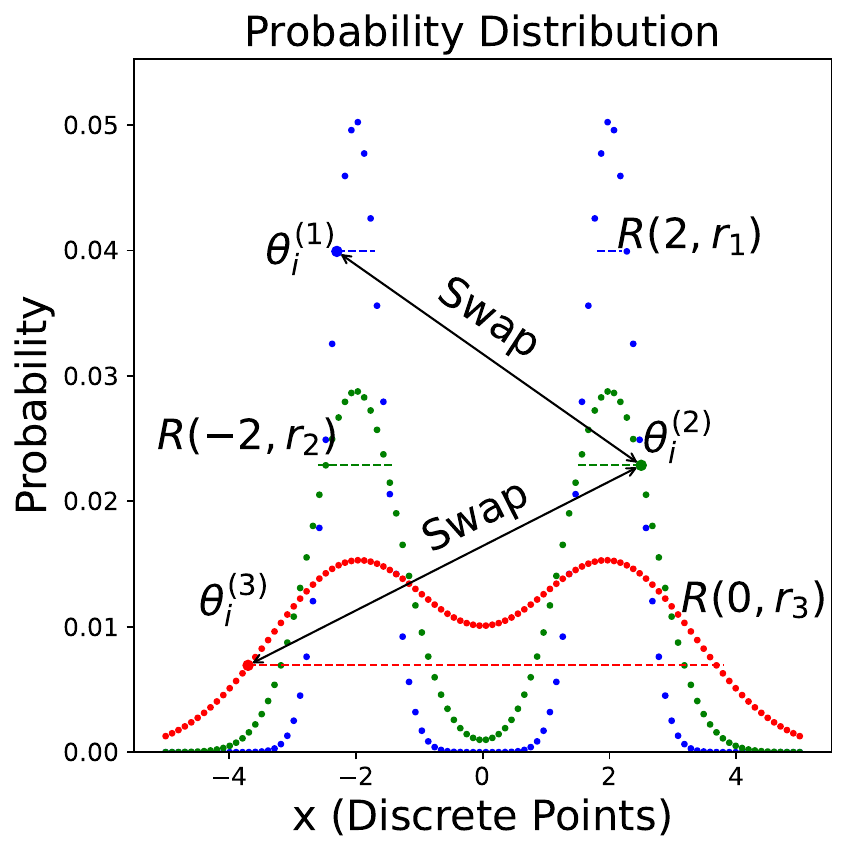}
  }
  \caption{The blue, green, and red dots correspond to probability functions at three temperatures. The high-probability areas to sample from are indicated by dashed lines.}
  \label{fig:intuition}
  \vspace{-15pt}
\end{wrapfigure}

One major issue with two replicas is that the swaps may not happen often enough. To see this, denote by $R(r, M):=\{x:\|x - r\| \leq M\}$. As shown in \cref{fig:intuition},
the figure on the left illustrates the swap between two chains. The high-probability regions for $\theta_i^{(1)}$ and $\theta_i^{(2)}$ are defined by $R\left(2, r_1\right) \cup R\left(-2, r_1\right)$ and $R\left(0, r_3\right)$. Swaps between $\theta_i^{(1)}$ and $\theta_i^{(2)}$ are unlikely to occur frequently, as $\theta_i^{(2)}$ has a low probability of falling within the region $R\left(2, r_1\right) \cup R\left(-2, r_1\right)$. However, when the number of chains increases to three, the high-probability region for $\theta_i^{(2)}$ becomes $R\left(2, r_2\right) \cup R\left(-2, r_2\right)$ with $r_1 < r_2 < r_3$, making it easier for $\theta_i^{(3)}$ to lie within this region, thereby increasing the frequency of swaps. In light of the fact that non-adjacent chain swaps are unlikely to occur, we exclusively consider adjacent swaps in this paper.

Building on the previous discussion, we propose a method that incorporates multiple chains to further enhance performance:
\begin{equation}\label{PTDLP_1}
\textbf{Exploitation:}\ q_1\!\left(\theta^{\prime} \!\mid \!{\theta}\right)\! \propto\! \exp \bigg\{\frac{\beta_1}{2} \nabla U({\theta})^{\top}\left({\theta}^{\prime}\!-\!{\theta}\right)-\frac{1}{2 \alpha_1}\left\|{\theta}^{\prime}-{\theta}\right\|_p^p\bigg\},
\end{equation}
\begin{equation}\label{PTDLP_2}
\textbf{Exploration:} \ q_k\!\left({\theta}^{\prime}\! \mid\! {\theta}\right)\!\propto\! \exp \bigg\{\underbrace{\frac{\beta_k}{2} \nabla U({\theta})^{\top}\left({\theta}^{\prime}\!-\!{\theta}\right)}_{\text {First-order Taylor Expansion}}-\underbrace{\frac{1}{2 \alpha_k}\left\|{\theta}^{\prime}-{\theta}\right\|_p^p}_{\text {Regularizer}}\bigg\},
\end{equation}
where $k=2,\cdots, K$ and $1=\beta_1>\cdots >\beta_K\geq0$. Note that the above proposals can also be factorized by coordinates, as shown in \cref{factorize}, which allows us to update each coordinate in parallel after computing $\nabla U(\theta)$. \citet{zhang2022langevin} emphasize the importance of the regularizer term, as it introduces a parameter similar to the step size. Note that we have chosen the 
$\textit{p}$-norm instead of the $2$-norm, considering that in certain tasks, selecting alternative norms may improve model performance, which could be due to the geometric structure of specific discrete domain distributions~\citep{jiang2024uncovering,park2023linear}. The exchange takes place between neighboring replicas.  In particular, for each $1 \leq k \leq K-1$, $\theta_{i+1}^{(k)}$ and $\theta_{i+1}^{(k+1)}$ are swapped according to a \textit{tailored Metropolis criterion} $ s_k$, which is given by 
\begin{equation}\label{swap_function}
s_k\left(\theta_{i+1}^{(k)}, \theta_{i+1}^{(k+1)}\mid \theta_i^{(k)}, \theta_i^{(k+1)}\right)=\! \min\left\{\!1, e^{\beta_{\delta, k}\left[U\left({\theta}_{i+1}^{(k+1)}\right)+U\left({\theta}_i^{(k+1)}\right)-U\left({\theta}_{i+1}^{(k)}\right)-U\left({\theta}_i^{(k)}\right)\right]}\!\right\}, 
\end{equation}
where $\beta_{\delta, k}:= \beta_{k} - \beta_{k+1}$. The traditional swap rate defined in~\Cref{traditional_swap} used in reLD relies on a \textit{decaying} step size to ensure that the stationary distribution approximates the target distribution. However, such a technique is not applicable in the discrete domain. Asymptotic convergence to the target distribution with fixed step sizes requires that detailed balance be preserved not only between the low- and high-temperature samplers, but also between successive output samples. The validation of the tailored criterion will be established in~\cref{sec_5}. We denote the proposal in \Cref{PTDLP_1,PTDLP_2,swap_function} by Parallel Tempering enhanced Discrete Unadjusted Langevin Algorithm~(PT-DULA). This approach involves running multiple chains in parallel, with each chain exploring a unique region of the parameter space. By exchanging information through swaps, the chains can effectively traverse diverse areas of the solution space, reducing the risk of becoming trapped in local minima. 

\paragraph{Local M-H Correction.}
It is optional to add M-H corrections~\citep{metropolis1953equation} for local kernels, which is usually combined with proposals to make the Markov chain reversible. Specifically, for each $k=1, \cdots, K$, after generating the next position $\theta^{\prime}$ from $q_k(\cdot \mid \theta)$, the M-H step accepts it with the probability: 
\begin{equation}\label{MH}
\min \left\{1, \exp \left(\beta_k\left(U\left(\theta^{\prime}\right) - U(\theta)\right)\right) \frac{q_k\left(\theta \mid \theta^{\prime}\right)}{q_k\left(\theta^{\prime} \mid \theta\right)}\right\}.
\end{equation}
This ensures that the marginal distribution of each replica $\theta^{(k)}$ admits the invariant distribution $\pi^{\beta_k}(\theta)\propto \exp(\beta_k U(\theta))$. We refer to the resulting method, which incorporates local M-H corrections within the parallel tempering framework, as the Parallel Tempering-enhanced Discrete Metropolis-Adjusted Langevin Algorithm (PT-DMALA). Each local kernel in PT-DMALA requires two gradient and two function evaluations, whereas PT-DULA involves only a single gradient evaluation at the cost of potential asymptotic bias, making it more suitable when the M-H step is costly. A stochastic gradient variant designed for large-scale datasets will be introduced in the following subsection.

\subsection{PT-DULA in Mini-Batch Setting}\label{sec_4_2}
As mentioned earlier, the methods discussed above require the evaluation of the energy function and gradient based on the full dataset, which is not scalable to large data~\citep{deng2020accelerating, lin2022multi}. Similar to Stochastic Gradient Langevin Dynamics (SGLD)~\citep{welling2011bayesian}, we replace the full-batch energy function and gradient with the unbiased stochastic estimators $\tilde{U}(\cdot)$ and $\nabla \tilde{U}$ in PT-DULA, thereby reducing the computational cost of our method for large-scale problems. Directly replacing the energy function and gradient of PT-DULA with their stochastic counterparts introduces significant bias. Intuitively, assuming that \(\tilde{U}(\cdot) \sim N(U(\cdot), \sigma^2)\) and denoting the stochastic version of \(s_k\) by \(\tilde{s}_k\), we can apply Jensen's inequality to obtain $\mathbb{E}[e^{a\tilde{U}(\cdot)}] \geq e^{a\mathbb{E}[\tilde{U}(\cdot)]}$ for $a>0$, with strict inequality holding when \(\tilde{U}(\cdot)\) is a random variable. Motivated by~\citet{deng2020accelerating}, we propose the following swapping rate:
\begin{equation}\label{stochastic_swap}
\tilde{s}_k\left(\theta_{i+1}^{(k)}, \theta_{i+1}^{(k+1)}\mid \theta_i^{(k)}, \theta_i^{(k+1)}\right)= \min\left\{1, e^{\beta_{\delta, k}\left[\tilde{U}\left({\theta}_{i+1}^{(k+1)}\right)+\tilde{U}\left({\theta}_i^{(k+1)}\right)-\tilde{U}\left({\theta}_{i+1}^{(k)}\right)-\tilde{U}\left({\theta}_i^{(k)}\right) - \beta_{\delta, k} \sigma^2\right]}\right\},
\end{equation}
where the factor $\beta_{\delta, k} \sigma^2$ in the exponent is used to correct the bias caused by the incorrect estimation of the energy function\footnote{Note that this swapping rate is not exactly unbiased, since $\mathbb{E}[\min\{1, \hat{s}_k\}] \leq \min\{1, s_k\}$. It was found in~\citet{deng2020accelerating} that this correction works well for most problems.}. As the number of chains increases, additional parameters, such as the number of chains and the temperature schedule, must be specified, as discussed in the following section.

\subsection{Warm-up Phase}\label{sec_4_3}
\paragraph{Optimal Temperature Schedule.}
Poor temperature spacing can cause replica systems to be too distant, hindering exchanges, or too close, limiting diversity~\citep{kone2005selection}. To address this, we aim to optimize the temperature schedule by maximizing the round-trip rate—the expected frequency with which a replica travels from the lowest to the highest temperature and back. Following~\citet[Assumption 2]{syed2022non}, we have the round-trip rate of our algorithm in~\cref{round_trip_rate}, which shows that maximizing the round-trip rate is equivalent to minimizing $\sum_{k=1}^{K-1} \frac{1}{s_k}$. Moreover, $\sum_{k=1}^{K-1} (1 - s_k)$ converges to a fixed barrier $\Lambda$ as $K \to \infty$~\citep{predescu2004incomplete, syed2022non}. Lagrange multipliers yield equal transition probabilities: $s_1 = s_2 = \cdots = s_{K-1}$. We estimate the barrier function $\Lambda(\beta)$ from a pilot run, interpolate it to obtain $\hat{\Lambda}(\beta_k)$. The optimal schedule is then determined through~\citet[Eq.(30)]{syed2022non} and a bisection method, resulting in the temperature set $\mathcal{T}_K^{\ast} = \left\{\beta_1^{\ast}, \ldots, \beta_K^{\ast}\right\}$.

\paragraph{Optimal Chain Number.}
Given a fixed temperature schedule, we now optimize the number of chains. Suppose $\mathcal{B}$ parallel PT instances are run, each with $K$ chains using the optimal schedule. Let $K_{\textit{total}}$ be the total number of available computational units, subject to the constraint $\mathcal{B}K \leq K_{\textit{total}}$.~\citet{nadler2007dynamics,syed2022non} demonstrated that the non-asymptotic (with respect to $K$) round-trip rate of the reversible PT scheme is
$$
\tau_{\mathcal{B}}(K)= \mathcal{B} / \sum_{k=1}^{K-1} \frac{1}{s_k}.
$$
The next lemma explains how to determine the optimal number of chains, given the optimal temperature schedule.
\begin{lemma}\label{lemma_4_2}
$\tau_{\mathcal{B}}(\cdot)$ is optimized when we run $\mathcal{B}^*= \lfloor K_{\text{total}} / K^* \rfloor$ copies of PT with $K^{\ast}=2 \Lambda + 1$. 
\end{lemma}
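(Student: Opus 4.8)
The plan is to treat $\tau_{\mathcal{B}}(K)$ as a function of $K$ after substituting the optimal temperature schedule, and then optimize the integer pair $(\mathcal{B}, K)$ under the constraint $\mathcal{B} K \le K_{\text{total}}$. The key observation, already supplied by the warm-up analysis, is that the optimal schedule equalizes the swap probabilities, so $s_1 = \cdots = s_{K-1} =: s(K)$, whence $\sum_{k=1}^{K-1} 1/s_k = (K-1)/s(K)$ and $\tau_{\mathcal{B}}(K) = \mathcal{B}\, s(K)/(K-1)$. The second input is the limiting-barrier identity $\sum_{k=1}^{K-1}(1 - s_k) \to \Lambda$ as $K \to \infty$; with equalized probabilities this reads $(K-1)(1 - s(K)) \to \Lambda$, i.e. $s(K) = 1 - \Lambda/(K-1) + o(1/K)$. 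Substituting gives $\tau_{\mathcal{B}}(K) \approx \mathcal{B}\bigl(1 - \Lambda/(K-1)\bigr)/(K-1)$, so for fixed compute budget we want to maximize $f(K) := \bigl(1 - \Lambda/(K-1)\bigr)/(K-1)$ times the number of copies $\mathcal{B} = \lfloor K_{\text{total}}/K\rfloor \approx K_{\text{total}}/K$.

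First I would write the objective, treating $K$ as a continuous variable, as $g(K) := \tau_{\mathcal{B}^\ast}(K) \approx \dfrac{K_{\text{total}}}{K}\cdot\dfrac{1}{K-1}\Bigl(1 - \dfrac{\Lambda}{K-1}\Bigr)$, and differentiate $\log g(K) = \text{const} - \log K - \log(K-1) + \log\!\bigl(1 - \Lambda/(K-1)\bigr)$ with respect to $K$. Setting the derivative to zero yields a stationarity condition; the standard simplification in this line of work (following~\citet{syed2022non,nadler2007dynamics}) is to pass to the regime where $K$ and $\Lambda$ are both large, in which $K \approx K-1$ and the condition collapses to a quadratic-type balance whose solution is $K^\ast = 2\Lambda + 1$. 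I would then verify this is a maximum (the second derivative of $\log g$ is negative there, or simply note $g$ increases then decreases), and finally account for integrality: with $K^\ast$ fixed, the number of parallel copies is the largest integer with $\mathcal{B} K^\ast \le K_{\text{total}}$, namely $\mathcal{B}^\ast = \lfloor K_{\text{total}}/K^\ast\rfloor$, which maximizes $\tau_{\mathcal{B}}(K^\ast) = \mathcal{B} s(K^\ast)/(K^\ast - 1)$ since it is increasing in $\mathcal{B}$.

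The main obstacle I anticipate is making the $K \to \infty$ asymptotics interact cleanly with the fixed-budget constraint: the round-trip rate $\tau_{\mathcal{B}}(K)$ is genuinely a function of the \emph{discrete} number of chains, and the expression $s(K) = 1 - \Lambda/(K-1) + o(1/K)$ is only an asymptotic approximation to the equalized swap probability, so the claimed optimum $K^\ast = 2\Lambda + 1$ should be understood as the continuous-relaxation optimizer (and is itself typically non-integer, since $\Lambda$ need not be a half-integer) rather than an exact argmax of the integer-valued problem. I would state the result in that spirit, invoking the non-asymptotic round-trip formula of~\citet{nadler2007dynamics,syed2022non} for $\tau_{\mathcal{B}}(K)$ and the barrier convergence of~\citet{predescu2004incomplete}, and present the derivation of $K^\ast = 2\Lambda + 1$ as the first-order condition of $\log g$ in the large-$\Lambda$ limit, with the integer rounding $\mathcal{B}^\ast = \lfloor K_{\text{total}}/K^\ast\rfloor$ following immediately from monotonicity in $\mathcal{B}$.
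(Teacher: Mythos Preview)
There is a genuine gap in your derivation. You propose to optimize
\[
g(K) \;=\; \frac{K_{\text{total}}}{K}\cdot\frac{1}{K-1}\Bigl(1-\frac{\Lambda}{K-1}\Bigr),
\]
i.e.\ you fold the budget constraint into the objective via $\mathcal{B}\approx K_{\text{total}}/K$ before differentiating. But the first-order condition for this $g$ does \emph{not} yield $K^\ast=2\Lambda+1$. Writing $\log g(K)=\text{const}-\log K-2\log(K-1)+\log(K-1-\Lambda)$ and setting the derivative to zero gives
\[
\frac{1}{K-1-\Lambda}\;=\;\frac{1}{K}+\frac{2}{K-1},
\]
which in the large-$K$ regime you invoke becomes $1/(K-\Lambda)\approx 3/K$, hence $K^\ast\approx \tfrac{3}{2}\Lambda$, not $2\Lambda+1$. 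The extra $1/K$ factor you introduced shifts the optimum; your claim that the condition ``collapses to\ldots\ $K^\ast=2\Lambda+1$'' is not correct for the function you wrote down.

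The paper's proof is both simpler and avoids this issue. It does \emph{not} couple $\mathcal{B}$ to $K$ inside the objective: it treats $\mathcal{B}$ as a fixed multiplicative constant and optimizes the per-copy rate $f(K)=(K-1-\Lambda)/(K-1)^2$ directly. Elementary calculus then gives $K^\ast=2\Lambda+1$ \emph{exactly}, with no large-$K$ or large-$\Lambda$ asymptotics required (the relation $s_k=1-\Lambda/(K-1)$ is invoked via \citet[Corollary~2]{syed2022non}, not as an $o(1/K)$ approximation). Only after fixing $K^\ast$ does the paper set $\mathcal{B}^\ast=\lfloor K_{\text{total}}/K^\ast\rfloor$, by monotonicity in $\mathcal{B}$. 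If you want to recover the stated result, drop the $K_{\text{total}}/K$ factor from the differentiation step and optimize $f(K)$ alone.
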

Detailed proofs and the schedule tuning algorithm are given in \cref{tuning_algo,app_D_1}.

\section{Theoretical Analysis}\label{sec_5}
In this section, we present an asymptotic convergence and mixing time analysis of PT-DULA, along with a non-asymptotic convergence analysis of PT-DMALA. These results extend prior analyses~\citep{grathwohl2021oops,pynadath2024gradient,zhang2022langevin}, and further demonstrate the acceleration gains enabled by the swap mechanism.

\subsection{Asymptotic Convergence Analysis}\label{sec_5_1}
First, we prove the asymptotic convergence of PT-DULA. \citet{zhang2022langevin} showed that a discrete Langevin-like sampler with temperature 1 is reversible for log-quadratic energy distributions with small step sizes. However, this does not directly extend to our proposed algorithm due to the multi-chain structure, swap mechanism, and higher temperatures. In this section, we extend the proof to PT-DULA and focus on the case of three chains, with the result extendable to more.



\begin{theorem}\label{thm_5_1}
    Let $\pi(\theta) \propto \exp( U(\theta))$ be the target distribution and $\tilde{\pi} (\theta)\propto \exp \left(\theta^{\top} W \theta+b^{\top} \theta\right)$ be the log-quadratic distribution satisfying that $\exists\ W \in \mathbb{R}^{d \times d}$, $b \in \mathbb{R}$, $\epsilon \in \mathbb{R}^{+}$, such that $\|\nabla U(\theta)-(2 W \theta+b)\|_1 \leq \epsilon$ for any $\theta \in \Theta$. Then the stationary distribution $\pi_\alpha$ of PT-DULA satisfies
\begin{equation}\label{thm_5_1_eq}
\left\|\pi_\alpha-\pi\right\|_{TV} \leq Z_1\exp\left(Z_2\epsilon\right) + Z_3 \exp \left(-\frac{1+\alpha \lambda_{\min }}{2 \alpha}\right) - Z_1, 
\end{equation}
where $\left\|\cdot\right\|_{TV}$ is the total variation distance, $\lambda_{\min}$ the smallest eigenvalue of $W$, $Z_1$ a constant depending on $\tilde{\pi}$ and $\alpha$, $Z_2$ on $\Theta$ and $\max\limits_{\theta, \theta' \in \Theta} \left\|\theta' - \theta\right\|_{\infty}$, and $Z_3$ a constant associated with $\tilde{\pi}$.
\end{theorem}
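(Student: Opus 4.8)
The plan is to bound the total variation distance by a two-step comparison: first replace the true target $\pi$ with the log-quadratic surrogate $\tilde\pi$, controlling the error in terms of $\epsilon$, and then show that PT-DULA run against the log-quadratic energy has a stationary distribution close to $\tilde\pi$, with the gap controlled by the step size $\alpha$. Concretely, I would write $\|\pi_\alpha - \pi\|_{TV} \le \|\pi_\alpha - \tilde\pi_\alpha\|_{TV} + \|\tilde\pi_\alpha - \tilde\pi\|_{TV} + \|\tilde\pi - \pi\|_{TV}$, where $\tilde\pi_\alpha$ denotes the stationary distribution of PT-DULA with the surrogate energy. The first and third terms are the ``$\epsilon$-terms'' and get folded into $Z_1\exp(Z_2\epsilon) - Z_1$: since $\|\nabla U(\theta) - (2W\theta + b)\|_1 \le \epsilon$ pointwise, the energies $U$ and $\theta^\top W\theta + b^\top\theta$ differ by at most $O(\epsilon \cdot \mathrm{diam}_\infty(\Theta))$ on the finite domain, so the (unnormalized, and after normalizing) distributions differ multiplicatively by $\exp(O(\epsilon))$; the same perturbation argument applied to the coordinate-wise softmax proposals $q_k$ and the swap rates $s_k$ shows the two Markov kernels are $\exp(O(\epsilon))$-close, hence so are their stationary distributions. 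This is where the constants $Z_1$ (depending on $\tilde\pi$ and $\alpha$ through normalization) and $Z_2$ (depending on $\Theta$ and $\max_{\theta,\theta'}\|\theta'-\theta\|_\infty$) enter.

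The heart of the argument is the middle term, $\|\tilde\pi_\alpha - \tilde\pi\|_{TV} \le Z_3 \exp(-(1+\alpha\lambda_{\min})/(2\alpha))$. Here I would follow the DLP reversibility argument of \citet{zhang2022langevin}, extended to the three-chain product structure. The key observation is that for a log-quadratic energy the DLP proposal $q_k(\theta'\mid\theta)$ is, up to a normalization mismatch, symmetric after tilting by $\tilde\pi^{\beta_k}$: one checks that $\pi^{\beta_k}(\theta)q_k(\theta'\mid\theta)$ and $\pi^{\beta_k}(\theta')q_k(\theta\mid\theta')$ agree except for a boundary/normalization term that is exponentially small in $1/\alpha$ — this is precisely where the factor $\exp(-(1+\alpha\lambda_{\min})/(2\alpha))$ comes from, the $\lambda_{\min}$ controlling the curvature of the Gaussian envelope. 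I would then argue that (i) each local M-H-free kernel preserves $\tilde\pi^{\beta_k}$ up to this error, so the product kernel preserves $\bigotimes_k \tilde\pi^{\beta_k}$ up to the same order, and (ii) the tailored swap rate $s_k$ in \cref{swap_function} is designed so that the swap move exactly preserves the product measure $\bigotimes_k\tilde\pi^{\beta_k}$ (this is a genuine detailed-balance check on the swap, and it is exact, not approximate). Combining (i) and (ii), the full PT-DULA kernel is within $O(\exp(-(1+\alpha\lambda_{\min})/(2\alpha)))$ of being reversible with respect to $\bigotimes_k\tilde\pi^{\beta_k}$; a standard perturbation bound for stationary distributions (e.g.\ via the fundamental matrix, or directly bounding $\|\mu K - \mu\|_{TV}$ for $\mu = \bigotimes_k\tilde\pi^{\beta_k}$ and invoking a contraction/ergodicity estimate on the finite chain) then converts this into the claimed bound on $\|\tilde\pi_\alpha - \tilde\pi\|_{TV}$, after marginalizing to the first (temperature-1) chain. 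The constant $Z_3$ collects the normalization constants of the Gaussian envelopes and the ergodicity constant.

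The main obstacle I anticipate is step (i): verifying that the swap move, which acts on two coordinates of two chains \emph{simultaneously} and is conditioned on the previous samples $\theta_i^{(k)}, \theta_i^{(k+1)}$, genuinely satisfies detailed balance with respect to $\bigotimes_k\tilde\pi^{\beta_k}$ — the ``detailed balance between successive output samples'' the authors flag as the reason the traditional reLD swap rate \eqref{traditional_swap} fails here. One has to carefully track which distribution the pre-swap samples are drawn from and confirm that the four-term exponent in \eqref{swap_function} is exactly the log-ratio needed; getting the conditioning structure right, and ensuring the argument composes cleanly with the approximate reversibility of the local kernels without the errors compounding across the $K$ chains, is the delicate part. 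A secondary technical point is making the perturbation-to-stationary-distribution step quantitative on the (large but finite) product state space without the ergodicity constant blowing up — I would handle this by noting the local Langevin kernels are uniformly ergodic on the finite domain with a spectral gap bounded below independently of the small perturbations, so the implied constant is benign and can be absorbed into $Z_3$.
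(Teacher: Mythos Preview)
Your overall strategy matches the paper's: the same triangle-inequality decomposition $\|\pi_\alpha - \pi\|_{TV} \le \|\pi_\alpha - \tilde\pi_\alpha\|_{TV} + \|\tilde\pi_\alpha - \tilde\pi\|_{TV} + \|\tilde\pi - \pi\|_{TV}$, with the outer two terms controlled by the gradient closeness $\epsilon$ and the middle term by the step size $\alpha$; the paper in fact cites \citet[Theorem~5.2]{zhang2022langevin} and \citet[Proposition~4.2]{levin2017markov} directly for those quantitative bounds. Where you and the paper diverge is the mechanism for the middle term. You propose to show the local kernels are \emph{approximately} reversible with respect to $\tilde\pi^{\beta_k}$, separately verify that the swap preserves $\bigotimes_k\tilde\pi^{\beta_k}$, and then invoke a perturbation-of-stationary-distribution bound (hence your secondary concern about a spectral-gap constant). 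The paper instead proves, as a preceding lemma, that in the log-quadratic case each local kernel $q_k$ is \emph{exactly} reversible with respect to an explicit modified law $\pi_\alpha^{\beta_k}(\theta) \propto Z_\alpha^{\beta_k}(\theta)\,\tilde\pi^{\beta_k}(\theta)$, where $Z_\alpha^{\beta_k}(\theta) = \sum_x\exp\bigl(\tfrac{\beta_k}{2}(U(x)-U(\theta)) - \tfrac{\beta_k}{2}(x-\theta)^\top W(x-\theta) - \tfrac{1}{2\alpha}\|x-\theta\|_p^p\bigr)$. It then writes out the full marginal first-chain kernel $q_\alpha(\theta'\mid\theta_i^{(1)})$---summing over the auxiliary-chain states weighted by their own $\pi_\alpha^{\beta_2},\pi_\alpha^{\beta_3}$---and checks directly that $\pi_\alpha^{\beta_1}(\theta)\,q_\alpha(\theta'\mid\theta)$ is symmetric in $(\theta,\theta')$, handling swap and local-move terms together in a single symmetry verification rather than separately. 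This identifies $\pi_\alpha^{\beta_1}$ as the \emph{exact} stationary distribution of PT-DULA in the log-quadratic case, so the step-size bound reduces to controlling $\|\pi_\alpha^{\beta_1} - \tilde\pi\|_{TV}$ via how far $Z_\alpha^{\beta_1}(\theta)$ is from $1$. The paper's route thus sidesteps both the obstacle you flag (swap conditioning on previous iterates) and the perturbation step: neither arises, because reversibility is exact for a known modified target rather than approximate for the intended one. Your version is workable but less direct; the key simplification you are missing is that the DULA kernel on a log-quadratic target admits an explicit, exact stationary law.
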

\Cref{thm_5_1} demonstrates that the tailored swap function defined in~\Cref{swap_function} guarantees the asymptotic convergence of PT-DULA. The low bias of PT-DULA, as quantified by~\cref{thm_5_1}, implies that $\pi_{\alpha}$ closely approximates $\pi$, leading to higher acceptance rates in the local M-H step of PT-DMALA and improved efficiency. The next theorem establishes upper and lower bounds on the algorithm's mixing time. Denote by 
$$
d_p := \inf\limits_{\theta \neq \theta^{\prime} \in \Theta}\left\|\theta-\theta^{\prime}\right\|_p^p, \ \mathcal{D}_p := \sup\limits_{\theta, \theta^{\prime} \in \Theta}\left\|\theta-\theta^{\prime}\right\|_p^p.
$$
\begin{theorem}\label{thm:5.2}
If the target distribution is assumed to be log-quadratic, i.e., for any $\theta \in \Theta$, $\pi(\theta) \propto \exp \left(\theta^{\top} W \theta+b^{\top} \theta\right)$ with some constants $W \in \mathbb{R}^{d \times d}$ and $b \in \mathbb{R}^{d}$. The mixing time of PT-DULA satisfies
$$
\mathcal{L}\leq t_{\operatorname{mix}}(\varepsilon)\leq \mathcal{U},
$$
where
$$\mathcal{L}\!:=\!\left(\!\frac{1}{4 Z}\exp \!\left(\frac{1}{2} \lambda_{\min }(W) d_2+\frac{1}{2 \alpha} d_p\right)\!-\!1\!\right) \log \left(\!\frac{1}{2 \varepsilon}\!\right),\, \mathcal{U}\!:=\!\frac{2}{\left(I_{\pi_{\alpha}}(\Theta) \pi_{\alpha,\min} q_{\min}\right)^2}\log(\frac{1}{\epsilon\pi_{\alpha,\min}}),$$
$\lambda_{\min}$ is the smallest eigenvalue of $W$, $I_{\pi_{\alpha}}(\Theta)$ denotes the Cheeger constant associated with $\Theta$ and $\pi_\alpha$, $\pi_{\alpha,\min} := \min\limits_{\theta \in \Theta} \pi_\alpha(\theta)$, and $q_{\min}$ is defined in~\Cref{thm_5_2_eq3}.
\end{theorem}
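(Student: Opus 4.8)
The plan is to establish the two bounds separately, using the canonical path / conductance machinery for the upper bound and a direct Dirichlet-form comparison for the lower bound.

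\textbf{Upper bound.} The plan is to use the standard conductance (Cheeger) bound for finite reversible Markov chains: $t_{\mathrm{mix}}(\varepsilon) \le \frac{2}{\Phi_\star^2}\log\frac{1}{\varepsilon\,\pi_{\alpha,\min}}$, where $\Phi_\star$ is the bottleneck ratio of the PT-DULA chain on the product space of replicas with stationary measure $\pi_\alpha$. First I would lower-bound the conductance: for any set $S$ with $\pi_\alpha(S)\le 1/2$, the flow out of $S$ is at least $\pi_\alpha$-mass times the minimum one-step transition probability $q_{\min}$ (the worst-case probability that a single coordinate of a single replica makes the move dictated by the Softmax proposal in \cref{factorize}, multiplied by the worst-case swap acceptance — both bounded below by quantities controlled by $\lambda_{\min}(W)$, $\alpha$, $d_p$, $\mathcal D_p$ and the diameter of $\Theta$), so that $\Phi_\star \ge I_{\pi_\alpha}(\Theta)\,\pi_{\alpha,\min}\,q_{\min}$. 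Substituting this into the conductance bound gives exactly $\mathcal U$. The only subtlety is to confirm that the PT-DULA kernel (proposal + tailored swap of \cref{swap_function}, or local M-H) is reversible w.r.t.\ $\pi_\alpha$ — but \cref{thm_5_1} together with the detailed-balance design of the swap criterion already gives this, so I would just cite it.

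\textbf{Lower bound.} For the lower bound I would exploit that mixing is at least as slow as relaxation: $t_{\mathrm{mix}}(\varepsilon) \ge (t_{\mathrm{rel}} - 1)\log\frac{1}{2\varepsilon}$ with $t_{\mathrm{rel}} = 1/\gamma$ the inverse spectral gap. Then I would lower-bound $t_{\mathrm{rel}}$ by upper-bounding the spectral gap via a test function argument: $\gamma \le \mathcal E(f,f)/\mathrm{Var}_{\pi_\alpha}(f)$ for a well-chosen $f$. Taking $f$ to be (essentially) the indicator of a single low-temperature configuration $\theta_\star$ minimizing the energy, the Dirichlet form $\mathcal E(f,f)$ is dominated by transitions leaving $\theta_\star$, whose total probability is bounded by $4Z^{-1}\exp(-\frac12\lambda_{\min}(W)d_2 - \frac1{2\alpha}d_p)$ — here the log-quadratic assumption lets me compute $\pi_\alpha(\theta_\star)$ explicitly and the Gaussian-like form of the proposal forces any outgoing move to pay at least $d_p/(2\alpha)$ in the regularizer and $d_2\lambda_{\min}/2$ in the energy drop. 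Rearranging $1/\gamma - 1 \ge \frac{1}{4Z}\exp(\tfrac12\lambda_{\min}d_2 + \tfrac1{2\alpha}d_p) - 1$ yields $\mathcal L$.

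\textbf{Main obstacle.} I expect the hard part to be the lower bound: pinning down the constant $\tfrac14$ and verifying that, after introducing the replica product structure and the swap moves, the \emph{total} escape probability from the distinguished state (over \emph{both} local Langevin moves in all chains and all adjacent swaps) is still controlled by the stated exponential — the swap moves could in principle provide an additional escape route, and I need the tailored swap rate \cref{swap_function} evaluated at a configuration where all replicas sit at $\theta_\star$ to be small, which again reduces to the log-quadratic computation but must be done carefully to avoid losing the clean constant. The upper bound is more routine once the reversibility from \cref{thm_5_1} and the definition of $q_{\min}$ (deferred to \cref{thm_5_2_eq3}) are in hand.
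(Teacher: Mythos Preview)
Your upper-bound plan via conductance/Cheeger is exactly the paper's: lower-bound the one-step kernel by $q_{\min}$, deduce $\Phi\ge I_{\pi_\alpha}(\Theta)\,\pi_{\alpha,\min}\,q_{\min}$, and combine Cheeger's inequality $1-\lambda_2\ge\Phi^2/2$ with the standard spectral mixing bound.

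For the lower bound the paper takes a different route from your Dirichlet-form test-function argument. Rather than choosing $f=\mathbf 1_{\{\theta_\star\}}$ and bounding $\mathcal E(f,f)/\mathrm{Var}_{\pi_\alpha}(f)$, the paper controls the spectral gap via the Gershgorin disk theorem: once one has the uniform off-diagonal row-sum bound $\sum_{\theta'\ne\theta}q_\alpha(\theta'\mid\theta)\le 2Z\exp(-\tfrac12\lambda_{\min}d_2-\tfrac1{2\alpha}d_p)$, Gershgorin places $\lambda_2$ within this radius of some diagonal entry $q_\alpha(\theta\mid\theta)$, and the triangle inequality $1-\lambda_2\le|1-q_\alpha(\theta\mid\theta)|+|q_\alpha(\theta\mid\theta)-\lambda_2|$ gives $1-\lambda_2\le 4Z\exp(\cdots)$ directly. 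Your variational approach also reduces to the same escape-probability bound (with an extra factor $1/(1-\pi_\alpha(\theta_\star))$ in the denominator, harmless once $\pi_\alpha(\theta_\star)\le 1/2$), so either technique works; Gershgorin just reaches the stated constant without having to single out a state or compute a variance.

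Your ``main obstacle'' paragraph contains a real misconception about how swaps enter. The marginal kernel $q_\alpha(\theta'\mid\theta_i^{(1)})$ analyzed in the proof (cf.~\cref{q_first_chain}) is obtained by \emph{integrating out} the higher-temperature replicas against their stationary laws $\pi_\alpha^{\beta_2},\pi_\alpha^{\beta_3}$; there is no configuration ``where all replicas sit at $\theta_\star$'' on which to evaluate the swap rate. Accordingly, the paper's handling of the swap-escape issue is much cruder than you anticipate: when the swap-accepted branches of $q_\alpha$ are summed over $\theta'\ne\theta_i^{(1)}$, each branch (after marginalizing the other chains) is a sub-probability measure in $\theta'$ with total mass at most $1$, so the two swap branches contribute at most an additive ``$+\,2$'', which is then folded into the $2Z\exp(\cdots)$ bound. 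No fine control of the tailored swap acceptance is needed for the lower bound; the log-quadratic computation enters only through the bound on the local proposal $q_1(\theta'\mid\theta)$.
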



Without the M-H step, the discrepancy between the algorithm's stationary distribution and the target distribution stems from two sources: the approximation to a log-quadratic distribution and the step size error. As noted by~\citet{zhang2022langevin}, this error becomes negligible for sufficiently small step sizes, and omitting the M-H step reduces computational cost. However, \cref{thm:5.2} shows that as the step size decreases, the lower bound on the mixing time grows exponentially, offsetting the computational gain. Moreover, decaying step sizes common in continuous domains are not feasible in discrete settings. Thus, to ensure convergence with fixed step sizes, incorporating the M-H step is favored.


\subsection{Non-asymptotic Convergence Analysis}\label{sec_5_2}
Next, we focus on proving the non-asymptotic convergence of PT-DMALA. Our primary proof strategy is to establish the uniform ergodicity of PT-DMALA by constructing a uniform minorization condition. For simplicity of the proof, we consider the case of three chains. The corresponding transition kernel, denoted by $p(\cdot \mid \cdot)$, is given in~\Cref{pt_dmala_p}. 
\begin{theorem}\label{thm:4}
Let~\Cref{asm_2,asm_3} hold. Let \(P\) denote the Markov transition operator with kernel \(p(\theta' \mid \theta^{(1)}_i)\). Then, for the Markov chain \(P\) with three chains, and for any \(\theta', \theta^{(1)} \in \Theta\), we have
\[
p(\theta' \mid \theta^{(1)}_i) \geq \epsilon \frac{\exp\left\{ \beta_3 U(\theta') \right\}}{\sum_{\theta' \in \Theta} \exp\left\{ \beta_3 U(\theta') \right\}},
\]
where \(\epsilon := \epsilon_0^2 \epsilon_{\beta_3, \alpha}\), with \(\epsilon_0\) and $\epsilon_{\beta_3, \alpha}$ defined in~\cref{epsilon_0,epsilon_3}. Then \(P\) is uniformly ergodic, i.e.,
\[
\|P^n - \pi\|_{\mathrm{TV}} \leq (1 - \epsilon)^n.
\]
\end{theorem}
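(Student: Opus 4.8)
The plan is to prove the minorization bound first and then deduce uniform ergodicity by the standard Doeblin argument. The transition kernel $p(\theta' \mid \theta^{(1)}_i)$ of PT-DMALA with three chains is a composition of (i) the three local DMALA kernels at inverse temperatures $\beta_1, \beta_2, \beta_3$, each of which is a proposal $q_k$ followed by the M--H correction in~\cref{MH}, and (ii) the two adjacent swap moves governed by $s_1, s_2$ from~\cref{swap_function}. To lower bound $p(\theta' \mid \theta^{(1)}_i)$ by a quantity depending only on $\theta'$, I would exhibit one particular ``path'' of events that already reaches any target configuration of the three replicas and whose probability is bounded below uniformly in the current state. Concretely: condition on the coldest two chains staying put (their M--H steps reject, which happens with probability bounded below since the rejection probability of a reversible kernel on a finite space is at least some $\epsilon_0 > 0$ uniformly, by compactness of $\Theta$ and continuity of $U$ under~\cref{asm_2,asm_3}), while the hottest chain's proposal $q_3(\cdot \mid \cdot)$ moves to the desired value and is accepted. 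Because $\Theta$ is finite and the Gaussian-type proposal $q_3$ in~\cref{PTDLP_2} assigns strictly positive mass to every coordinate value, $q_3(\theta' \mid \theta) \ge \epsilon_{\beta_3,\alpha}$ uniformly; combined with the acceptance factor this yields a contribution proportional to $\exp\{\beta_3 U(\theta')\}/\sum_{\theta'} \exp\{\beta_3 U(\theta')\}$, i.e. the $\beta_3$-tempered target. The factor $\epsilon_0^2$ collects the two ``stay put'' events for the colder chains, and $\epsilon_{\beta_3,\alpha}$ the move-and-accept event for the hot chain; the swap steps can only be passed through ``no swap'' with probability bounded below as well, or absorbed into the constants. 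Assembling these gives exactly $\epsilon := \epsilon_0^2 \epsilon_{\beta_3,\alpha}$ as in the statement.

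For the second part, once we have a uniform minorization of the form $p(\theta' \mid \theta) \ge \epsilon\, \nu(\theta')$ with $\nu$ a probability measure (here $\nu(\theta') \propto \exp\{\beta_3 U(\theta')\}$) and $\epsilon \in (0,1)$, uniform ergodicity is immediate from the classical one-step Doeblin / minorization theorem: the whole state space is a small set, so $\|P^n(\theta, \cdot) - \pi\|_{\mathrm{TV}} \le (1-\epsilon)^n$ for the invariant law $\pi$ of $P$. I would state this coupling argument briefly (construct a coupling that coalesces with probability $\ge \epsilon$ at each step) rather than re-prove it. One subtlety to flag: the invariant distribution of the full PT chain is the product $\bigotimes_k \pi^{\beta_k}$ on the extended space, and what is being tracked in $\|P^n - \pi\|_{\mathrm{TV}}$ is the marginal of the cold ($\beta_1 = 1$) chain, whose stationary law is $\pi$; I would make sure the minorization is stated on the extended state and note that projecting to the cold marginal preserves the contraction.

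The main obstacle is making the ``stay put with probability bounded below'' and ``hot chain hits $\theta'$ with probability bounded below'' claims fully rigorous and uniform: one must verify that under~\cref{asm_2,asm_3} the energy $U$ and its gradient are uniformly bounded on the finite set $\Theta$, so that both the proposal densities $q_k(\theta' \mid \theta)$ and the M--H acceptance ratios are bounded away from $0$, and then track how these bounds combine through the three local kernels and the two swap kernels without the product of constants collapsing to $0$ in a way that depends on $\theta'$. Getting the dependence of $\epsilon_0$ and $\epsilon_{\beta_3,\alpha}$ on $\beta_3, \alpha$, and $\Theta$ explicit — as promised by the references to~\cref{epsilon_0,epsilon_3} — is where the real bookkeeping lies; the ergodicity conclusion itself is then a one-line invocation of the minorization theorem.
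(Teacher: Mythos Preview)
Your proposed event path does not deliver the minorization, and in particular you have misread what $\epsilon_0$ is. In the paper (see~\cref{epsilon_0}), $\epsilon_0 = \min_k \exp\{-2(\beta_k-\beta_{k+1})(u_{\max}-u_{\min})\}$ is a uniform \emph{lower bound on the swap acceptance probabilities} $s_k$; it has nothing to do with ``stay put'' or M--H rejection of the local kernels. The path that works is the one where \emph{both swaps are accepted}: chain~3, whose initial state is integrated against $\pi^{\beta_3}$, moves via $\hat q_3$ to $\theta'$; the swap $s_2\ge\epsilon_0$ hands $\theta'$ to chain~2, and then $s_1\ge\epsilon_0$ hands it to chain~1. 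Whatever chains~1 and~2 did locally is summed over and contributes a factor of~$1$ (see the third summand of~\cref{pt_dmala_p}). By contrast, your path---cold chains reject and stay put, swaps ``passed through no swap''---leaves chain~1 at $\theta_i^{(1)}$ and can never land at an arbitrary $\theta'$, so it produces no minorization at all. The factor $\epsilon_0^2$ in the statement is precisely $s_1 s_2$, not two rejection events.

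Two smaller points. First, the lower bound $\hat q_3(\theta'\mid\theta)\ge \epsilon_{\beta_3,\alpha}\,\pi^{\beta_3}(\theta')$ is not obtained by the rough argument ``$q_3\ge \epsilon_{\beta_3,\alpha}$ uniformly, then multiply by the acceptance ratio''; the paper invokes \cref{lm:3} (from \citet{pynadath2024gradient}), which uses \cref{asm_2,asm_3} to control the M--H acceptance of DMALA via a second-order Taylor bound and is what makes the right-hand side proportional to $\pi^{\beta_3}(\theta')$ with the explicit $\epsilon_{\beta_3,\alpha}$ of~\cref{epsilon_3}. Second, your worry about minorizing on the extended product space and then projecting is unnecessary here: in~\cref{pt_dmala_p} the kernel $p(\theta'\mid\theta_i^{(1)})$ is already defined as a marginal on chain~1 with $\theta_i^{(2)},\theta_i^{(3)}$ integrated against $\pi^{\beta_2},\pi^{\beta_3}$, so the minorization is established directly on $\Theta$. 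Once you have it, the uniform-ergodicity conclusion is indeed the one-line Doeblin argument you describe.
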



\cref{thm:4} proves the non-asymptotic convergence of PT-DMALA. In the next corollary, we will examine the impact of the swap mechanism.
\begin{corollary}\label{col:5.7}
Let \cref{asm_2,asm_3} hold. Assume that $$\|\nabla U(a)\| < \left((M-\frac{m}{2})\mathcal{D}_2 - 2\log(1 / \epsilon_0)\right)/ \mathcal{D}_1,$$
where $0 < \epsilon_0 < 1$ is from~\Cref{epsilon_0} and $a:=\arg\min_{\theta\in\Theta}\|\nabla U(\theta)\|$. Then, PT-DMALA provides a better guaranteed upper bound on convergence speed compared to DLP.
\end{corollary}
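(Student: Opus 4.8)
\textbf{Proof proposal for \Cref{col:5.7}.}

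The plan is to compare the uniform minorization constant $\epsilon = \epsilon_0^2\,\epsilon_{\beta_3,\alpha}$ from \Cref{thm:4} against the analogous constant that one obtains for DLP (equivalently, DMALA at temperature $1$ without the swap mechanism), and show that under the stated hypothesis on $\|\nabla U(a)\|$ the PT-DMALA constant is strictly larger, hence the geometric rate $(1-\epsilon)^n$ is smaller. First I would recall from the proof of \Cref{thm:4} the explicit form of $\epsilon_{\beta_3,\alpha}$ in \Cref{epsilon_3}: it is a lower bound on the per-coordinate proposal mass of the highest-temperature chain, which (using the softmax/categorical factorization in \Cref{factorize}) takes the shape of a ratio whose exponent involves $\beta_3 \nabla U(\theta)^\top(\theta'-\theta) - \tfrac{1}{2\alpha}\|\theta'-\theta\|_p^p$ evaluated on worst-case increments, together with the range constants $m, M$ of the energy (from \Cref{asm_2,asm_3}), $\mathcal{D}_1$, $\mathcal{D}_2$, and $d_p$. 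The key observation is that the corresponding DLP bound is the $\beta_3 = 1$ specialization, so the comparison reduces to a monotonicity statement: for $\theta = a$, the minimizer of $\|\nabla U(\theta)\|$, the quantity governing the minorization improves as the inverse temperature is lowered from $1$ to $\beta_3 < 1$ precisely when the gradient term is dominated by the regularizer and the energy-range terms, which is exactly the inequality $\|\nabla U(a)\|\,\mathcal{D}_1 < (M-\tfrac{m}{2})\mathcal{D}_2 - 2\log(1/\epsilon_0)$ in the statement.

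The steps in order: (i) write out $\epsilon^{\mathrm{DLP}}$ using the same minorization construction specialized to a single chain at temperature $1$, so that both bounds are expressed through the same auxiliary function $g(\beta) := $ (the exponent lower bound as a function of inverse temperature); (ii) show $g$ is affine (or at least monotone) in $\beta$ with slope controlled by $-\langle \nabla U(a), \theta'-\theta\rangle$-type terms, bounded in magnitude by $\|\nabla U(a)\|\,\mathcal{D}_1$, while the $\beta$-independent part contributes $(M-\tfrac m2)\mathcal{D}_2$ and the $-\tfrac{1}{2\alpha}d_p$ regularizer and $-2\log(1/\epsilon_0)$; (iii) conclude that the stated hypothesis forces $g(\beta_3) \ge g(1)$ for all $\beta_3 \le 1$, hence $\epsilon_{\beta_3,\alpha} \ge \epsilon_{1,\alpha}$; (iv) since $\epsilon_0 \in (0,1)$ is the same minorization floor for the inner kernels in both schemes and the swap kernel in PT-DMALA only adds a nonnegative contribution to the transition mass (it is a genuine extra mixing channel, never decreasing the reachable mass), assemble $\epsilon = \epsilon_0^2\epsilon_{\beta_3,\alpha} \ge \epsilon^{\mathrm{DLP}}$ and therefore $(1-\epsilon)^n \le (1-\epsilon^{\mathrm{DLP}})^n$, which is the claimed tighter guaranteed upper bound on convergence speed.

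The main obstacle I anticipate is handling step (iv) cleanly: making precise the sense in which ``PT-DMALA's minorization constant dominates DLP's'' requires being careful that the factor $\epsilon_0^2$ (two inner kernels, for the two non-target chains) does not itself shrink the bound below the single-chain DLP value $\epsilon_0$. Resolving this needs the observation that $\epsilon_0$ here is the minorization constant of the \emph{base} discrete Langevin kernel which appears identically in DLP's own analysis, and that the PT construction lower-bounds the joint transition by first doing an inner step in the target chain (contributing one $\epsilon_0$), then a swap with the auxiliary chain whose own minorization already accounts for the second $\epsilon_0$ and the improved $\epsilon_{\beta_3,\alpha}$ — so the relevant comparison is really $\epsilon_0 \cdot (\epsilon_0 \epsilon_{\beta_3,\alpha})$ against $\epsilon_0 \cdot \epsilon_{1,\alpha}$, which holds as soon as $\epsilon_0\epsilon_{\beta_3,\alpha} \ge \epsilon_{1,\alpha}$; the hypothesis is calibrated so that the extra $\log(1/\epsilon_0)$ slack in the exponent exactly absorbs the loss from the additional $\epsilon_0$ factor. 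A secondary technical point is verifying that evaluating the worst-case exponent at $\theta = a = \arg\min_\theta\|\nabla U(\theta)\|$ is legitimate — i.e. that the minorization floor is attained (or bounded) at the gradient-minimizing state rather than some other configuration — which should follow from the structure of \Cref{epsilon_3} where the gradient enters only through an upper bound on $\|\nabla U\|$.
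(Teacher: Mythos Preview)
Your overall strategy---compare the PT-DMALA minorization constant $\epsilon=\epsilon_0^2\,\epsilon_{\beta_3,\alpha}$ from \Cref{thm:4} against the single-chain DLP constant and show the former is larger---is exactly what the paper does. But your execution goes astray in step (iv) because you have misidentified what $\epsilon_0$ is. In the paper, $\epsilon_0$ (see \Cref{epsilon_0}) is the uniform lower bound on the \emph{swap acceptance probability} $s_k$, not a minorization constant of the base discrete Langevin kernel. There is no swap in DLP, so DLP's minorization constant, obtained from \Cref{lm:3} with $\beta_k=\beta_1=1$, is simply $\epsilon_{\beta_1,\alpha}$, with no factor of $\epsilon_0$ at all. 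The correct comparison is therefore $\epsilon_0^2\,\epsilon_{\beta_3,\alpha}$ against $\epsilon_{\beta_1,\alpha}$, not against $\epsilon_0\,\epsilon_{\beta_1,\alpha}$ as you propose. This is also why the hypothesis carries the factor $2\log(1/\epsilon_0)$: both $\epsilon_0$ factors from the two swap events must be absorbed, not just one.

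Your step (ii) is also tangled: in the explicit formula $\epsilon_{\beta,\alpha}=\exp\{-\beta(M-\tfrac{m}{2})\mathcal{D}_2-\beta\|\nabla U(a)\|\mathcal{D}_1-\tfrac{1}{\alpha}\mathcal{D}_p\}$, \emph{both} the $(M-\tfrac{m}{2})\mathcal{D}_2$ and the $\|\nabla U(a)\|\mathcal{D}_1$ terms carry a factor of $\beta$, and the $-2\log(1/\epsilon_0)$ term is not part of this exponent at all---it enters only through the prefactor $\epsilon_0^2$. Once you have the right formula, no monotonicity argument is needed: the paper simply computes the ratio
\[
k=\frac{\epsilon_0^2\,\epsilon_{\beta_3,\alpha}}{\epsilon_{\beta_1,\alpha}}
=\epsilon_0^2\exp\!\Big\{(\beta_1-\beta_3)\big[(M-\tfrac{m}{2})\mathcal{D}_2-\|\nabla U(a)\|\mathcal{D}_1\big]\Big\}
\]
and checks directly that the hypothesis on $\|\nabla U(a)\|$ forces $k>1$. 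Your ``secondary technical point'' about whether $\theta=a$ is where the worst case is attained is a non-issue: $a$ appears in $\epsilon_{\beta,\alpha}$ only because \Cref{lm:3} bounds the gradient term by $\|\nabla U(a)\|=\min_\theta\|\nabla U(\theta)\|$, so this is already baked into the constants you are comparing.
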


\section{Experiments} 
In this section, we evaluate the newly proposed method on four problem types: \textbf{(1)} sampling from synthetic distributions, \textbf{(2)} sampling from restricted Boltzmann machines on real-world datasets, \textbf{(3)} learning restricted Boltzmann machines and \textbf{(4)} deep energy-based models parameterized by a convolutional neural network. 

For sampling tasks, we compare our algorithm with several popular gradient-based discrete samplers: the discrete Langevin-like samplers~(\textbf{DULA} and \textbf{DMALA}) \citep{zhang2022langevin}, the any-scale balanced sampler (\textbf{AB}) \citep{sun2023any}, and the automatic cyclical sampler (\textbf{ACS}) \citep{pynadath2024gradient}. 
For learning tasks, we exclude the AB sampler, as it is not originally designed for model learning applications. More details such as experimental setups, hyperparameters, and additional experimental results can refer to~\cref{App_ex}. We released the code of the synthetic task at \href{https://anonymous.4open.science/r/PTDLP-73AD}{https://anonymous.4open.science/r/PTDLP-73AD}. 

\subsection{Synthetic Problems}
\begin{wraptable}{r}{0.6\textwidth}
    \vspace{-10pt}
    \caption{Results of exploring MoG and MoS, measured by KL and MMD (c denotes the number of components).}
    \label{kl_mmd}
    \centering
    \resizebox{1\linewidth}{!}{
    \begin{tabular}{cccccc}
    \toprule
        \multicolumn{2}{c}{Metrics / Samplers}  & MoG~(c=8) & MoG~(c=16) & MoS~(c=8) & MoS~(c=16)\\
    \midrule
        \multirow{4}{*}{\makecell{MMD~($10^{-3}$)($\downarrow$)}} 
        & DMALA & 1.214 \fs{0.058}&2.130\fs{0.064} &1.617\fs{0.061}& 2.158 \fs{0.073}  \\
        & ACS &0.984 \fs{0.031} &1.806 \fs{0.056} &1.406 \fs{0.057}&1.813 \fs{0.061}\\
        & AB &0.891 \fs{0.026}  & 1.691 \fs{0.042} & 1.305 \fs{0.044} &1.515 \fs{0.068}\\
        & PT-DMALA~\textit{(Ours)} &$\textbf{0.534 \fs{0.015}}$  & \textbf{0.824 \fs{0.031}}&  \textbf{0.744\fs{0.028}} & \textbf{0.941 \fs{0.022}} \\
    \midrule
        \multirow{4}{*}{\makecell{KL~($10^{-2}$) ($\downarrow$)}}
        & DMALA &1.331 \fs{0.032}&7.660 \fs{0.043} & 2.017 \fs{0.026}&7.674 \fs{0.029}\\
        & ACS & 0.662\fs{0.012} & 2.177 \fs{0.023}& 3.117 \fs{0.041}&3.112 \fs{0.017}\\
        & AB & 0.851 \fs{0.011} & 3.216 \fs{0.022}& 2.801 \fs{0.026}&2.871 \fs{0.021}\\
        & PT-DMALA~\textit{(Ours)} &\textbf{0.617 \fs{ 0.009}} & \textbf{2.133 \fs {0.017}} & \textbf{0.667 \fs {0.017}} & \textbf{1.967 \fs {0.014}}\\
    \bottomrule
    \end{tabular}}
\end{wraptable}
We first address the challenges of sampling from two-dimensional discrete multimodal distributions, specifically mixture of Gaussian components~(MoG) and mixture of Student's t-distributions~(MoS). The two-dimensional continuous domain was discretized by partitioning each axis into 100 intervals, followed by sampling over the resulting discrete space. To quantify the ability to avoid mode collapse, we use Entropic Mode Coverage (EMC)~\citep{blessing2024beyond}, Maximum Mean Discrepancy (MMD)~\citep{gretton2012kernel}, and forward Kullback-Leibler divergence~\citep{kullback1951information} as quantitative performance metrics. Notably, EMC $\in[0,1]$ serves as a heuristic metric for mode collapse detection, where a value of $0$ indicates samples come from a single mode, while a value of 1 suggests that all modes are adequately covered.

\paragraph{Results and Analysis.}
As shown in \Cref{kl_mmd} and the left two plots of \Cref{fig_synthetic}, our algorithm consistently outperforms existing methods on both MoG and MoS across varying component counts and evaluation metrics. While methods such as AB and ACS mitigate mode-trapping using variable step sizes, their improvements are limited. The right two plots of \Cref{fig_synthetic} further illustrate that, given the same number of iterations, our method enables more effective exploration and produces higher-quality samples. The performance gain stems from parallel chains at varying temperatures, enabling traversal of isolated modes and reducing local trapping.
\begin{figure}[ht]
  \begin{center}
    \begin{minipage}{0.23\textwidth}
      \includegraphics[width=\textwidth]{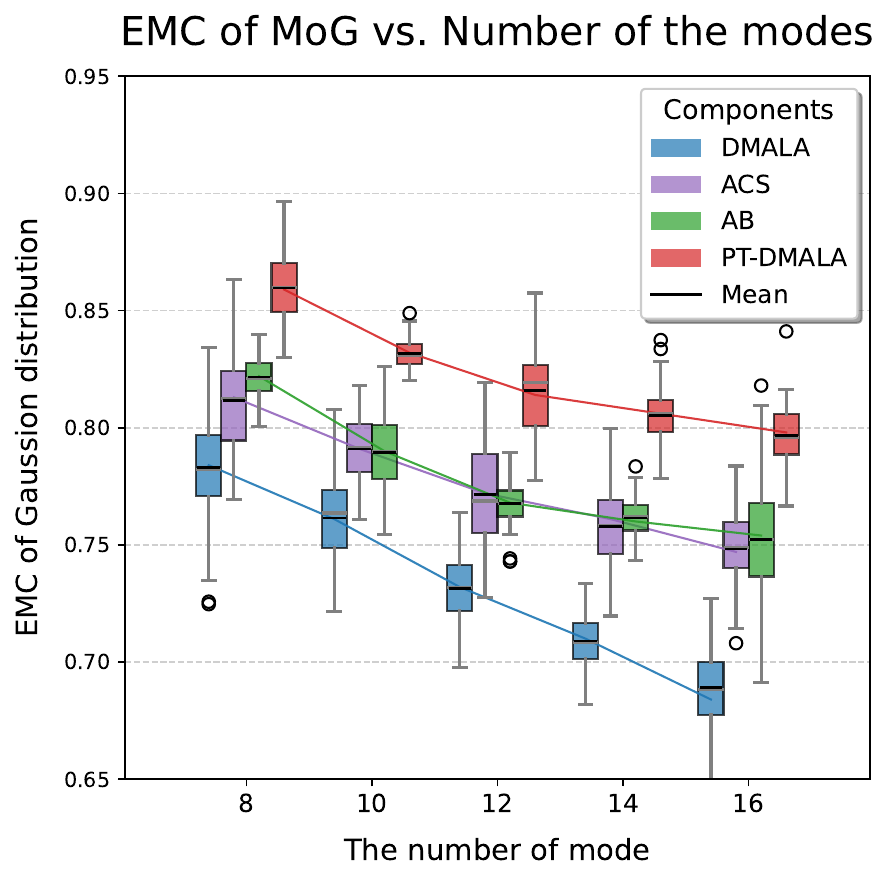}
    \end{minipage}
    \begin{minipage}{0.23\textwidth}
      \includegraphics[width=\textwidth]{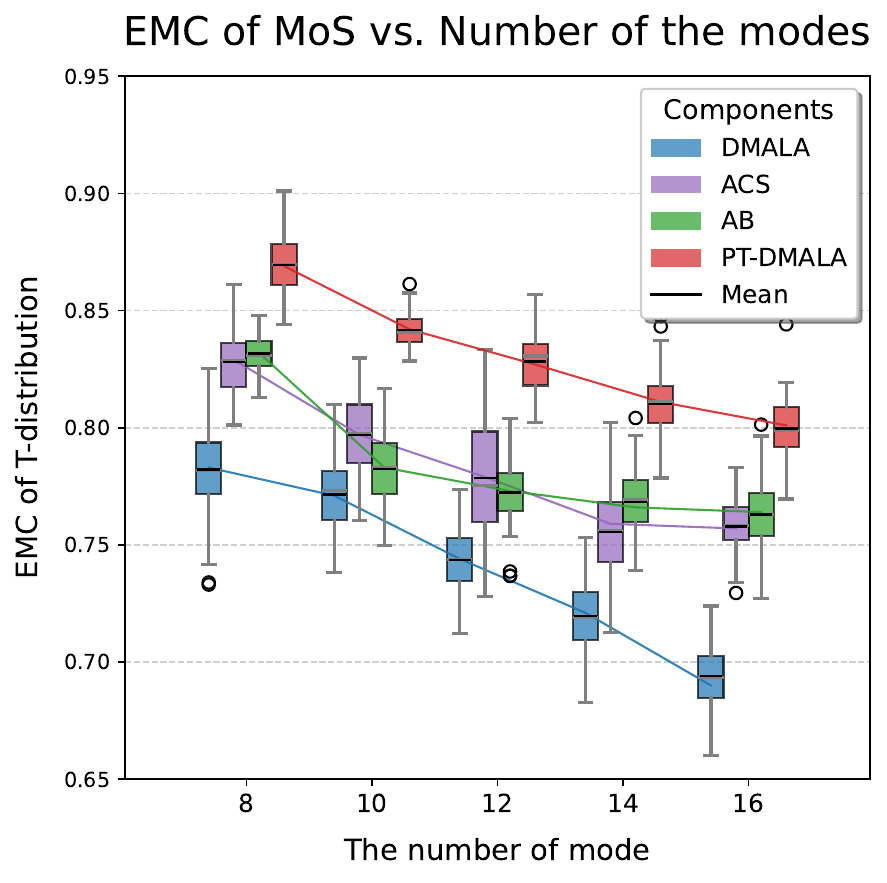}
    \end{minipage}
        \begin{minipage}{0.23\textwidth}
      \includegraphics[width=\textwidth]{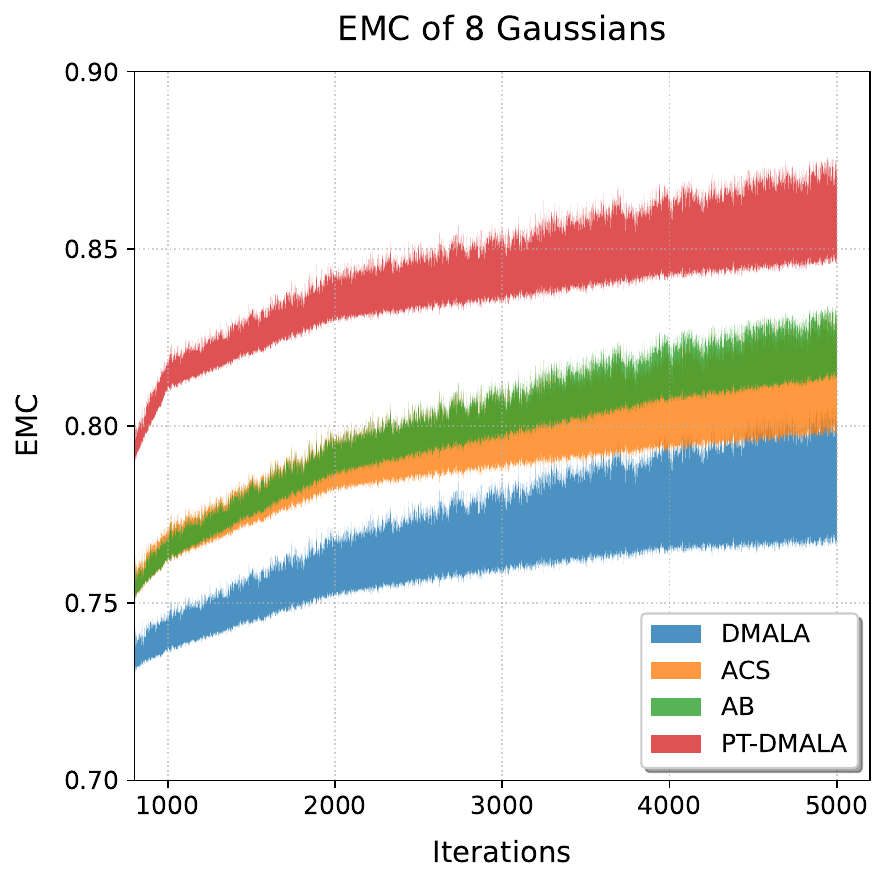}
    \end{minipage}
        \begin{minipage}{0.23\textwidth}
      \includegraphics[width=\textwidth]{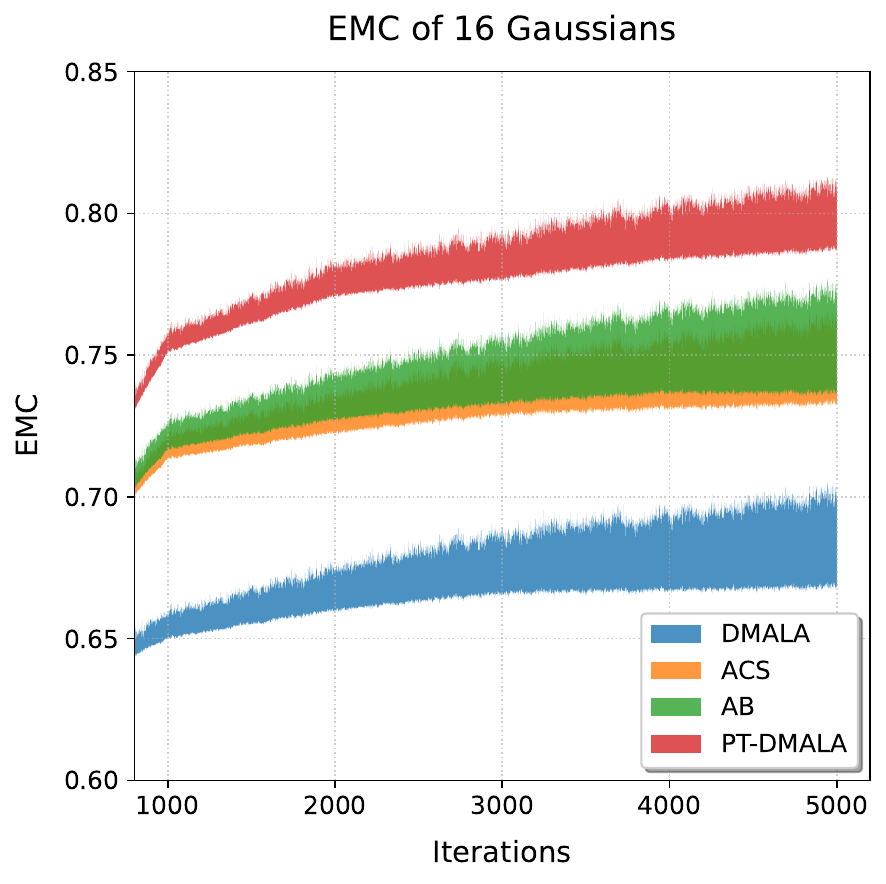}
    \end{minipage}
  \end{center}
  \caption{Sampling performance (measured by EMC) of various methods for MoG (left) and MoS (right) with varying components. Sampling performance of various interations for 8 Gaussions and 16 Gaussions. PT-DMALA consistently outperforms baselines across random seeds.}
  \label{fig_synthetic}
\end{figure}

\begin{figure}[ht]
  \begin{center}
    \begin{minipage}{0.195\textwidth}
      \includegraphics[width=\textwidth]{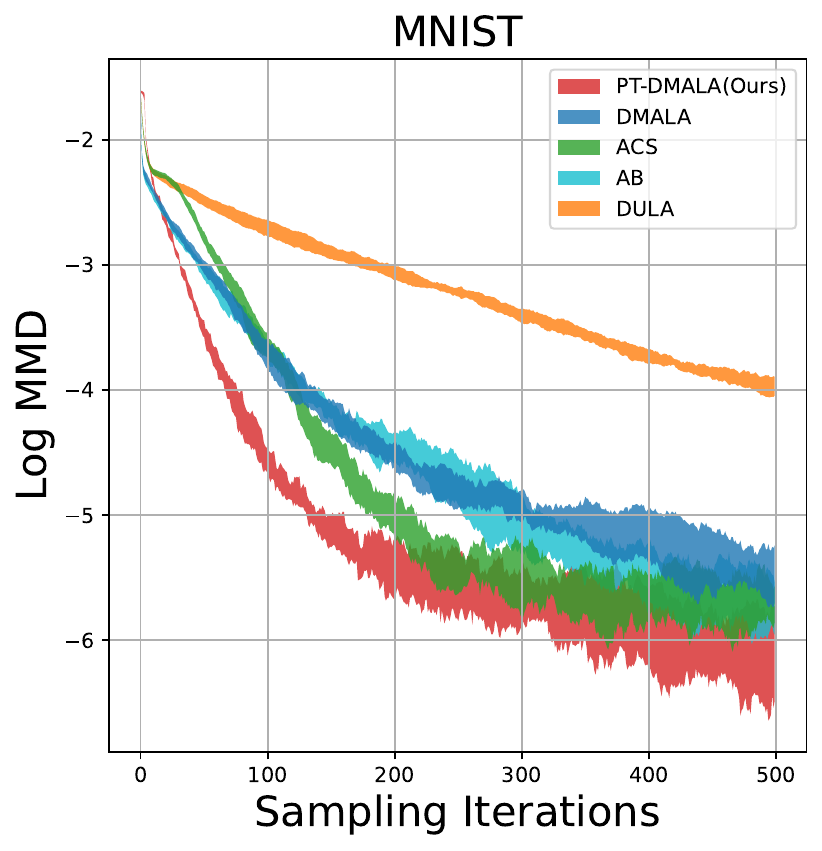}
    \end{minipage}
    \begin{minipage}{0.195\textwidth}
      \includegraphics[width=\textwidth]{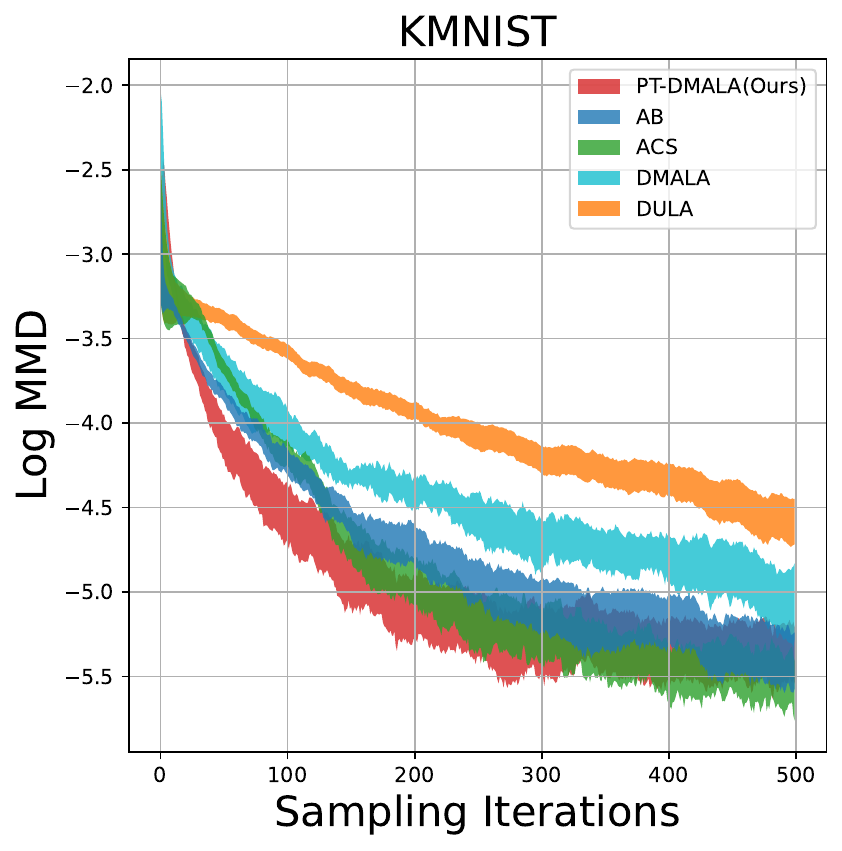}
    \end{minipage}
    \begin{minipage}{0.195\textwidth}
      \includegraphics[width=\textwidth]{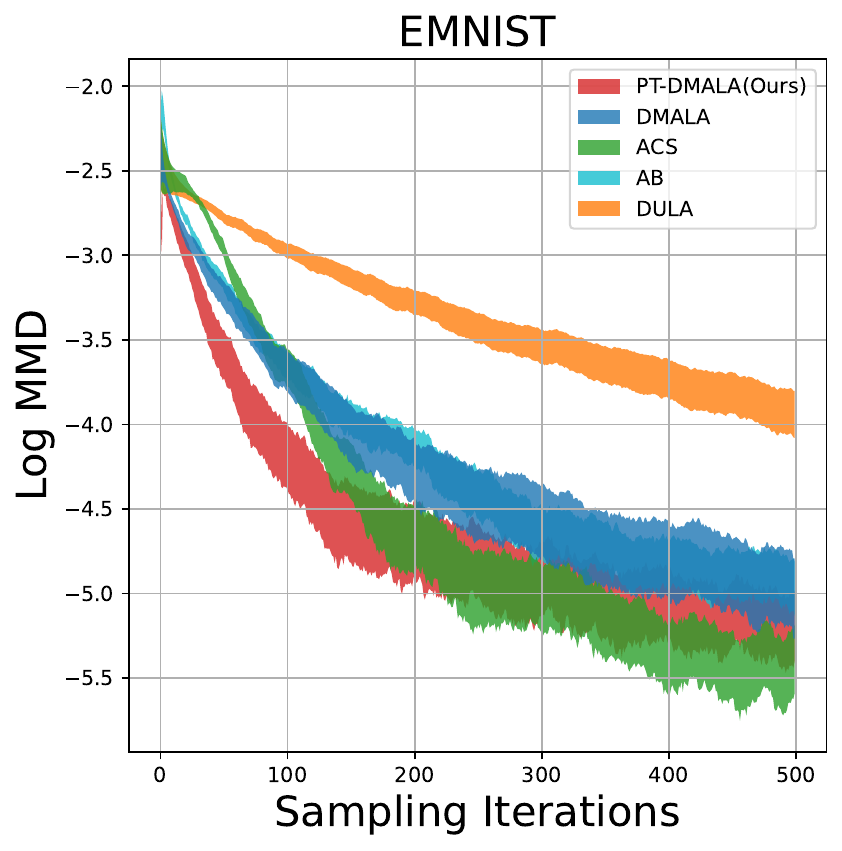}
    \end{minipage}
    \begin{minipage}{0.195\textwidth}
      \includegraphics[width=\textwidth]{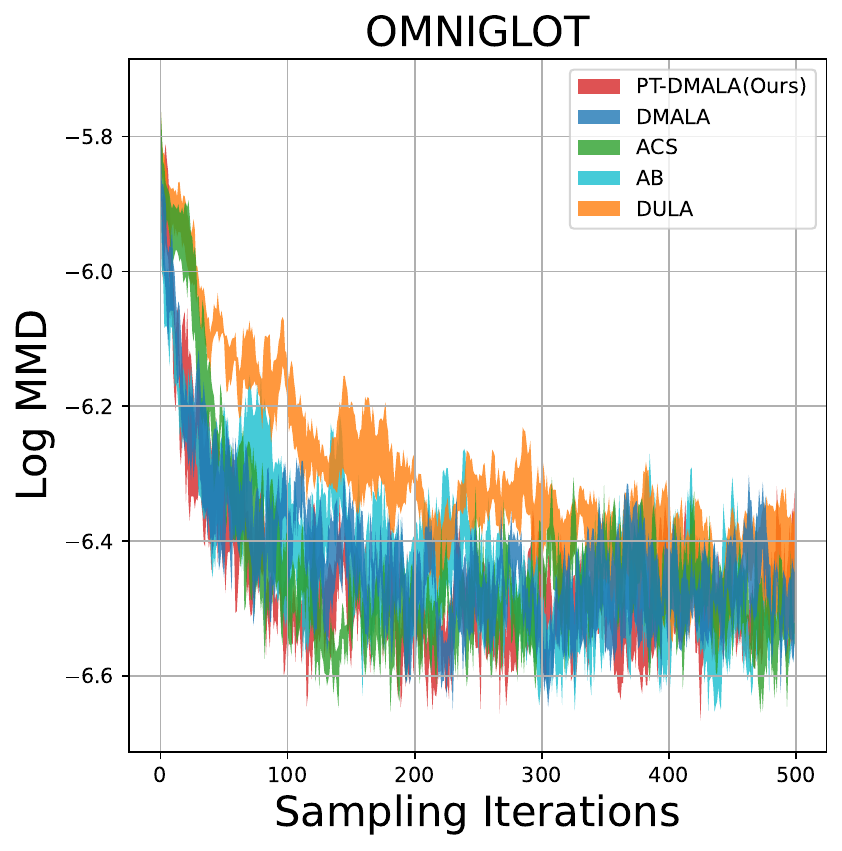}
    \end{minipage} 
    \begin{minipage}{0.195\textwidth}
      \includegraphics[width=\textwidth]{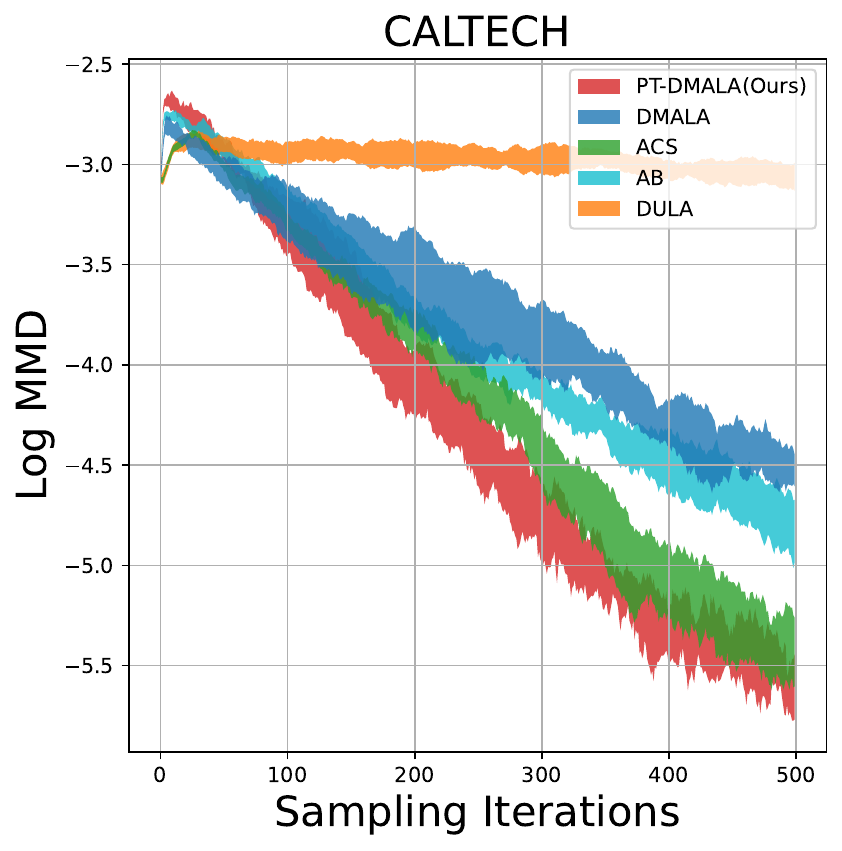}
    \end{minipage}
  \end{center}
  \caption{RBM sampling results with local mode initialization. PT-DMALA achieves faster convergence, while baseline methods converge slower due to being trapped in the mode.}
  \label{fig_mode_int}
\end{figure}

\begin{table*}[!ht]
    \caption{RBM sampling and learning with random initialization. Top table shows $\textit{log-MMD}$ ($\downarrow$) where PT-DMALA outperforms gradient-based baselines. Bottom table presents $\textit{log-likelihood scores}$ ($\uparrow$) for RBM learning, with PT-DMALA showing competitive or superior performance.}
    \label{tab_rbm_sampling_learning}
    \centering
    \resizebox{0.95\textwidth}{!}{
    \begin{tabular}{cccccccc}
    \toprule
        \multicolumn{2}{c}{Dataset}  & DULA & DMALA & AB & ACS & PT-DMALA~\textit{(Ours)}& \\
    \midrule
        \multirow{5}{*}{\makecell{RBM Sampling\\(\textit{log-MMD} $\downarrow$)}} 
        & MNIST & -4.77\fs{0.34} & -6.45\fs{0.29} & -6.65\fs{0.10} & -6.66\fs{0.20} & \textbf{-6.68\fs{0.19}} \\
        & eMNIST & -3.19\fs{0.07} & -3.85\fs{0.10} & -4.01\fs{0.12} & -3.97\fs{0.09} & \textbf{-4.05\fs{0.09}} \\
        & kMNIST & -4.03\fs{0.10} & -4.62\fs{0.17} & -4.71\fs{0.11} & -4.58\fs{0.12} & \textbf{-4.77\fs{0.20}} \\
        & Omniglot & -6.45\fs{0.15} & -6.48\fs{0.08} & -6.48\fs{0.11} & -6.49\fs{0.18} & \textbf{-6.56\fs{0.05}} \\
        & Caltech & -3.09\fs{0.10} & -4.10\fs{0.28} & -4.21\fs{0.33} & -4.21\fs{0.20} & \textbf{-4.98\fs{0.23}} \\
    \midrule
        \multirow{5}{*}{\makecell{Learning RBM\\(\textit{log-likelihood} $\uparrow$)}}
        & MNIST & -386.21\fs{2.32} & -264.83\fs{1.51} & ---& -231.12\fs{2.10} & \textbf{-225.56\fs{2.25}} \\
        & eMNIST & -337.27\fs{4.21} & -324.34\fs{2.13} & ---& \textbf{-301.42\fs{1.99}} & -302.78\fs{1.95} \\
        & kMNIST & -502.22\fs{3.76} & -436.35\fs{2.76} & ---& -407.39\fs{3.46} & \textbf{-362.85\fs{3.97}} \\
        & Omniglot & -228.23\fs{2.12} & -222.61\fs{1.33} &--- & -220.71\fs{1.65} & \textbf{-179.54\fs{2.01}} \\
        & Caltech & -452.97\fs{6.10} & -427.29\fs{3.99} &---& -380.67\fs{3.01} & \textbf{-346.65\fs{2.76}} \\
    \bottomrule 
    \end{tabular}}
\end{table*}
\subsection{Sampling From Restricted Boltzmann Machines}
Restricted Boltzmann Machines (RBMs) are generative stochastic neural networks grounded in probabilistic graphical models~\citep{fischer2012introduction}. We evaluate our method on RBMs trained across a variety of binary datasets. Specifically, RBMs model an unnormalized probability distribution over input data:
$$
U(\theta)=\sum_i \operatorname{Softplus}(W \theta+a)_i+b^{\top} \theta, 
$$
where $\{W, a, b\}$ are parameters and $\theta \in\{0,1\}^d$. Following~\citet{grathwohl2021oops} and~\citet{zhang2022langevin}, we train $\{W, a, b\}$ with contrastive divergence~\citep{carreira2005contrastive} on various datasets. We measure the MMD between the obtained samples and those from block-Gibbs sampling, which utilizes the known structure and can be regarded as the ground truth. To further test whether our method can escape local modes, we initialize all samplers to start
within the most likely mode of the dataset as measured by
the model distribution.

\paragraph{Results and Analysis.} \cref{tab_rbm_sampling_learning} shows that PT-DMALA consistently achieves superior performance across various real-world datasets, especially on Caltech. While AB and ACS perform comparably, they exhibit slightly higher MMD scores. As illustrated in \Cref{fig_mode_int}, our method converges more rapidly across seeds, whereas DULA often collapses to a single mode. These results demonstrate the robustness of PT-DMALA in avoiding mode collapse, a limitation of competing approaches.



\subsection{Learning Energy Based Models}
Energy-based models (EBMs) have achieved notable success in machine learning~\citep{lecun2006tutorial, ngiam2011learning}. In EBMs, the probability of a data point \(x\) is given by 
$P_\theta(x) = \exp\left[E_\theta(x)\right] / Z_\theta$, where \(E_\theta(x)\) is the energy function and $Z_\theta = \mathbb{E}_{\theta \sim \Theta} \left[\exp\left[E_\theta(x)\right]\right]$
is the partition function. MCMC methods are widely used for training EBMs, enabling efficient sampling.

\subsubsection{Learning RBM}
We begin with learning RBM, use the same RBM structure as the sampling task, and apply the samplers of interest to the Persistent Contrastive Divergence (PCD) algorithm introduced by \citet{tieleman2008training}. To evaluate the learned model, we employ Annealed Importance Sampling (AIS)~\citep{neal2001annealed} with Block-Gibbs to calculate the log-likelihood values. We run AIS for 100,000 steps, which is adequate given the efficiency of Block Gibbs for this specific model.
\subsubsection{Learning Deep EBM}
\begin{wraptable}{r}{0.57\textwidth}
\vspace{-10pt}
\caption{Deep Convolution EBM \textit{log likelihood scores}~($\uparrow$) on test data as estimated by AIS.}\label{tab_ebm_learning}
\centering
\resizebox{1\linewidth}{!}{
\begin{tabular}{lcccc}
\toprule
       &  DMALA  & ACS & PT-DMALA~\textit{(Ours)} \\
\midrule
Static MNIST & -80.031\fs{0.038} &-79.905\fs{0.057} &\textbf{-79.622\fs{0.063}}\\
Dynamic MNIST &-80.120\fs{0.036} &-79.634\fs{0.024}&\textbf{-79.463\fs{0.076}}\\
Omniglot & -99.243\fs{2.101} &-91.487\fs{0.128} &\textbf{-90.976\fs{0.316}}\\
Caltech & -98.001\fs{0.371} &-89.262\fs{0.290} &\textbf{-87.192\fs{0.343}}
\\
\bottomrule
\end{tabular}}
\vspace{-10pt}
\end{wraptable}
We train deep EBMs using a ResNet~\citep{he2016deep} with PCD and a replay buffer~\citep{du2019implicit} on the MNIST, Omniglot, and Caltech datasets, following the approach outlined by~\citet{grathwohl2021oops, zhang2022langevin}. We use 10 sample steps per iteration on all datasets except Caltech, where we use 30. After training, we use AIS to estimate the likelihood. 
\paragraph{Results and Analysis.}
In~\cref{tab_rbm_sampling_learning}, we find that our algorithm produces competitive results compared to the baselines, and in many cases outperforms them across all datasets, demonstrating the superiority and robustness of using multiple chains. The results in~\cref{tab_ebm_learning} show that our method is capable of learning better quality EBMs than DMALA and ACS, which can be attributed to the fact that our method employs different temperatures to simultaneously explore diverse regions, enabling the identification of more modes.

In learning tasks, data often originates from complex, high-dimensional distributions. The strong empirical results highlight the effectiveness of the proposed swap mechanism in \Cref{swap_function}, which improves the representativeness of samples, resulting in better log-likelihood estimates.

\section{Conclusions}
In this paper, we propose the \textit{Parallel Tempering enhanced Discrete Langevin Proposal} algorithm to better capture multimodal distributions in discrete spaces. Gradient-based samplers are prone to getting trapped in local modes, hindering full exploration of target distributions. To address this, we incorporate parallel tempering for more effective mode exploration. We also optimize the extra hyperparameters, such as the temperature schedule and the number of chains, by maximizing the round trip rate. Additionally, we establish the asymptotic and non-asymptotic convergence bounds and provide extensive experimental results.

\textbf{Limitations and future work.} While we have developed a reversible algorithm with non-asymptotic guarantees, \citet{syed2022non,deng2023non} demonstrated that non-reversible parallel tempering often outperforms its reversible counterpart. Future work could explore combining non-reversible PT methods with discrete samplers. Another limitation is that we only provide a better guaranteed upper bound on the convergence rate of PT-DMALA compared to DLP. In future work, we aim to further develop theoretical guarantees to quantify the acceleration more precisely.



\bibliographystyle{plainnat}
\bibliography{reference}

\newpage
\appendix

\section{Parallel Tempering Enhanced Discrete Langevin Proposal}
\begin{algorithm}[!]
  \caption{Parallel Tempering Discrete Langevin Proposal~(PTDLP for short).}
  \begin{algorithmic}
    \label{alg:dlp}
    \STATE \textbf{given:} Step size $\boldsymbol{\alpha}$, sampling steps $\hat{n}$, temperatures $\mathcal{T}_K$, chain number $K$, swap intensity $\rho$, initial samples $\{\theta_k(0)\}_{k=1}^{K}\in \Theta^K$
    \LOOP
    \FOR[{{\color{black} Can be done in parallel}}]{$k = 1, \cdots, K$}
      \STATE {\color{blue} Sampling step}
      \FOR[{{\color{black} Can be done in parallel}}]{$i = 1, \cdots, d$}
      \STATE \textbf{construct} $q_{k}(\cdot|\theta)_i$ as in \Cref{factorize}
      \STATE \textbf{sample} $\theta_{k,i}^{\prime} \sim q_k(\cdot|\theta)_i$
      \ENDFOR
    \STATE {\color{blue} M-H step (Optional)}
    \STATE \textbf{compute} $q(\theta'|\theta) = \prod_i q_i(\theta'_i|\theta)$ and $q(\theta|\theta') = \prod_i q_i(\theta_i|\theta')$
    \STATE \textbf{set} $\theta \leftarrow \theta'$ with probability in \Cref{MH}
    \ENDFOR
    \STATE {\color{blue} Swapping step}
    \STATE $\{u_k\}_{k=1}^{K-1}\leftarrow \operatorname{Unif}(0,1)$
    \FOR{$k = 1, \cdots, K-1$}
    \STATE \textbf{construct} $s_k$ as in \cref{swap_function}
    \STATE \textbf{exchange} $\theta_{k}$ and $\theta_{k+1}$ if $u_k \leq \rho\min\{1, s_k\}$
    \ENDFOR
    \ENDLOOP
    \STATE \textbf{output}: Samples $\{\theta_1(n)\}_{n=0}^{\hat{n}}$
  \end{algorithmic}
\end{algorithm}

\section{Algorithm with Different Variables}
\paragraph{Binary Variables.}
When the variable domain $\Theta$ is binary, i.e., $\{0,1\}^d$, the algorithm in \cref{alg:dlp} can be further simplified for each chain update, which clearly shows that our method can be efficiently computed in parallel on both CPUs and GPUs.
\begin{algorithm}[H]
  \caption{Each chain with Binary Variables}
  \begin{algorithmic}
    \label{alg:binary}
    \STATE \textbf{given:} Stepsize $\alpha$, sampling steps $\hat{n}$, initial samples $\theta_0$
      \LOOP
      \STATE \textbf{compute} $p(\theta) = \frac{\exp(-\frac{1}{2}\nabla U(\theta)\odot (2\theta-1) - \frac{1}{2\alpha})}{\exp(-\frac{1}{2}\nabla U(\theta)\odot (2\theta-1) - \frac{1}{2\alpha})+1}$
      \STATE \textbf{sample} $\mu \sim \text{Unif}(0,1)^d$
      \STATE $I \leftarrow \texttt{dim}(\mu\le p(\theta))$
    \STATE $\theta'\leftarrow \texttt{flipdim}(I)$
    \STATE \textbf{compute} $q(\theta'|\theta) = \prod_i q_i(\theta'_i|\theta)=\prod_{i\in I} p(\theta)_i \cdot \prod_{i\notin I} (1-p(\theta)_i)$
    \STATE \textbf{compute} $p(\theta') = \frac{\exp(-\frac{1}{2}\nabla U(\theta')\odot (2\theta'-1) - \frac{1}{2\alpha})}{\exp(-\frac{1}{2}\nabla U(\theta')\odot (2\theta'-1) - \frac{1}{2\alpha})+1}$
    \STATE \textbf{compute} $q(\theta|\theta') = \prod_i q_i(\theta_i|\theta')=\prod_{i\in I} p(\theta')_i \cdot \prod_{i\notin I} (1-p(\theta')_i)$
    \STATE \textbf{set} $\theta \leftarrow \theta'$ with probability in \cref{MH}
    \ENDLOOP
    \STATE \textbf{output}: Samples $\{\theta_n\}_{n=0}^{\hat{n}}$
  \end{algorithmic}
\end{algorithm}

\paragraph{Category Variables.}
When using one-hot vectors (for unordered categories) and standard categorical variables (with a clear ordering) to represent categorical data, our discrete Langevin proposal becomes
$$
\text { Categorical }\left(\operatorname{Softmax}\left(\frac{\beta}{2} \nabla U(\theta)_i^{\top}\left(\theta_i^{\prime}-\theta_i\right)-\frac{\left\|\theta_i^{\prime}-\theta_i\right\|_p^p}{2 \alpha}\right)\right).
$$
\section{Iterative Tuning Algorithm}\label{tuning_algo}

\begin{algorithm}[h]
  \caption{Iterative Tuning Algorithm}
  \begin{algorithmic}
    \label{alg:ITA}
    \STATE \textbf{given:} Initial temperature schedule $\mathcal{T}_{K}$ of size $K$, tuning steps $n_{\max}$, sampling steps $\hat{n}$, and $\epsilon$
    \FOR{$n = 1, \cdots, n_{\max}$}
    \STATE $\left\{\hat{s}_k\right\}_{k=1}^{K-1} \leftarrow$ PTDLP $\left(\mathcal{T}_{K}, \hat{n}\right)$
    \STATE \textbf{calculate} points $\{(\beta_1, \hat{\Lambda}_n(\beta_1)), \cdots, (\beta_K, \hat{\Lambda}_n(\beta_K))\}$ 
    \STATE \textbf{compute} $\hat{\Lambda}(\cdot)$ by using monotone piecewise cubic interpolation~\citep{fritsch1980monotone}
        \FOR{$k=1, \cdots, K$}
        \STATE \textbf{find} $\beta_k^*$ using~\citet[Eq.(30)]{syed2022non} and bisection
        \STATE $\beta_k \leftarrow \beta_k^{\ast}$  
        \ENDFOR
    \IF{$\left|\hat{\Lambda}_n - \hat{\Lambda}_{n-1}\right| < \epsilon$}
        \STATE $\hat{\Lambda}_{n_{\max}}(\cdot) \leftarrow \hat{\Lambda}_{n}(\cdot)$
        \STATE \textbf{break}
    \ENDIF
    \ENDFOR
    \STATE $K^{\ast}\! \leftarrow 2\hat{\Lambda}_{n_{\max}} + 1$
    \FOR{$k=1, \cdots, K^{\ast}$}
        \STATE \textbf{find} $\beta_k^*$ using~\citet[Eq.(30)]{syed2022non} and bisection
    \ENDFOR
    \STATE \textbf{output}: Optimal temperature schedule $\mathcal{T}^{\ast}_{K^{\ast}}$ and chain number $K^{\ast}$
    
  \end{algorithmic}
\end{algorithm}

\section{Technical Appendices and Supplementary Material}
\subsection{Proofs in \cref{sec_4}}\label{app_D_1}
\begin{lemma}[\citet{nadler2007dynamics,syed2022non}]\label{round_trip_rate}
For any fixed chain number $K$ and temperature schedule $\mathcal{T}_K=\left\{\beta_1, \ldots, \beta_K\right\}$, the non-asymptotic~(in $K$) round trip rate of the reversible PT scheme with all neighboring chains swapping is 
$$
\tau\left(\mathcal{T}_K\right)=\frac{1}{\sum_{k=1}^{K-1}\left(1 / s_k\right)}, 
$$
where $s_k$, defined in \Cref{swap_function}, is the probability of swapping between chains $k$ and $k+1$.
\end{lemma}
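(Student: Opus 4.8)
The plan is to follow the standard \emph{index-process} (replica-tracking) argument: reduce the multi-replica dynamics to a one-dimensional random walk on the temperature ladder, for which the round-trip time is a classical quantity. First I would fix a single replica and record, after each swap sweep, the index $\ell_t \in \{1,\dots,K\}$ of the temperature it currently occupies. Since reversible PT is invariant under relabelling of replicas, $\ell_t$ is uniform on $\{1,\dots,K\}$ in stationarity; and since each adjacent pair $(k,k+1)$ is offered a swap accepted with average probability $s_k$ (the average of $s_k(\cdot)$ from \cref{swap_function} over the two replica states, which I continue to denote $s_k$), the decorrelation hypothesis of \citet[Assumption~2]{syed2022non} lets me replace $\ell_t$ by a time-homogeneous reflected birth--death chain on $\{1,\dots,K\}$ whose probability of crossing the edge $\{k,k+1\}$ per sweep is $s_k$.

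Next I would identify a round trip with an excursion of this chain from level $1$ up to level $K$ and back to level $1$, so that the round-trip rate is the stationary long-run frequency of such excursions, equivalently the reciprocal of the expected round-trip time. I would compute that expectation through the electrical-network analogy: give edge $\{k,k+1\}$ conductance $c_k := s_k$ (resistance $1/s_k$), making the ladder a series circuit of total resistance $R_{1,K} = \sum_{k=1}^{K-1} 1/s_k$; the commute-time identity (or the explicit Green's-function / hitting-time formula for one-dimensional chains) then gives the $1 \leftrightarrow K$ commute time as a universal constant times $\big(\sum_j \pi_j\big) R_{1,K}$, and the uniform stationary law makes this constant independent of $K$ and of the $s_k$. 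Matching the normalization to the sweep/round-trip counting convention of Nadler--Syed then yields exactly $\tau(\mathcal{T}_K) = 1/\sum_{k=1}^{K-1}(1/s_k)$.

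The hard part is the first step: the true index process is \emph{not} Markov, since swap decisions depend on the instantaneous energies of all replicas and hence on the full history, so one must argue that it is well approximated---in the sense needed for the round-trip rate---by the idealized birth--death chain, and one must do the edge-crossing bookkeeping carefully for the ``all neighbouring chains swap'' scheme, where an interior replica cannot move both up and down within a single even--odd sweep. This is exactly what \citet[Assumption~2]{syed2022non} isolates, so in the write-up I would either invoke their result directly or reproduce their short decorrelation argument; everything afterwards is the routine one-dimensional random-walk computation outlined above.
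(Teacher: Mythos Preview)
Your proposal is a reasonable and essentially correct sketch of the argument behind this formula, following the index-process reduction and one-dimensional random-walk computation as in \citet{nadler2007dynamics} and \citet{syed2022non}. However, the paper does not actually supply its own proof of this lemma: it is stated as a cited result from those references and then used as input to the proof of \cref{lemma_4_2}. So there is nothing to compare your argument against in the paper itself; your write-up would simply be filling in what the paper delegates to the literature.
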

\begin{proof}[\textbf{Proof of \cref{lemma_4_2}}]
Recall that the round trip rate of our algorithm with $\mathcal{B}$ copies is 
\begin{equation}\label{tau_Lambda}
\tau_{\mathcal{B}}(K)=\frac{\mathcal{B}}{\sum_{k=1}^{K-1}1 / s_k}.
\end{equation}
By using the fact that the swap rates are all equal and~\citet[Corollary 2]{syed2022non}, we obtain, for any $k=1, \cdots, K-1$, $\sum_{k=1}^{K-1} \frac{1}{s_k} = \frac{K-1}{1 - \Lambda / (K-1)}$. Substituting the above equation into \Cref{tau_Lambda} yields
\begin{equation}\label{tao_Lambda_1}
\tau_{\mathcal{B}} = \frac{\mathcal{B}(K-1-\Lambda)}{(K-1)^2}.
\end{equation}
To maximize \cref{tao_Lambda_1}, we need to take the derivative and find the critical points. Denote by $f(K):=\frac{(K-1-\Lambda)}{(K-1)^2}$ and let $f^{\prime}(K^{\ast})=0$, we obtain, 
$$
K^{\ast}=2 \Lambda + 1.
$$
Finally, by verifying the second derivative, we determine that this point corresponds to a maximum.

\end{proof}

\subsection{Proofs in \cref{sec_5_1}}
\begin{lemma}\label{lm:1}
    If the target distribution is assumed to be log-quadratic, i.e., for any $\theta \in \Theta$, $\pi^{\beta_k}(\theta) \propto \exp \left(\beta_k(\theta^{\top} W \theta+b^{\top} \theta)\right)$ with some constants\footnote{ Without loss of generality, we assume $W$ is symmetric, otherwise we can replace $W$ with $\left(W+W^{\top}\right) / 2$ for the eigendecomposition.} $W \in \mathbb{R}^{d \times d}$ and $b \in \mathbb{R}^d$. Then the Markov chain following transition $q_{k}(\cdot \mid \theta)$ in \Cref{PTDLP_2} for any $k=1, \cdots, K$ is reversible with respect to some distribution $\pi_\alpha^{\beta_k}$, and $\pi_\alpha^{\beta_k}$ converges weakly to $\pi^{\beta_k}$ as $\alpha \rightarrow 0$. 
\end{lemma}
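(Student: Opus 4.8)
The plan is to identify the invariant distribution $\pi_\alpha^{\beta_k}$ explicitly and then verify detailed balance by direct substitution, following the temperature-$1$, $2$-norm argument of~\citet{zhang2022langevin} and checking that neither the inverse temperature $\beta_k$ nor the switch to a general $p$-norm disrupts it. I would write the single-chain proposal as $q_k(\theta' \mid \theta) = g_k(\theta',\theta)/Z_k(\theta)$, where $g_k(\theta',\theta) := \exp\{\tfrac{\beta_k}{2}\nabla U(\theta)^\top(\theta'-\theta) - \tfrac{1}{2\alpha}\|\theta'-\theta\|_p^p\}$ and $Z_k(\theta) := \sum_{\vartheta\in\Theta} g_k(\vartheta,\theta)$; since $\Theta$ is finite this is a finite, strictly positive sum. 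Substituting $\nabla U(\theta) = 2W\theta + b$ (taking $W$ symmetric without loss of generality), the regularizer contributes nothing to the ratio $g_k(\theta',\theta)/g_k(\theta,\theta')$ because $\|\theta'-\theta\|_p^p = \|\theta-\theta'\|_p^p$, so the surviving exponent is $\tfrac{\beta_k}{2}\big[(2W\theta+b) + (2W\theta'+b)\big]^\top(\theta'-\theta) = \beta_k\big(W(\theta+\theta')+b\big)^\top(\theta'-\theta)$. This is exactly the trapezoidal-rule approximation $\tfrac{\beta_k}{2}(\nabla U(\theta)+\nabla U(\theta'))^\top(\theta'-\theta)$ of $\beta_k(U(\theta')-U(\theta))$, which is \emph{exact} because $U$ is quadratic; expanding and using $\theta^\top W\theta' = \theta'^\top W\theta$, the cross terms cancel and the ratio collapses to $\exp\{\beta_k(U(\theta')-U(\theta))\} = \pi^{\beta_k}(\theta')/\pi^{\beta_k}(\theta)$.

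Given this identity, the detailed balance equation $\pi_\alpha^{\beta_k}(\theta)q_k(\theta'\mid\theta) = \pi_\alpha^{\beta_k}(\theta')q_k(\theta\mid\theta')$ forces $\pi_\alpha^{\beta_k}(\theta)\big/\big(Z_k(\theta)\pi^{\beta_k}(\theta)\big)$ to be independent of $\theta$, so I would \emph{define} $\pi_\alpha^{\beta_k}(\theta) := Z_k(\theta)e^{\beta_k U(\theta)}\big/\sum_{\vartheta\in\Theta}Z_k(\vartheta)e^{\beta_k U(\vartheta)}$ and check the balance equation directly: the factor $Z_k(\theta)$ in $\pi_\alpha^{\beta_k}(\theta)$ cancels the normalizer $Z_k(\theta)$ of $q_k(\cdot\mid\theta)$, reducing both sides to the identity just established. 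Reversibility with respect to $\pi_\alpha^{\beta_k}$ follows, and hence $\pi_\alpha^{\beta_k}$ is stationary for $q_k$ (this is the per-replica building block later combined with the swap analysis).

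For the weak-convergence claim, I would observe that on the finite set $\Theta$ the drift term $\tfrac{\beta_k}{2}\nabla U(\theta)^\top(\vartheta-\theta)$ is bounded uniformly in $\theta,\vartheta$, whereas $\|\vartheta-\theta\|_p^p$ is bounded below by a positive constant for $\vartheta \neq \theta$; hence $g_k(\vartheta,\theta) \to 0$ for every $\vartheta\neq\theta$ while $g_k(\theta,\theta) = 1$, so $Z_k(\theta)\to 1$ as $\alpha\to 0$. Substituting into the formula for $\pi_\alpha^{\beta_k}$ yields $\pi_\alpha^{\beta_k}(\theta) \to e^{\beta_k U(\theta)}/\sum_{\vartheta}e^{\beta_k U(\vartheta)} = \pi^{\beta_k}(\theta)$ pointwise, which on a finite space is precisely weak convergence.

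I expect the only substantive step to be the algebraic identity in the first paragraph — that the proposal's linear-in-gradient drift, once symmetrized across a swap of $\theta$ and $\theta'$, reproduces the quadratic energy difference \emph{exactly}. Everything else is bookkeeping: the inverse temperature $\beta_k$ simply rides through all the algebra, the $p$-norm enters only through its evenness $\|x\|_p^p = \|{-}x\|_p^p$, and the small-step-size regime is needed only for the $\alpha\to 0$ limit, not for reversibility, which holds for every $\alpha > 0$ whenever $U$ is genuinely log-quadratic.
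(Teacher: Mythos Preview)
Your proposal is correct and follows essentially the same route as the paper's own proof: both define the stationary distribution as $\pi_\alpha^{\beta_k}(\theta)\propto Z_k(\theta)\,\pi^{\beta_k}(\theta)$ (the paper's $Z_\alpha^{\beta_k}(\theta)$ is identical to your $Z_k(\theta)$ once one substitutes the exact Taylor expansion back in), verify detailed balance by exploiting the exactness of the quadratic Taylor expansion, and take the $\alpha\to 0$ limit by noting $Z_k(\theta)\to 1$. The only cosmetic difference is that the paper first rewrites the numerator of $q_k$ as $\exp\{\tfrac{\beta_k}{2}(U(\theta')-U(\theta))-\tfrac{\beta_k}{2}(\theta'-\theta)^\top W(\theta'-\theta)-\tfrac{1}{2\alpha}\|\theta'-\theta\|_p^p\}$ and then observes that $\pi_\alpha^{\beta_k}(\theta)q_k(\theta'\mid\theta)$ is manifestly symmetric in $(\theta,\theta')$, whereas you compute the ratio $g_k(\theta',\theta)/g_k(\theta,\theta')$ directly via the trapezoidal identity; these are the same computation organized in two ways.
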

\begin{proof}
    The main idea of the proof is to replace the gradient term in the proposal by the energy difference $U(\theta^{\prime}) - U(\theta)$ using Taylor series approximation, and then show the reversibility of the chain based on the proofs in \citet{zanella2020informed, zhang2022langevin}. We divide the proof into two parts. In the first part, we prove the convergence of our algorithm to $\pi_{\alpha}^{\beta_k}$, and in the second part, we derive the distance between $\pi_{\alpha}^{\beta_k}$ and $\pi_{\alpha}$.
    
    Recall that the target distribution is $\pi^{\beta_k}(\theta)=\exp \left(\beta_k(\theta^{\top} W \theta+b^{\top} \theta)\right) / Z$. We have that $\nabla \log(\pi(\theta))=\beta_k(2 W^{\top} \theta+b)$, $\nabla^2 \log(\pi(\theta))=2\beta_k W$. Then, by using the fact that $U\left(\theta^{\prime}\right)-U(\theta)=\nabla U(\theta)^{\top}\left(\theta^{\prime}-\theta\right)+\frac{1}{2}\left(\theta^{\prime}-\theta\right)^{\top} 2 W\left(\theta^{\prime}-\theta\right)$ by Taylor series approximation, we can rewrite the proposal distribution as the following
\begin{equation}\label{log_quatratic}
\begin{aligned}
&q_k\left(\theta^{\prime} \mid \theta\right)
\\& =\frac{\exp \left(\frac{\beta_k}{2} \nabla U(\theta)^{\top}\left(\theta^{\prime}-\theta\right)+\frac{\beta_k}{2}\left(\theta^{\prime}-\theta\right)^{\top} W\left(\theta^{\prime}-\theta\right)- \frac{\beta_k}{2} \left(\theta^{\prime}-\theta\right)^{\top} W\left(\theta^{\prime}-\theta\right) - \frac{1}{2\alpha}||\theta^{\prime} - \theta||_p^p\right)}{\sum_x \exp \left(\frac{\beta_k}{2} \nabla U(\theta)^{\top}(x-\theta)+\frac{\beta_k}{2}(x-\theta)^{\top} W(x-\theta)- \frac{\beta_k}{2}(x-\theta)^{\top} W(x-\theta) - \frac{1}{2\alpha}||\theta^{\prime} - \theta||_p^p\right)} \\
& =\frac{\exp \left(\frac{\beta_k}{2}\left(U\left(\theta^{\prime}\right)-U(\theta)\right)-\frac{\beta_k}{2}\left(\theta^{\prime}-\theta\right)^{\top} W\left(\theta^{\prime}-\theta\right) - \frac{1}{2\alpha}||\theta^{\prime} - \theta||_p^p\right)}{\sum_x \exp \left(\frac{\beta_k}{2}(U(x)-U(\theta))-\frac{\beta_k}{2}(x-\theta)^{\top} W(x-\theta) - \frac{1}{2\alpha}||\theta^{\prime} - \theta||_p^p\right)}.
\end{aligned}
\end{equation}
Debote by $Z_\alpha^{\beta_k}(\theta)=\sum_x \exp \left(\frac{\beta_k}{2}(U(x)-U(\theta))-\frac{\beta_k}{2}(x-\theta)^{\top} W(x-\theta) - \frac{1}{2\alpha}||\theta^{\prime} - \theta||_p^p\right)$, and $\pi_\alpha^{\beta_k}=\frac{Z_\alpha^{\beta_k}(\theta) \pi^{\beta_k}(\theta)}{\sum_x Z_\alpha^{\beta_k}(x) \pi^{\beta_k}(x)}$, now we will show that $q_k$ is reversible w.r.t. $\pi_\alpha^{\beta_k}$. We have that
\begin{equation}\label{reverse}
\begin{aligned}
&\pi_\alpha^{\beta_k}(\theta) q_k\left(\theta^{\prime} \mid \theta\right)\\ & =\frac{Z_\alpha^{\beta_k}(\theta) \pi^{\beta_k}(\theta)}{\sum_x Z_\alpha^{\beta_k}(x) \pi^{\beta_k}(x)} \frac{\exp \left(\frac{\beta_k}{2}\left(U\left(\theta^{\prime}\right)-U(\theta)\right)-\frac{\beta_k}{2}\left(\theta^{\prime}-\theta\right)^{\top} W\left(\theta^{\prime}-\theta\right)- \frac{1}{2\alpha}||\theta^{\prime} - \theta||_p^p\right)}{Z_\alpha^{\beta_k}(\theta)} \\
& =\frac{\exp \left(\frac{\beta_k}{2}\left(U\left(\theta^{\prime}\right)+U(\theta)\right)-\frac{\beta_k}{2}\left(\theta^{\prime}-\theta\right)^{\top} W\left(\theta^{\prime}-\theta\right) - \frac{1}{2\alpha}||\theta^{\prime} - \theta||_p^p\right)}{Z^{\beta_k} \cdot \sum_x Z_\alpha^{\beta}(x) \pi^{\beta_k}(x)}.
\end{aligned}
\end{equation}
We note that \cref{reverse} is symmetric in $\theta$ and $\theta^{\prime}$. Therefore $q_k$ is reversible and its stationary distribution is $\pi_\alpha^{\beta_k}(\theta)$. Next, we will prove that $\pi_\alpha^{\beta_k}$ converges weakly to $\pi^{\beta_k}$ as $\alpha \rightarrow 0$. Notice that for any $\theta$,
$$
\begin{aligned}
Z_\alpha^{\beta_k}(\theta) & =\sum_x \exp \left(\frac{\beta_k}{2}(U(x)-U(\theta))-\frac{\beta_k}{2}(x-\theta)^{\top} W (x-\theta)-\frac{1}{2\alpha}\left\|\theta^{\prime} - \theta\right\|_p^p\right)\\&
\stackrel{\alpha \downarrow 0}=1
\end{aligned}
$$
By using Scheffé's Lemma, we have that $\pi_\alpha$ converges weakly to $\pi$.
\end{proof}

\begin{proof}[\textbf{Proof of \cref{thm_5_1}}]
    To explore the reversibility of our algorithm, we extend the proof of \citet{zhang2022langevin}. We first consider the transition probability of the first chain~(with temperature equals to 1) $q_{\alpha}(\theta^{\prime} | \theta_i^{(1)})$ in our algorithm. Considering the presence of the swap mechanism, we write
\small{
\begin{equation}\label{q_first_chain}
\begin{aligned}
&q_{\alpha}\left( {\theta}^{\prime} \mid  {\theta}_i^{(1)}\right)\\&= \sum_{ {\theta}_i^{(2)}} \sum_{ {\theta}_{i+1}^{(2)}} \pi_{\alpha}^{\beta_2}\left( {\theta}_i^{(2)}\right) q_2\left( {\theta}_{i+1}^{(2)} \mid  {\theta}_i^{(2)}\right)\left[1-s_{1}\left( {\theta}^{\prime},  {\theta}_{i+1}^{(2)}\right)\right] q_1\left( {\theta}^{\prime} \mid  {\theta}_i^{(1)}\right) \\
& \quad +\!\sum_{ {\theta}_i^{(3)}}\sum_{ {\theta}_{i+1}^{(3)}}\sum_{ {\theta}_i^{(2)}} \sum_{ {\theta}_{i+1}^{(1)}} \pi_{\alpha}^{\beta_2}\!\left( {\theta}_i^{(2)}\right) q_{2}\left( {\theta}^{\prime} \mid  {\theta}_i^{(2)}\right) s_{1}\left( {\theta}_{i+1}^{(1)},  {\theta}^{\prime}\right) q_{1}\left( {\theta}_{i+1}^{(1)} \mid  {\theta}_i^{(1)}\right)\\
&\quad \times\left(1 \!-\! s_2(\theta^{\prime}, \theta_{i+1}^{(3)})\right)q_{ 3}\left(\theta_{i+1}^{(3)}|\theta_i^{(3)}\right)\pi_{\alpha}^{\beta_3}(\theta_i^{(3)})+\!\sum_{ {\theta}_i^{(3)}}\sum_{ {\theta}_{i}^{(2)}}\sum_{ {\theta}_{i+1}^{(2)}} \sum_{ {\theta}_{i+1}^{(1)}} \pi_{\alpha}^{\beta_2}\left( {\theta}_i^{(2)}\right)\\
&\quad \times q_{2}\left( {\theta}_{i+1}^{(2)} \mid  {\theta}_i^{(2)}\right) s_{1}\left( {\theta}_{i+1}^{(1)},  {\theta}^{\prime}\right) q_{1}\left( {\theta}_{i+1}^{(1)} \mid  {\theta}_i^{(1)}\right)s_2\left(\theta_{i+1}^{(2)}, \theta^{\prime}\right)q_{3}\left(\theta^{\prime}|\theta_i^{(3)}\right)\pi_{\alpha}^{\beta_3}(\theta_i^{(3)}).
\end{aligned}
\end{equation}}
To demonstrate the reversibility, we multiply $\pi_\alpha^{\beta_1}(\theta)$ from both sides:
\begin{align*}
& \pi_\alpha^{\beta_1}\left( {\theta}_i^{(1)}\right) q_{\alpha}\left( {\theta}^{\prime} \mid  {\theta}_i^{(1)}\right) \\
& =\sum_{ {\theta}_i^{(2)}} \sum_{ {\theta}_{i+1}^{(2)}} \pi_\alpha^{\beta_2}\left( {\theta}_i^{(2)}\right) q_{2}\left( {\theta}_{i+1}^{(2)} \mid  {\theta}_i^{(2)}\right)\left[1-s_1\left( {\theta}^{\prime},  {\theta}_{i+1}^{(2)}\right)\right] \pi_\alpha^{\beta_1}\left( {\theta}_i^{(1)}\right) q_{1}\left( {\theta}^{\prime} \mid  {\theta}_i^{(1)}\right) \\
& \quad +\!\sum_{ {\theta}_i^{(3)}}\sum_{ {\theta}_{i+1}^{(3)}}\sum_{ {\theta}_i^{(2)}} \sum_{ {\theta}_{i+1}^{(1)}} \bigg(\pi_{\alpha}^{\beta_2}\!\left( {\theta}_i^{(2)}\right) q_{2}\left( {\theta}^{\prime} \mid  {\theta}_i^{(2)}\right) s_{1}\left( {\theta}_{i+1}^{(1)},  {\theta}^{\prime}\right) \pi_\alpha^{\beta_1}\left( {\theta}_i^{(1)}\right) q_{1}\left( {\theta}_{i+1}^{(1)} \mid  {\theta}_i^{(1)}\right)\\ & \quad \times \left(1 \!-\! s_2(\theta^{\prime}, \theta_{i+1}^{(3)})\right)q_{3}\left(\theta_{i+1}^{(3)}|\theta_i^{(3)}\right)\pi_{\alpha}^{\beta_3}(\theta_i^{(3)})\bigg)\\
&\quad +\!\sum_{ {\theta}_i^{(3)}}\sum_{ {\theta}_{i}^{(2)}}\sum_{ {\theta}_{i+1}^{(2)}} \sum_{ {\theta}_{i+1}^{(1)}} \pi_{\alpha}^{\beta_2}\!\left( {\theta}_i^{(2)}\right) q_{2}\left( {\theta}_{i+1}^{(2)} \mid  {\theta}_i^{(2)}\right) s_{1}\left( {\theta}_{i+1}^{(1)},  {\theta}^{\prime}\right) \pi_\alpha^{\beta_1}\left( {\theta}_i^{(1)}\right) q_{1}\left( {\theta}_{i+1}^{(1)} \mid  {\theta}_i^{(1)}\right) \\ & \quad \times s_2\left(\theta_{i+1}^{(2)}, \theta^{\prime}\right)q_{3}\left(\theta^{\prime}|\theta_i^{(3)}\right)\pi_{\alpha}^{\beta_3}(\theta_i^{(3)}),
\end{align*}
where $s_1$ and $s_2$ are defined in \Cref{swap_function}. Note that, by using \cref{lm:1}, $\pi_\alpha^{\beta_1}\left(\theta_i^{(1)}\right) q_{\alpha, 1}\left(\cdot \mid \theta_i^{(1)}\right)$, $\pi_\alpha^{\beta_2}\left(\theta_i^{(2)}\right) q_{\alpha, 2}\left(\cdot \mid \theta_i^{(2)}\right)$, and $\pi_\alpha^{\beta_3}\left(\theta_i^{(3)}\right) q_{\alpha, 3}\left(\cdot \mid \theta_i^{(3)}\right)$ are symmetric, which indicates that $\pi_\alpha\left( {\theta}_i^{(1)}\right) q_{\alpha}\left( {\theta}^{\prime} \mid  {\theta}_i^{(1)}\right)$ is also symmetric. Therefore, we conclude that $q_{\alpha}\left( {\theta}^{\prime} \mid  {\theta}_i^{(1)}\right)$ in~\cref{q_first_chain} is reversible and the stationary distribution is $\pi_\alpha^{\beta_1}\left( {\theta}_i^{(1)}\right)$. Next, to generalize the convergence result from log-quadratic distributions to general distributions, we assume that $\exists W \in$ $\mathbb{R}^{d \times d}, b \in \mathbb{R}, \epsilon \in \mathbb{R}^{+}$, such that
$$
\|\nabla U(\theta)-(2 W \theta+b)\|_1 \leq \epsilon, \forall \theta \in \Theta .
$$
Then, recall that $\pi^{\beta_1}$ is the target distribution, $\tilde{\pi}^{\beta_1}$ is the log-quadratic distribution that is close to $\pi^{\beta_1}$, $\pi_{\alpha}^{\beta_1}$ is the stationary distribution of our algorithm without M-H step. $\tilde{\pi}_{\alpha}^{\beta_1}$ is the stationary distribution of our algorithm targeting $\tilde{\pi}^{\beta_1}$. By using~\citet[Theorem 5.2]{zhang2022langevin} and~\citet[Proposition 4.2]{levin2017markov}, we obtain 
$$
\begin{aligned}
\left\|\pi_\alpha-\pi\right\|_{TV} &\leq \left\|\pi_\alpha - \tilde{\pi}_{\alpha}\right\|_{TV} + \ \left\|\tilde{\pi}_{\alpha} - \tilde{\pi}\right\|_{TV} + \left\| \tilde{\pi}-\pi\right\|_{TV}\\
&\leq Z_1\left(\exp(Z_2\epsilon\right) + Z_3 \exp \left(-\frac{1+\alpha \beta_1\lambda_{\min }}{2 \alpha}\right) - Z_1.
\end{aligned}
$$

\end{proof}

\begin{proof}[\textbf{Proof of \cref{thm:5.2}}]
First, we consider the lower bound of the mixing time. Recall~\cref{q_first_chain}, we have
\small{
\begin{equation}\label{sum_q_alpha}
\begin{aligned}
&\sum_{\theta^{\prime}\neq \theta_i^{(1)}}q_{\alpha}\left( {\theta}^{\prime} \mid  {\theta}_i^{(1)}\right)
\\&= \sum_{\theta^{\prime}\neq \theta_i^{(1)}}\sum_{ {\theta}_i^{(2)}} \sum_{ {\theta}_{i+1}^{(2)}} \pi_{\alpha}^{\beta_2}\left( {\theta}_i^{(2)}\right) q_{2}\left( {\theta}_{i+1}^{(2)} \mid  {\theta}_i^{(2)}\right)\left[1-s_{1}\left( {\theta}^{\prime},  {\theta}_{i+1}^{(2)}\right)\right] q_{1}\left( {\theta}^{\prime} \mid  {\theta}_i^{(1)}\right) \\
& \quad +\!\sum_{\theta^{\prime}\neq \theta_i^{(1)}}\!\sum_{ {\theta}_i^{(3)}}\sum_{ {\theta}_{i+1}^{(3)}}\sum_{ {\theta}_i^{(2)}} \sum_{ {\theta}_{i+1}^{(1)}} \pi_{\alpha}^{\beta_2}\!\left( {\theta}_i^{(2)}\right) q_{2}\left( {\theta}^{\prime} \!\mid\!  {\theta}_i^{(2)}\right) s_{1}\left( {\theta}_{i+1}^{(1)},  {\theta}^{\prime}\right) q_{1}\left( {\theta}_{i+1}^{(1)} \!\mid\!  {\theta}_i^{(1)}\right)\\
&\quad \times \left(\!1 \!-\! s_2(\theta^{\prime}, \theta_{i+1}^{(3)})\!\right)q_{3}\left(\theta_{i+1}^{(3)}|\theta_i^{(3)}\right)\pi_{\alpha}^{\beta_3}(\theta_i^{(3)}) + \sum_{\theta^{\prime}\neq \theta_i^{(1)}}\sum_{ {\theta}_i^{(3)}}\sum_{ {\theta}_{i}^{(2)}}\sum_{ {\theta}_{i+1}^{(2)}} \sum_{ {\theta}_{i+1}^{(1)}} \pi_{\alpha}^{\beta_2}\left( {\theta}_i^{(2)}\right) \\
&\quad \times q_{2}\left( {\theta}_{i+1}^{(2)} \mid  {\theta}_i^{(2)}\right) s_{1}\left( {\theta}_{i+1}^{(1)},  {\theta}^{\prime}\right) q_{1}\left( {\theta}_{i+1}^{(1)} \mid  {\theta}_i^{(1)}\right)s_2\left(\theta_{i+1}^{(2)}, \theta^{\prime}\right)q_{3}\left(\theta^{\prime}|\theta_i^{(3)}\right)\pi_{\alpha}^{\beta_3}(\theta_i^{(3)})\\
&\leq \sum_{\theta^{\prime}\neq \theta_i^{(1)}}\left(q_{1}\left( {\theta}^{\prime} \mid  {\theta}_i^{(1)}\right) + \sum_{ {\theta}_i^{(2)}} \pi_{\alpha}^{\beta_2}\!\left( {\theta}_i^{(2)}\right) q_{2}\left( {\theta}^{\prime} \mid  {\theta}_i^{(2)}\right) + \sum_{ {\theta}_i^{(3)}}  \pi_{\alpha}^{\beta_3}(\theta_i^{(3)}) q_{3}\left(\theta^{\prime}|\theta_i^{(3)}\right)\right)\\
&\leq \sum_{\theta^{\prime}\neq \theta_i^{(1)}}q_{1}\left( {\theta}^{\prime} \mid  {\theta}_i^{(1)}\right) + 2 . 
\end{aligned}
\end{equation}}
By using~\cref{log_quatratic} and letting $\beta_1 = 1$, we have
\begin{equation}\label{q_1_leq}
\begin{aligned}
q_{1}\left(\theta^{\prime} \mid \theta\right)
& =\frac{\exp \left(\frac{1}{2}\left(U\left(\theta^{\prime}\right)-U(\theta)\right)-\frac{1}{2}\left(\theta^{\prime}-\theta\right)^{\top} W\left(\theta^{\prime}-\theta\right) - \frac{1}{2\alpha}||\theta^{\prime} - \theta||_p^p\right)}{\sum_x \exp \left(\frac{1}{2}(U(x)-U(\theta))-\frac{1}{2}(x-\theta)^{\top} W(x-\theta) - \frac{1}{2\alpha}||\theta^{\prime} - \theta||_p^p\right)}\\
&= \frac{\exp \left(\frac{1}{2}\left(U\left(\theta^{\prime}\right)-U(\theta)\right)-\frac{1}{2}\left(\theta^{\prime}-\theta\right)^{\top} W\left(\theta^{\prime}-\theta\right) - \frac{1}{2\alpha}||\theta^{\prime} - \theta||_p^p\right)}{1 + \sum_{x \neq \theta} \exp \left(\frac{1}{2}(U(x)-U(\theta))-\frac{1}{2}(x-\theta)^{\top} W(x-\theta) - \frac{1}{2\alpha}||\theta^{\prime} - \theta||_p^p\right)}\\
&\leq \exp \left\{\frac{1}{2}\left(U\left(\theta^{\prime}\right)-U(\theta)\right)-\frac{1}{2}\left(\theta^{\prime}-\theta\right)^{\top} W\left(\theta^{\prime}-\theta\right) - \frac{1}{2\alpha}||\theta^{\prime} - \theta||_p^p\right\}.
\end{aligned}
\end{equation}
By substituting~\cref{q_1_leq} into~\cref{sum_q_alpha} and the fact that $\frac{{x}^{\top} W {x}}{{x}^{\top} {x}} \geq \lambda_{\min }(W)$ for any $x\neq 0$, one writes
$$
\begin{aligned}
&\sum_{\theta^{\prime}\neq \theta_i^{(1)}}q_{\alpha}\left( {\theta}^{\prime} \mid  {\theta}_i^{(1)}\right) \\&\leq \sum_{\theta^{\prime}\neq \theta_i^{(1)}}\exp \left\{\frac{1}{2}\left(U\left(\theta^{\prime}\right)-U(\theta_i^{(1)})\right)-\frac{1}{2}\left(\theta^{\prime}-\theta_i^{(1)}\right)^{\top} W\left(\theta^{\prime}-\theta_i^{(1)}\right) - \frac{1}{2\alpha}||\theta^{\prime} - \theta_i^{(1)}||_p^p\right\} + 2\\
&\leq 2 \sum_{\theta^{\prime}}\exp \left\{\frac{1}{2}\left(U\left(\theta^{\prime}\right)-U(\theta_i^{(1)})\right)-\frac{1}{2}\left(\theta^{\prime}-\theta_i^{(1)}\right)^{\top} W\left(\theta^{\prime}-\theta_i^{(1)}\right) - \frac{1}{2\alpha}||\theta^{\prime} - \theta_i^{(1)}||_p^p\right\}\\
&\leq 2 \sum_{\theta^{\prime}}\exp \left\{\frac{1}{2}\left(U\left(\theta^{\prime}\right)-U(\theta_i^{(1)})\right)-\frac{1}{2}\lambda_{\min}(W)d_2 - \frac{1}{2\alpha}d_p\right\}\\
&\leq 2 \exp \left\{-\frac{1}{2}\lambda_{\min}(W)d_2 - \frac{1}{2\alpha}d_p\right\}\sum_{\theta^{\prime}}\exp \left\{\frac{1}{2}\left(U\left(\theta^{\prime}\right)-U(\theta_i^{(1)})\right)\right\}\\
&\leq  2 Z \exp \left\{-\frac{1}{2}\lambda_{\min}(W)d_2 - \frac{1}{2\alpha}d_p\right\},
\end{aligned}
$$
where $Z$ is the normalizing constant of the target distribution $\pi$. Note that $q_\alpha$ is reversible and the transition matrix of a reversible Markov chain has only real eigenvalues. By using~\citet[Theorem 6.1.1]{horn2012matrix}, there at least exists one $\theta \in \Theta$ such that
$$
\left|\lambda_2-q_\alpha(\theta \mid \theta)\right| \leq Z \exp \left(-\frac{1}{2} \lambda_{\min }(W) d_2-\frac{1}{2 \alpha} d_p\right),
$$
where $\lambda_2$ is the second largest eigenvalue of the transition matrix. Then we consider the spectral gap~\citep[Chaper 12]{levin2017markov},
\begin{equation}\label{lambda_2}
\begin{aligned}
1-\lambda_2 & \leq\left|1-q_\alpha(\theta \mid \theta)\right|+\left|q_\alpha(\theta \mid\theta)-\lambda_2\right| \\
& \leq\left|1-q_\alpha(\theta \mid\theta)\right| + 2Z \exp \left(-\frac{1}{2} \lambda_{\min }(W) d_2-\frac{1}{2 \alpha} d_p\right) \\
& =\sum_{\theta^{\prime} \neq \theta} q_\alpha(\theta^{\prime} \mid \theta)+2Z \exp \left(-\frac{1}{2} \lambda_{\min}(W) d_2-\frac{1}{2 \alpha} d_p\right) \\
& \leq 4 \cdot Z \exp \left(-\frac{1}{2} \lambda_{\min }(W) d_2-\frac{1}{2 \alpha} d_p\right) .
\end{aligned}
\end{equation}
Denote by $t_{\operatorname{mix}}(\varepsilon):=\min \{t: d(t) \leq \varepsilon\}$ with $d(t):=\max _{\theta \in \Theta}\left\|P_\alpha(\theta, \cdot)-\pi_\alpha\right\|_{\mathrm{TV}}$. By using~\citet[Theorem 12.7]{levin2017markov} and~\cref{lambda_2}, we obtain
$$
\begin{aligned}
t_{\operatorname{mix}}(\varepsilon) &\geq\left(\frac{1}{1-\lambda_2}-1\right) \log \left(\frac{1}{2 \varepsilon}\right)\\
&\geq \left(\frac{1}{4 \cdot Z}\exp \left(\frac{1}{2} \lambda_{\min }(W) d_2+\frac{1}{2 \alpha} d_p\right)-1\right) \log \left(\frac{1}{2 \varepsilon}\right) :=\mathcal{L}.
\end{aligned}
$$
Then we consider to find the upper bound of the mixing time. Our proof idea is to analyze the conductance of the algorithm and apply the Cheeger inequality to derive a lower bound on $1 - \lambda_2$. First, we denote the conductance of the chain by
$$\Phi:= \min_{S:0<\pi_{\alpha}(S)\leq 1 / 2} \frac{Q(S, S^{c})}{\pi_{\alpha}(S)},$$ 
where $Q(S, S^{c}):=\sum_{\theta\in S,\theta^{\prime}\in S^{c}} \pi_{\alpha}(\theta)q_{\alpha}(\theta^{\prime}|\theta)$ is the probability flow, with $\pi_\alpha:=\pi_\alpha^{\beta_1}$ for simplicity. $\Phi$ measures the relative width of the most difficult "bottleneck" in the state space; the larger $\Phi$ is, the faster the mixing. We next aim to establish a positive lower bound for $\Phi$. We assume that $|U(\cdot)|\leq U_{\max}$. Recall that $q_\alpha\left(\theta^{\prime} \mid \theta\right)$ given by~\cref{log_quatratic} (we choose $\beta_1=1$), by using $\frac{{x}^{\top} W {x}}{{x}^{\top} {x}} \leq \lambda_{\max}(W)$ for any $x\neq 0$, the numerator can be writen as 
\begin{equation}\label{thm_5_2_eq_1}
\exp \left(\frac{1}{2}\left(U\left(\theta^{\prime}\right)-U(\theta)\right)-\frac{1}{2}\left(\theta^{\prime}-\theta\right)^{\top} \!W\left(\theta^{\prime}-\theta\right) - \frac{1}{2\alpha}||\theta^{\prime} - \theta||_p^p\right) \geq \exp\{-U_{\max} - \frac{1}{2}\lambda_{\max}\mathcal{D}_2 - \frac{1}{2\alpha}\mathcal{D}_p\}.
\end{equation}
The denominator can be rescaled as
\begin{equation}\label{thm_5_2_eq_2}
\begin{aligned}
&\sum_x \exp \left(\frac{1}{2}(U(x)-U(\theta))-\frac{1}{2}(x-\theta)^{\top} W(x-\theta) - \frac{1}{2\alpha}||\theta^{\prime} - \theta||_p^p\right) \\&
\leq 1+ \sum_{x\neq \theta}\exp\{U_{\max} - \frac{1}{2}\lambda_{\min}d_2 -\frac{1}{2\alpha}d_p\}:=D_{\max}.
\end{aligned}
\end{equation}
By combining~\cref{thm_5_2_eq_1,thm_5_2_eq_2}, we arrive at
\begin{equation}\label{q_min}
q_{\alpha, 1}(\theta^{\prime}|\theta)\geq\frac{\exp\{-U_{\max} - \frac{1}{2}\lambda_{\max}\mathcal{D}_2 - \frac{1}{2\alpha}\mathcal{D}_p\}}{D_{\max}}:=q_{\min, 1}.
\end{equation}
Then, we obtain
\begin{equation}\label{thm_5_2_eq3}
\begin{aligned}
q_{\alpha}(\theta^{\prime}|\theta_i^{(1)})&\geq \sum_{ {\theta}_i^{(2)}} \sum_{ {\theta}_{i+1}^{(2)}} \pi_{\alpha}^{\beta_2}\left( {\theta}_i^{(2)}\right) q_{2}\left( {\theta}_{i+1}^{(2)} \mid  {\theta}_i^{(2)}\right)\left[1-s_{1}\left( {\theta}^{\prime},  {\theta}_{i+1}^{(2)}\right)\right] q_{1}\left( {\theta}^{\prime} \mid  {\theta}_i^{(1)}\right) 
\\
& \quad +\sum_{ {\theta}_i^{(3)}}\sum_{ {\theta}_{i+1}^{(3)}}\sum_{ {\theta}_i^{(2)}} \sum_{ {\theta}_{i+1}^{(1)}} \pi_{\alpha}^{\beta_2}\left( {\theta}_i^{(2)}\right)  q_{2}\left( {\theta}^{\prime} \mid  {\theta}_i^{(2)}\right) s_{1}\left( {\theta}_{i+1}^{(1)},  {\theta}^{\prime}\right) q_{1}\left( {\theta}_{i+1}^{(1)} \mid  {\theta}_i^{(1)}\right)\\
& \quad \times\left(1 - s_2(\theta^{\prime}, \theta_{i+1}^{(3)})\right)q_{3}\left(\theta_{i+1}^{(3)}|\theta_i^{(3)}\right)\pi_{\alpha}^{\beta_3}(\theta_i^{(3)}) +\!\sum_{ {\theta}_i^{(3)}}\sum_{ {\theta}_{i}^{(2)}}\sum_{ {\theta}_{i+1}^{(2)}} \sum_{ {\theta}_{i+1}^{(1)}} \pi_{\alpha}^{\beta_2}\left( {\theta}_i^{(2)}\right) \\
& \quad \times q_{2}\left( {\theta}_{i+1}^{(2)} \mid  {\theta}_i^{(2)}\right) s_{1}\left( {\theta}_{i+1}^{(1)},  {\theta}^{\prime}\right) q_{1}\left( {\theta}_{i+1}^{(1)} \mid  {\theta}_i^{(1)}\right)s_2\left(\theta_{i+1}^{(2)}, \theta^{\prime}\right)q_{3}\left(\theta^{\prime}|\theta_i^{(3)}\right)\pi_{\alpha}^{\beta_3}(\theta_i^{(3)})
\\&\geq q_{\min,1} \underbrace{\left( \sum_{\theta_i^{(2)}, \theta_{i+1}^{(2)}} \pi_\alpha^{\beta_2}(\theta_i^{(2)}) q_{\alpha,2}(\theta_{i+1}^{(2)} | \theta_i^{(2)}) [1 - s_1(\theta', \theta_{i+1}^{(2)})] \right)}_{\mathbb{E}\left[1 - s_1\right]}:=q_{\min}
\end{aligned}
\end{equation}
Note that the expectation $\mathbb{E}[1 - s_1]$ represents the average probability that no swap occurs between chain 1 (at proposed state \( \theta' \)) and chain 2 (after one of its own updates). Since \( 0 < s_1 \leq 1 \), it follows that $q_{\max} > 0$. Denote by $\partial(S, S^{c})$ the boundary, where $\theta \in S$, $\theta^{\prime} \in S^{c}$, and $\theta, \theta^{\prime}$ are neighbors. Assume that the transition happens primarily occur between neighbors, we have
$$
Q(S, S^{c}) = \sum_{\theta\in S, \theta^{\prime}\in S^{c}}\pi_{\alpha}q_{\alpha}(\theta^{\prime}|\theta) = \sum_{(\theta,\theta^{\prime})\in \partial(S, S^{c})}\pi_{\alpha}q_{\alpha}(\theta^{\prime}|\theta).
$$
We assume that $\pi_{\alpha} \geq \pi_{\alpha,\min}$ for any $\theta \in \Theta$. By using~\Cref{thm_5_2_eq3}, we have
$$
Q(S, S^{c}) \geq |\partial (S, S^{c})|\pi_{\alpha,\min}\ q_{\min},
$$
where $ |\partial (S, S^{c})|$ is the number of neighboring pairs crossing the cut. Thus,
\begin{equation}\label{thm_5_2_eq_4}
\Phi = \min_{S:\pi_{\alpha}\leq 1 / 2} \frac{Q(S,S^{c})}{\pi_{\alpha}(S)} \geq I_{\pi_{\alpha}}(\Theta) \pi_{\alpha,\min}\ q_{\min},
\end{equation}
where $I_{\pi_{\alpha}}(\Theta):=\min_{S:\pi_{\alpha}\leq 1 / 2}\frac{|\partial (S, S^{c})|}{\pi_{\alpha}(S)}$. By using the fact that $1- \lambda_2 \geq \frac{\Phi^2}{2}$,~\Cref{thm_5_2_eq_4}, and~\citet[Theorem 12.5]{levin2017markov}, we obtain
$$
t_{\operatorname{mix}}(\varepsilon)\leq \frac{1}{1 - \lambda_2}\log(\frac{1}{\epsilon\pi_{\alpha, \min}})\leq \frac{2}{\left(I_{\pi_{\alpha}}(\Theta) \pi_{\alpha,\min}\ q_{\min}\right)^2}\log(\frac{1}{\epsilon\pi_{\alpha,\min}}):=\mathcal{U}.
$$


\end{proof}

\section{Proofs in \cref{sec_5_2}}
We define the problem setting in more detail. For any $k=1,\cdots, K$, we define
$$
\pi^{\beta_k}(\theta)=\frac{1}{Z} \exp (\beta_kU(\theta)).
$$
We consider the proposal kernel as, for $k=1, \cdots, K$,
$$
q_k\left(\theta^{\prime} \mid \theta\right) \propto \exp \left\{\beta_k \nabla U(\theta)^\top\left(\theta^{\prime}-\theta\right)-\frac{1}{2 \alpha}\left\|\theta^{\prime}-\theta\right\|_p^p\right\}, 
$$
and consider the transition kernel as
$$
\hat{q}_k\left(\theta^{\prime} \mid \theta\right)=\left(\frac{\pi^{\beta_k}\left(\theta^{\prime}\right) q_k\left(\theta \mid \theta^{\prime}\right)}{\pi^{\beta_k}(\theta) q_k\left(\theta^{\prime} \mid \theta\right)} \wedge 1\right) q_k\left(\theta^{\prime} \mid \theta\right)+(1-L(\theta)) \delta_\theta\left(\theta^{\prime}\right),
$$
where
$$
L(\theta)=\sum_{\theta^{\prime} \in \Theta}\min\left\{\frac{\pi^{\beta_k}\left(\theta^{\prime}\right) q_k\left(\theta \mid \theta^{\prime}\right)}{\pi^{\beta_k}(\theta) q_k\left(\theta^{\prime} \mid \theta\right)}, 1\right\} q_k\left(\theta^{\prime} \mid \theta\right)
$$
is the total rejection probability from $\theta$. Finally, recall that the total variation distance between two probability measures $\mu$ and $\nu$, defined on some space $\Theta \subset \mathbb{R}^d$ is
$$
\|\mu-\nu\|_{T V}=\sup _{A \in \mathcal{B}(\Theta)}|\mu(A)-\nu(A)|,
$$
where $\mathcal{B}(\Theta)$ is the set of all measurable sets in $\Theta$. We have the following assumptions:
\begin{assumption}\label{asm_2}
The function \(U(\cdot)\in C^{2}(\mathbb{R}^{d})\) has \(M\)-Lipschitz gradient. That is
\[
\|\nabla U(\theta)-\nabla U(\theta^{\prime})\|\leq M\|\theta-\theta^{\prime}\|\,.\]
\end{assumption}
\begin{assumption}\label{asm_3}
For each \(\theta\in\Theta\), there exists an open ball containing \(\theta\) of some radius \(r_{\theta}\), denoted by \(R(\theta,r_{\theta})\), such that the function \(U(\cdot)\) is \(m\)-strongly concave in \(R(\theta,r_{\theta})\) for some \(m>0\).
\end{assumption}
\cref{asm_2,asm_3} are standard in optimization and sampling literature~\citep{dalalyan2017theoretical}. 

\begin{lemma}[\citet{pynadath2024gradient}]\label{lm:3}
Let \cref{asm_2,asm_3} hold. Then we have, for any $k=1, \cdots, K$ and $\theta$, $\theta^{\prime} \in \Theta$,
$$
\hat{q}_k(\theta^{\prime}\mid\theta)\geq\epsilon_{\beta_k,\alpha}\,\frac{\exp\left\{\beta_k U (\theta^{\prime})\right\}}{\sum_{\theta^{\prime}\in\Theta}\exp\left\{\beta_k U( \theta^{\prime})\right\}},
$$
where
$$
\epsilon_{\beta_k,\alpha}=\exp\left\{-\beta_k\left(M-\frac{m}{2}\right)\mathcal{D}_2-\beta_k\|\nabla U(a)\|\,\mathcal{D}_1 - \frac{1}{\alpha}\mathcal{D}_p\right\}, 
$$
with $a\in\arg\min_{\theta\in\Theta}\|\nabla U(\theta)\|$.
\end{lemma}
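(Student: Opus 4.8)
The plan is to reduce the claim about the full Metropolis--Hastings kernel $\hat q_k$ to a single symmetric inequality about the raw proposal $q_k$. For $\theta'\neq\theta$ the holding term $(1-L(\theta))\delta_\theta(\theta')$ vanishes, so the kernel collapses to
\[
\hat q_k(\theta'\mid\theta)=\min\!\left\{\,q_k(\theta'\mid\theta),\ \frac{\pi^{\beta_k}(\theta')}{\pi^{\beta_k}(\theta)}\,q_k(\theta\mid\theta')\right\}.
\]
Hence it suffices to establish the uniform proposal bound $q_k(y\mid x)\ge \epsilon_{\beta_k,\alpha}\,\pi^{\beta_k}(y)$ for \emph{all} $x,y\in\Theta$. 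Indeed, applying it with $(x,y)=(\theta,\theta')$ lower bounds the first branch by $\epsilon_{\beta_k,\alpha}\pi^{\beta_k}(\theta')$, while applying it with $(x,y)=(\theta',\theta)$ gives $\tfrac{\pi^{\beta_k}(\theta')}{\pi^{\beta_k}(\theta)}q_k(\theta\mid\theta')\ge \tfrac{\pi^{\beta_k}(\theta')}{\pi^{\beta_k}(\theta)}\epsilon_{\beta_k,\alpha}\pi^{\beta_k}(\theta)=\epsilon_{\beta_k,\alpha}\pi^{\beta_k}(\theta')$, so the $\min$ is bounded below as required. The diagonal case $\theta'=\theta$ follows from the same bound with $y=x$ plus the nonnegative holding mass, and $\nu(\theta'):=\exp\{\beta_k U(\theta')\}/\sum_\theta\exp\{\beta_k U(\theta)\}$ is exactly $\pi^{\beta_k}(\theta')$, which is the measure appearing on the right-hand side of \cref{lm:3}.

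\textbf{Proving the proposal bound.} I would work with the log-ratio
\[
\log\frac{q_k(y\mid x)}{\pi^{\beta_k}(y)}=\beta_k\nabla U(x)^{\top}(y-x)-\frac{1}{2\alpha}\|y-x\|_p^p-\beta_k U(y)-\log Z_k(x)+\log\!\sum_{z}e^{\beta_k U(z)},
\]
where $Z_k(x)=\sum_z\exp\{\beta_k\nabla U(x)^{\top}(z-x)-\tfrac{1}{2\alpha}\|z-x\|_p^p\}$ is the proposal normalizer, and control the normalizer ratio and the linear term separately. For the normalizers, \cref{asm_2} (the descent lemma) gives $\nabla U(x)^{\top}(z-x)\le U(z)-U(x)+\tfrac{M}{2}\|z-x\|_2^2$; discarding the nonpositive regularizer and using $\|z-x\|_2^2\le\mathcal{D}_2$ yields an upper bound on $Z_k(x)$ of the form $e^{\beta_k(\frac{M}{2}\mathcal{D}_2-U(x))}\sum_z e^{\beta_k U(z)}$, so the normalizer ratio contributes a factor bounded below by $e^{\beta_k U(x)-\beta_k\frac{M}{2}\mathcal{D}_2}$. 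For the remaining part I would convert $\nabla U(x)^{\top}(y-x)-(U(y)-U(x))$ into a quadratic remainder: \cref{asm_3} (local $m$-strong concavity) supplies the improvement for $y$ inside the concavity ball of $x$, \cref{asm_2} handles far-apart pairs, and the residual first-order term is controlled through $\|\nabla U(x)\|$ anchored at the flattest point $a=\arg\min_{\theta\in\Theta}\|\nabla U(\theta)\|$. Finiteness of $\Theta$ makes every diameter finite ($\|y-x\|_1\le\mathcal{D}_1$, $\|y-x\|_2^2\le\mathcal{D}_2$, $\|y-x\|_p^p\le\mathcal{D}_p$), and collecting all remainder constants collapses the bound into $\epsilon_{\beta_k,\alpha}$; the final uniform ergodicity $\|P^n-\pi\|_{\mathrm{TV}}\le(1-\epsilon)^n$ in \cref{thm:4} then follows from the standard minorization-to-ergodicity argument.

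\textbf{Main obstacle.} The crux is the uniform-in-$x$ translation of the gradient-tilted proposal into an energy-based bound. Because the proposal is centred at $x$ and tilted by $\nabla U(x)$, which is unbounded a priori, one must turn $\nabla U(x)^{\top}(y-x)$ into a quantity governed by $U(y)$ with remainders that are uniform over all ordered pairs $(x,y)$. Since \cref{asm_3} only furnishes strong concavity \emph{locally}, the coefficient $(M-\tfrac m2)$ arises precisely from splicing the global Lipschitz estimate (the $M$ contribution entering through $Z_k(x)$) with the local strong-concavity gain (the $-\tfrac m2$ contribution), and the anchor $a$ is what renders $\|\nabla U(x)\|$ controllable uniformly and produces the $\|\nabla U(a)\|\mathcal{D}_1$ term. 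Keeping these remainder constants consistent across the in-ball and out-of-ball regimes, so that a single exponent $\epsilon_{\beta_k,\alpha}$ dominates every $(x,y)$, is the technical bookkeeping that carries the argument; the conceptual reduction in the first step is what makes the whole minorization tractable.
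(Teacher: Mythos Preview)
The paper does not prove this lemma at all: it is stated with the attribution \citep{pynadath2024gradient} and invoked as a black box inside the proof of \cref{thm:4}. There is therefore no paper-side argument to compare your proposal against.

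On the merits of your sketch: the reduction step is correct and is the right way to attack a minorization for an MH kernel. Writing $\hat q_k(\theta'\mid\theta)=\min\{q_k(\theta'\mid\theta),\,\tfrac{\pi^{\beta_k}(\theta')}{\pi^{\beta_k}(\theta)}q_k(\theta\mid\theta')\}$ for $\theta'\neq\theta$ and showing $q_k(y\mid x)\ge\epsilon\,\pi^{\beta_k}(y)$ uniformly does minorize both branches simultaneously, and your treatment of the normalizer $Z_k(x)$ via the descent lemma from \cref{asm_2} is sound and yields the $\beta_k\tfrac{M}{2}\mathcal{D}_2$ contribution. Where your proposal remains a placeholder is exactly the step you flag as the ``main obstacle'': \cref{asm_3} gives $m$-strong concavity only on a ball $R(\theta,r_\theta)$ of unspecified radius, and you do not explain how that local gain produces a global $-\tfrac{m}{2}\mathcal{D}_2$ improvement for arbitrary pairs $(x,y)$; likewise, replacing $\|\nabla U(x)\|$ by the minimum value $\|\nabla U(a)\|$ via the Lipschitz bound would introduce an extra $M\|x-a\|$ term that you do not account for. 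As written, the exact constant $\epsilon_{\beta_k,\alpha}$ with its $(M-\tfrac m2)$ and $\|\nabla U(a)\|\mathcal{D}_1$ pieces is asserted rather than derived, so to complete the argument you would need to consult the cited source for how those two terms are actually obtained.
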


\begin{proof}[\textbf{Proof of \cref{thm:4}}]
For brevity, we take three chains as an example. We denote the transition kernel of PT-DMALA by
\begin{equation}\label{pt_dmala_p}
\begin{aligned}
&p\left( {\theta}^{\prime} \mid  {\theta}_i^{(1)}\right)\\&= \sum_{ {\theta}_i^{(2)}} \sum_{ {\theta}_{i+1}^{(2)}} \pi^{\beta_2}\left( {\theta}_i^{(2)}\right) \hat{q}_2\left( {\theta}_{i+1}^{(2)} \mid  {\theta}_i^{(2)}\right)\left[1-s_{1}\left( {\theta}^{\prime},  {\theta}_{i+1}^{(2)}\right)\right] \hat{q}_1\left( {\theta}^{\prime} \mid  {\theta}_i^{(1)}\right) \\
& \quad +\!\sum_{ {\theta}_i^{(3)}}\sum_{ {\theta}_{i+1}^{(3)}}\sum_{ {\theta}_i^{(2)}} \sum_{ {\theta}_{i+1}^{(1)}} \pi^{\beta_2}\!\left( {\theta}_i^{(2)}\right) \hat{q}_2\left( {\theta}^{\prime} \mid  {\theta}_i^{(2)}\right) s_{1}\left( {\theta}_{i+1}^{(1)},  {\theta}^{\prime}\right) \hat{q}_1\left( {\theta}_{i+1}^{(1)} \mid  {\theta}_i^{(1)}\right)\\
&\quad \times \left(1 \!-\! s_2(\theta^{\prime}, \theta_{i+1}^{(3)})\right)\hat{q}_3\left(\theta_{i+1}^{(3)}|\theta_i^{(3)}\right)\pi^{\beta_3}(\theta_i^{(3)}) + \!\sum_{ {\theta}_i^{(3)}}\sum_{ {\theta}_{i}^{(2)}}\sum_{ {\theta}_{i+1}^{(2)}} \sum_{ {\theta}_{i+1}^{(1)}} \pi^{\beta_2}\left( {\theta}_i^{(2)}\right) \\&\quad \times q^{\beta_2}\left( {\theta}_{i+1}^{(2)}\!\mid\!  {\theta}_i^{(2)}\right) s_{1}\left( {\theta}_{i+1}^{(1)},  {\theta}^{\prime}\right) \hat{q}_1\left( {\theta}_{i+1}^{(1)}\!\mid\!  {\theta}_i^{(1)}\right)s_2\left(\theta_{i+1}^{(2)}, \theta^{\prime}\right)\hat{q}_3\left(\theta^{\prime}|\theta_i^{(3)}\right)\pi^{\beta_3}(\theta_i^{(3)}).
\end{aligned}
\end{equation}
Since the state space is finite, for any $\theta\in\Theta$, we can denote the maximum and minimum values of $U(\theta)$  as $u_{\max}$ and $u_{\min}$, respectively. Therefore, for any $k=1, \cdots, K-1$, we obtain
$$
\begin{aligned}
s_k\left(\theta^{(k)}_{i+1},\theta^{(k+1)}_{i+1}\ |\ \theta^{(1)}_{i},\theta ^{(2)}_{i}\right)&=e^{\left(\beta_{k}-\beta_{k+1}\right)\left[U( \theta^{(k+1)}_{i+1})+U(\theta^{(k+1)}_{i})-U(\theta^{(k)}_{i+1})-U(\theta^{(k)}_{i })\right]}
\\& \geq e^{-2\left(\beta_{k}-\beta_{k+1}\right)\left(u_{\max}-u_{\min}\right)}.
\end{aligned}
$$
Denote by 
\begin{equation}\label{epsilon_0}
\epsilon_{0}:=\min\limits_{k=1, \cdots, K-1}\left\{\exp\left\{-2\left(\beta_{k}-\beta_{k+1}\right)\left(u_{\max}-u_{\min}\right)\right\}\right\}.
\end{equation}
Then, for any $k=1, \cdots, K-1$, we can get that, $1\geq s_k\geq\epsilon_{0}> 0$. By using \Cref{epsilon_0,lm:3}, we obtain
\begin{align*}
&p\left( {\theta}^{\prime} \mid  {\theta}_i^{(1)}\right)\\&= \sum_{ {\theta}_i^{(2)}} \sum_{ {\theta}_{i+1}^{(2)}} \pi^{\beta_2}\left( {\theta}_i^{(2)}\right) \hat{q}_2\left( {\theta}_{i+1}^{(2)} \mid  {\theta}_i^{(2)}\right)\left[1-s_{1}\left( {\theta}^{\prime},  {\theta}_{i+1}^{(2)}\right)\right] \hat{q}_1\left( {\theta}^{\prime} \mid  {\theta}_i^{(1)}\right) \\
& \quad +\!\sum_{ {\theta}_i^{(3)}}\sum_{ {\theta}_{i+1}^{(3)}}\sum_{ {\theta}_i^{(2)}} \sum_{ {\theta}_{i+1}^{(1)}} \pi^{\beta_2}\!\left( {\theta}_i^{(2)}\right) \hat{q}_2\left( {\theta}^{\prime} \mid  {\theta}_i^{(2)}\right) s_{1}\left( {\theta}_{i+1}^{(1)},  {\theta}^{\prime}\right) \hat{q}_1\left( {\theta}_{i+1}^{(1)} \mid  {\theta}_i^{(1)}\right)\\
&\quad \times \left(1 \!-\! s_2(\theta^{\prime}, \theta_{i+1}^{(3)})\right)\hat{q}_3\left(\theta_{i+1}^{(3)}|\theta_i^{(3)}\right)\pi^{\beta_3}(\theta_i^{(3)}) + \!\sum_{ {\theta}_i^{(3)}}\sum_{ {\theta}_{i}^{(2)}}\sum_{ {\theta}_{i+1}^{(2)}} \sum_{ {\theta}_{i+1}^{(1)}} \pi^{\beta_2}\left( {\theta}_i^{(2)}\right) \\&\quad \times \hat{q}_2\left( {\theta}_{i+1}^{(2)}\!\mid\!  {\theta}_i^{(2)}\right) s_{1}\left( {\theta}_{i+1}^{(1)},  {\theta}^{\prime}\right) \hat{q}_1\left( {\theta}_{i+1}^{(1)}\!\mid\!  {\theta}_i^{(1)}\right)s_2\left(\theta_{i+1}^{(2)}, \theta^{\prime}\right)\hat{q}_3\left(\theta^{\prime}|\theta_i^{(3)}\right)\pi^{\beta_3}(\theta_i^{(3)})\\
&\geq \epsilon_{0}^2\left(\sum_{ {\theta}_{i}^{(2)}}\sum_{ {\theta}_{i+1}^{(2)}} \pi^{\beta_2}\!\left( {\theta}_i^{(2)}\right) \hat{q}_2\left( {\theta}_{i+1}^{(2)} \mid  {\theta}_i^{(2)}\right) \right)\left(\sum_{ {\theta}_i^{(3)}}\hat{q}_3\left(\theta^{\prime}|\theta_i^{(3)}\right)\pi^{\beta_3}(\theta_i^{(3)})\right)\left(\sum_{ {\theta}_{i+1}^{(1)}}\hat{q}_1\left( {\theta}_{i+1}^{(1)} \mid  {\theta}_i^{(1)}\right)\right)\\
&=\epsilon_{0}^2\epsilon_{\beta_3, \alpha}\frac{\exp\left\{\beta_{3} U (\theta^{\prime})\right\}}{\sum_{\theta^{\prime}\in\Theta}\exp\left\{\beta_{3} U(\theta^{\prime})\right\}},
\end{align*}
where 
\begin{equation}\label{epsilon_3}
\epsilon_{\beta_3, \alpha} := \exp\left\{ -\beta_3\left( M - \frac{m}{2} \right) \mathcal{D}_2 
- \beta_3 \| \nabla U(a) \| \mathcal{D}_1 
- \frac{1}{\alpha} \mathcal{D}_p \right\},
\end{equation}
where \(a \in \arg\min_{\theta \in \Theta} \| \nabla U(\theta) \|\), $M$ and $m$ are from~\Cref{asm_2,asm_3}. It then follows from~\citet[Corollary 5]{jones2004markov} that the chain is uniformly ergodic.

\end{proof}

\begin{proof}[\textbf{Proof of \cref{col:5.7}}]
By using \cref{thm:4} and the fact $0<\epsilon<1$, we note that as the $\epsilon$ approaches $1$, the sampling algorithm converges faster in terms of total variance. Specifically, we consider $$k:=\frac{\epsilon_0^2 \epsilon_{\beta_3, \alpha}}{\epsilon_{\beta_1, \alpha}}= \frac{\epsilon_0^2\exp\left\{-\beta_3\left((M-\frac{m}{2})\mathcal{D}_2 - \|\nabla U(a)\| \mathcal{D}_1\right)-\frac{1}{\alpha}\mathcal{D}_p\right\}}{\exp\left\{-\beta_1\left((M-\frac{m}{2})\mathcal{D}_2 - \|\nabla U(a)\| \mathcal{D}_1\right)-\frac{1}{\alpha}\mathcal{D}_p\right\}}.$$ 
By using the definition of $\mathcal{D}_p$ and $\|\nabla U(a)\| < \left((M-\frac{m}{2})\mathcal{D}_2 - \log(1 / \epsilon_0^2)\right)/ \mathcal{D}_1 $ and the fact that $\beta_3 < \beta_1$, we obtain
$$
\begin{aligned}
k& = \epsilon_0^2 \exp\left\{\left(\beta_1-\beta_3\right) \left(\left(M-\frac{m}{2}\right)\mathcal{D}_2 - \|\nabla U(a)\|\mathcal{D}_1\right)\right\}\\ &
> 1 . 
\end{aligned}
$$
Thus, we could conclude that PT-DMALA provides a better guaranteed upper bound on convergence speed compared to DLP.
\end{proof}

\section{Additional Experiments Results and Setting Details}\label{App_ex}
This section complements the main text by providing details on additional experimental procedures.
\subsection{Sampling from Synthetic Energies}
\paragraph{Synthetic Distribution.}
We examine  energy functions with varying components of MoG and MoS, and use forward KL, MMD, and EMC to evaluate the algorithms' capability to navigate through complex terrains with multiple local minima and discontinuities.

The probability density function of \textbf{MoG} is given by:
\[
p_{\text{MoG}}(\mathbf{x}) = \sum_{k=1}^{K_1} \pi_k \cdot \mathcal{N}(\mathbf{x} \mid \boldsymbol{\mu}_k, \boldsymbol{\Sigma}_k),
\]
where $K_1$ denotes the number of Gaussian components, $\pi_k$ denotes the mixing weight of the $k$-th component satisfying $\sum_{k=1}^{K_1} \pi_k = 1$ and $\pi_k > 0$, and $\mathcal{N}(\mathbf{x} \mid \boldsymbol{\mu}_k, \boldsymbol{\Sigma}_k)$ denotes the probability density function of a d-dimensional Gaussian distribution. 

Similarly, the probability density function of \textbf{MoS} is given by:
\[
p_{\text{MoS}}(\mathbf{x}) = \sum_{j=1}^{K_2} \pi_j \cdot t(\mathbf{x} \mid \boldsymbol{\mu}_j, \boldsymbol{\Sigma}_j, \nu_j),
\]
where $K_2$ denotes the number of Student's t components, $\pi_j$ denotes the mixing weight of the $j$-th component satisfying $\sum_{j=1}^{K_2} \pi_j = 1$ and $\pi_j > 0$, and $t(\mathbf{x} \mid \boldsymbol{\mu}_j, \boldsymbol{\Sigma}_j, \nu_j)$ denotes the probability density function of a  d-dimensional Student's t-distribution.

\textbf{EMC} is the expected entropy of the auxiliary distribution, that is,
\[
\text{EMC} := \mathbb{E}^{q_\theta} \mathcal{H} \big( p(\xi \mid \mathbf{x}) \big)
\approx -\frac{1}{N} \sum_{\mathbf{x} \sim q_\theta} \sum_{i=1}^M p(\xi_i \mid \mathbf{x}) \log_M p(\xi_i \mid \mathbf{x}),
\]
where $N$ denotes the number of samples drawn from $q_\theta$.

\textbf{MMD} is a kernel-based test used to compare distributions which is computed as:
\[
\text{MMD}^2(\pi, \tilde{\pi}) = \mathbb{E}_{\mathbf{x}, \mathbf{x}' \sim \pi} \big[k(\mathbf{x}, \mathbf{x}')\big] + 
\mathbb{E}_{\mathbf{y}, \mathbf{y}' \sim \tilde{\pi}} \big[k(\mathbf{y}, \mathbf{y}')\big] - 
2\mathbb{E}_{\mathbf{x} \sim \pi, \mathbf{y} \sim \tilde{\pi}} \big[k(\mathbf{x}, \mathbf{y})\big],
\]
where $k(\mathbf{x}, \mathbf{y})$ is a positive-definite kernel function. MMD measures the similarity between the empirical distributions of the generated and target samples. In practice, however, directly computing MMD is computationally expensive. Therefore, we use an
approximation based on Random Fourier Features (RFF). The feature mapping is defined as:
\[
\phi(X) = \sqrt{\frac{2}{D}} \cos(WX^\top + b),
\]
where $W \in \mathbb{R}^{D \times d}$ are random Gaussian variables sampled from $\mathcal{N}(0, 1 / \sigma^2)$, and $\mathbf{b}$ are random uniform variables in the range $[0, 2\pi]$. The parameter $\sigma$ controls the kernel bandwidth, and $D$ is the number of random features. For two distributions $\pi$ and $\tilde{\pi}$, the empirical mean feature embeddings for $X \sim \pi$ and $Y \sim \tilde{\pi}$ are computed for both distributions:
\[
\bm{\mu}_X = \frac{1}{n} \sum_{i=1}^n \phi(X_i), \quad 
\bm{\mu}_Y = \frac{1}{m} \sum_{i=1}^m \phi(Y_i).
\]
The following approach allows us to efficiently compute the MMD between two distributions using RFF:
\[
\text{MMD}^2(\pi, \tilde{\pi}) \approx \|\bm{\mu}_X - \bm{\mu}_Y\|^2.
\]
\textbf{KL divergence} measures the difference between two probability distributions. Given the distributions $\pi$ and $\tilde{\pi}$, the KL divergence is defined as:

$$
D_{\text{KL}}(\pi \parallel \tilde{\pi}) = \sum_{\theta \in \Theta} \pi(\theta) \log \left( \frac{\pi(\theta)}{\tilde{\pi}(\theta)} \right),
$$

where $\pi(\theta)$ represents the probability of $\theta$ under the target distribution $\pi$, and $\tilde{\pi}(\theta)$ represents the probability of $\theta$ under the empirical distribution $\tilde{\pi}$ obtained from sampling. This metric quantifies the information loss incurred when approximating the target distribution $\pi$ using the empirical distribution $\tilde{\pi}$. A lower value of the KL divergence indicates better performance in approximating the target distribution.

\begin{table}[htbp]
\centering
\caption{MMD~($10^{-3}$)($\downarrow$) results~(MoG) across different components~(c denotes the number of components)}
\begin{tabular}{lcccc}
\toprule
\textbf{Task} & \textbf{DMALA} & \textbf{ACS} & \textbf{AB} & \textbf{\textbf{PT-DMALA~\textit{(Ours)}}} \\
\midrule
c=2 & 0.824 \fs{0.026} & 0.481 \fs{0.021} & 0.479 \fs{0.029} & \textbf{0.229 \fs{0.027}} \\
c=4 & 0.942 \fs{0.023} & 0.694 \fs{0.026} & 0.642 \fs{0.012} & \textbf{0.301 \fs{0.012}} \\
c=6 & 1.076 \fs{0.045} & 0.844 \fs{0.033} & 0.801 \fs{0.023}  & \textbf{0.481 \fs{0.025}} \\
c=8 & 1.214 \fs{0.058} & 0.984 \fs{0.031} & 0.891 \fs{0.026}& \textbf{0.534 \fs{0.015}} \\
c=10 & 1.475 \fs{0.039} & 1.199 \fs{0.028} & 1.101 \fs{0.031} & \textbf{0.592 \fs{0.019}} \\
c=12 & 1.689 \fs{0.043} &  1.304  \fs{0.039} & 1.368\fs{0.022} & \textbf{0.702 \fs{0.022}} \\
c=14 & 1.948 \fs{0.051} & 1.694 \fs{0.047} & 1.621 \fs{0.040} & \textbf{0.815 \fs{0.024}} \\
c=16 & 2.130\fs{0.064} & 1.806 \fs{0.056}  & 1.691 \fs{0.042}  & \textbf{0.824 \fs{0.031}} \\
\bottomrule
\end{tabular}
\end{table}

\begin{table}[htbp]
\centering
\caption{MMD~($10^{-3}$)($\downarrow$) results (MoS) across different components~(c denotes the number of components)}
\begin{tabular}{lcccc}
\toprule
\textbf{Task} & \textbf{DMALA} & \textbf{ACS} & \textbf{AB} & \textbf{\textbf{PT-DMALA~\textit{(Ours)}}} \\
\midrule
c=2 & 0.910 \fs{0.034} & 0.663 \fs{0.016} & 0.596 \fs{0.023} & \textbf{0.291 \fs{0.021}} \\
c=4 &  1.014 \fs{0.048} & 0.701 \fs{0.019} & 0.766 \fs{0.017} & \textbf{0.337 \fs{0.018}} \\
c=6 & 1.319 \fs{0.037} & 1.056 \fs{0.021} & 0.992 \fs{0.031} & \textbf{0.564 \fs{0.019}} \\
c=8 & 1.617\fs{0.051} & 1.406 \fs{0.047}& 1.305 \fs{0.044}& \textbf{0.744\fs{0.028}} \\
c=10 & 1.730 \fs{0.061} & 1.598 \fs{0.041} & 1.708 \fs{0.048} & \textbf{0.824 \fs{0.029}} \\
c=12 & 1.934 \fs{0.054} & 1.894 \fs{0.030} & 1.881 \fs{0.037} & \textbf{0.879 \fs{0.033}} \\
c=14 & 2.095 \fs{0.069} & 2.001 \fs{0.052} & 2.003 \fs{0.043} & \textbf{0.921 \fs{0.027}} \\
c=16 & 2.158 \fs{0.073}& 1.813 \fs{0.061}& 1.515 \fs{0.068} & \textbf{0.941 \fs {0.022}} \\
\bottomrule
\end{tabular}
\end{table}

\paragraph{Sampler Configuration.}
DMALA  is implemented with a step size of $0.15$. For AB, the parameters are set to $\sigma = 0.1$ and $\alpha = 0.5$. ACS employs a cyclical step size scheduler with an initial step size of $0.6$ over $10$ cycles.  For PT-DMALA, based on a pilot run, we determined that the optimal number of chains for this task is between 2 and 5, and accordingly set the temperatures for each chain based on the corresponding results. we set the step size to $0.2$ for all chains.

\paragraph{Results.}
When the number of components in both MoG and MoS varies, our sampler consistently achieves significantly superior KL, MMD, and EMC scores compared to DMALA, ACS, and AB. Its capacity to effectively distribute samples, even in scenarios characterized by disconnected modes and steep energy barriers, underscores its robustness in navigating intricate discrete energy landscapes.


\subsection{RBM Sampling}\label{rbm_sampling_app}
\paragraph{RBM Introduction.}
We will give a brief introduction of the Block-Gibbs sampler used to represent the ground truth of the RBM distribution. For a more in-depth explanation, see~\citet{grathwohl2021oops}. Given the hidden units $h$ and the sample $\theta$, we define the RBM distribution as follows:
$$
\log p(\theta, h)=h^\top W \theta + b^\top \theta+c^\top-\log Z. 
$$
As before, Z is the normalizing constant for the distribution. The sample $x$ is represented by the visible layer with units corresponding to the sample space dimension and $h$ represents the model capacity. It can be shown that the marginal distributions are as follows:
$$
\begin{gathered}
p(x \mid h)=\operatorname{Bernoulli}(W x+c), \\
p(h \mid x)=\operatorname{Bernoulli}\left(W^\top h+b\right).
\end{gathered}
$$
The Block-Gibbs sampler updates $\theta$ and $h$ alternatively, allowing for many of the coordinates to get changed at the same time, due to utilizing the specific structure of the RBM model.

\paragraph{Experiment Setup.}
We follow the experimental setup of~\citet{zhang2022langevin}, using RBM models with 500 hidden units and 784 visible units. We adopt a similar training protocol, training the model for 1,000 iterations. For the mode initialization experiment, we train the model for one epoch to facilitate a better comparison of the results.

\paragraph{Sampler Configuration.}
For DMALA, we set step size to $0.2$ , and for AB we use the default hyperparameters for the first order sampler.
For ACS, we use $\rho^*=0.5$, $\beta_{\max }=0.95$, $\zeta=0.5$, cycle length $s=20$ for all the datasets. We also fix the total overhead of the tuning algorithm to $10 \%$ of the total sampling steps. For PT-DMALA, we set the step size to $0.15 \sim 0.45$ for all chains.

\paragraph{Escape from Local Modes.}
In addition to using the same initialization as~\citet{grathwohl2021oops, zhang2022langevin}, we extend the experiment to measure the ability of a sampler to escape from local modes. We initialize the sampler within the most likely mode, as measured by unnormalized energy of the RBM. Samplers that are less prone to getting trapped in local modes will be able to converge quicker to the ground truth, as measured by $\log$ MMD. We include the performance of the various samplers across 5 random seeds in~\cref{fig_mode_int}. PT-DMALA demonstrates superior robustness to mode-specific initialization due to its capability to escape from local modes.

\paragraph{Generated Images.}
We found that a visual inspection of the generated images demonstrates the ability of PTDLP to escape local modes. To ensure a fair comparison of algorithm performance, we use the same settings and baseline figures as in \citet{pynadath2024gradient}, and include the generated images in \Cref{img_rbm_sampling_app}.
\begin{figure}[htbp]
\centering
\resizebox{\textwidth}{!}{
\subfloat[DULA]{\includegraphics[width=2.8cm]{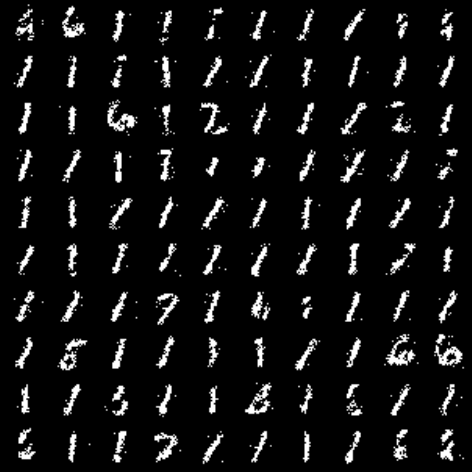}}\hspace*{0pt}
\subfloat[DMALA]{\includegraphics[width=2.8cm]{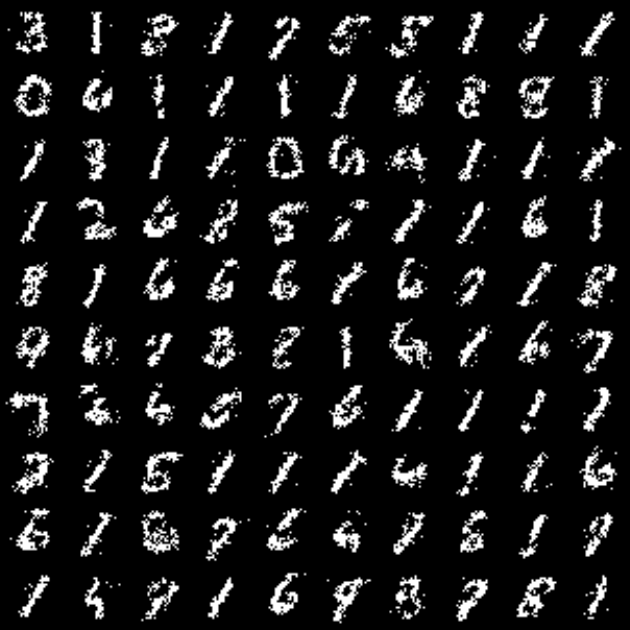}}\hspace*{0pt}
\subfloat[AB]{\includegraphics[width=2.8cm]{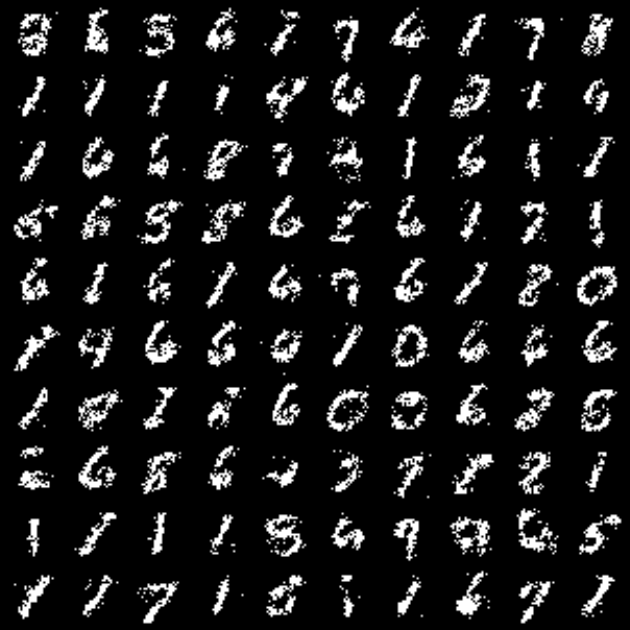}}\hspace*{0pt}
\subfloat[ACS]{\includegraphics[width=2.8cm]{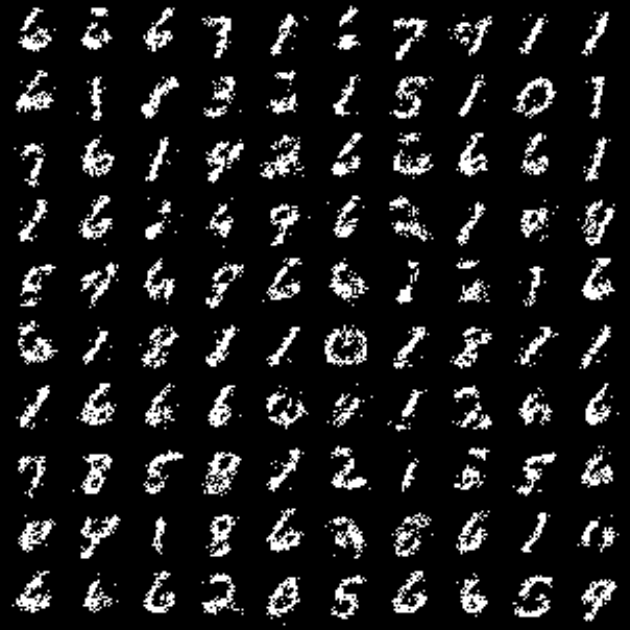}}\hspace*{0pt}
\subfloat[PTDLP]{\includegraphics[width=2.8cm]{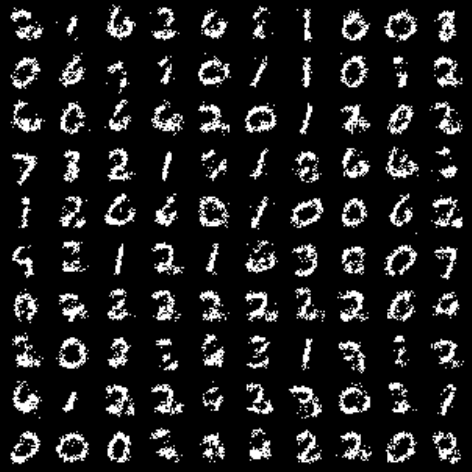}}}
\caption{Images sampled from RBM trained on MNIST when the sampler is initialized to most likely mode. Our algorithm is able to generate a diverse range of digits, demonstrating its ability to escape from modes.}
\label{img_rbm_sampling_app}
\end{figure}

\subsection{RBM Learning}
\paragraph{Experiment Design.}
We use the same RBM structure as the sampling task, with 500 hidden units and 784 visible units. We apply the samplers of interest to the PCD algorithm introduced by~\citet{tieleman2008training}. The model parameters are tuned via the Adam optimizer with a learning rate of
.001.

\paragraph{Sampler Configuration.}
For DMALA, we set step size to $0.2$ , and for AB we use the default hyperparameters for the first order sampler. For ACS, we follow the setting in \citet{pynadath2024gradient} and use $\rho^*=0.5, \beta_{\max }=0.95, \zeta=0.5$, cycle length $s=20$ for all the datasets. We also fix the total overhead of the tuning algorithm to $10 \%$ of the total sampling steps. For PT-DMALA, based on the results of the pilot run, we set the number of chains between 3 and 5, and assigned temperatures accordingly, with step sizes ranging from $0.15$ to $0.4$ across chains.


\subsection{Learning EBMs}
\paragraph{Experiment Setup.} We adopt the same ResNet structure and experiment protocol as in~\citet{grathwohl2021oops}, where the network has 8 residual blocks with 64 feature maps. There are 2 convolutional layers for each residual block. The network uses Swish activation function~\citep{ramachandran2017searching}. For static/dynamic MNIST and Omniglot, we use a replay buffer with 10,000 samples. For Caltech, we use a replay buffer with 1,000 samples. We evaluate the models every 5,000 iterations by running AIS for 10,000 steps. The reported results are from the model which performs the best on the validation set. The final reported numbers are generated by running 300,000 iterations of AIS. All the models are trained with Adam~\citep{kingma2014adam} with a learning rate of 0.001 for 50,000 iterations.

\paragraph{Sampler Configuration.} For DMALA, we use a step size of $0.15$ as used in~\citet{zhang2022langevin}. For ACS, we follow the setting in~\citet{pynadath2024gradient} and use $200$ sampling steps for EstimateAlphaMax and EstimateAlphaMin. For Static MNIST, Dynamic MNIST, and Omniglot, we set the algorithm to tune $\alpha_{\max }$ and $\alpha_{\min }$ every $25$ cycles, where each cycle has $50$ training iterations. For Caltech Silhouettes, we have to adapt every $10$ cycles with the same number of training iterations. We set the step sizes as $0.05 \sim 0.4$ for all chains in our algorithm. 

\paragraph{Generated Images.}
Here we provide the generated results in~\cref{img_ebm_learn_genimgs} from our algorithm across Static MNIST, Dynamic MNIST, Omniglot, and Caltech Silhouettes. These images demonstrate the ability of trained deep EBMs to capture the underlying data distribution. The deep EBM is capable of producing high-quality samples that visually resemble the training data, which indicates that the learned energy function effectively models the complex, high-dimensional structure of the data.
\begin{figure}[htbp]
\def\imgvspace{0.25cm}
\def\imghspace{.25cm}
\def\imghwidth{3.15cm}
\centering
\subfloat{\includegraphics[width=\imghwidth]{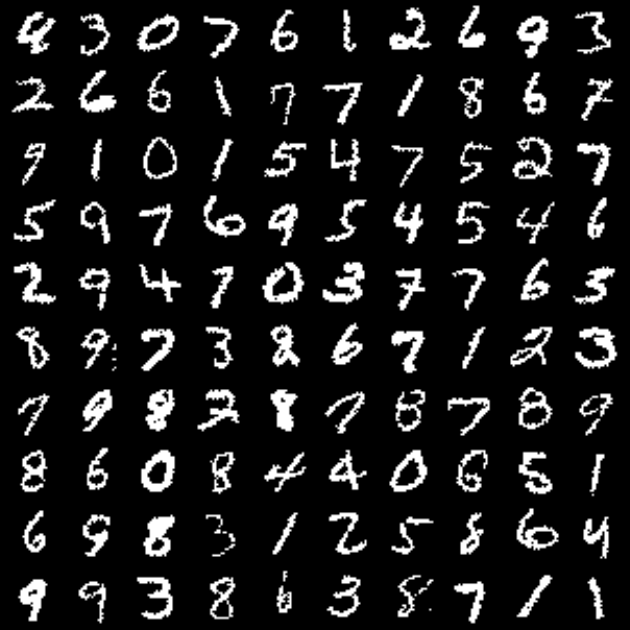}}\hspace*{\imghspace}
\subfloat{\includegraphics[width=\imghwidth]{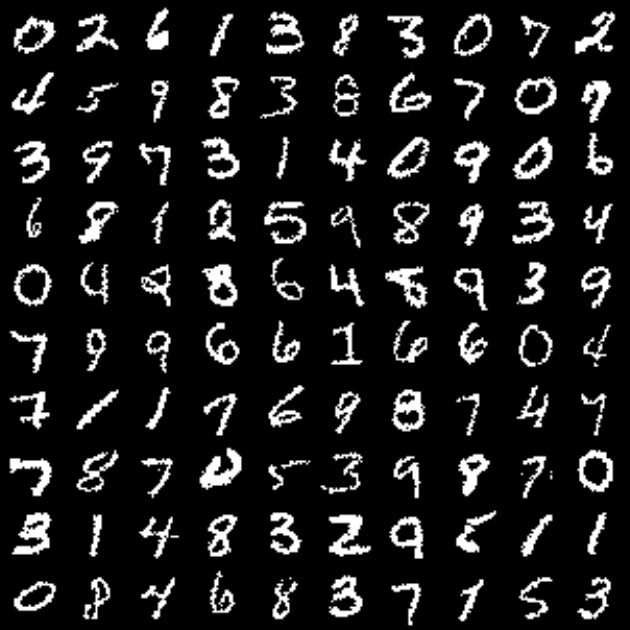}}\hspace*{\imghspace}
\subfloat{\includegraphics[width=\imghwidth]{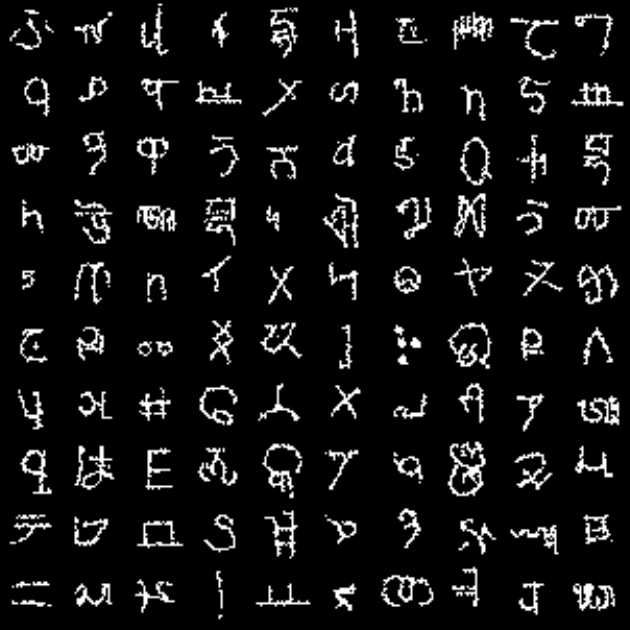}}\hspace*{\imghspace}
\subfloat{\includegraphics[width=\imghwidth]{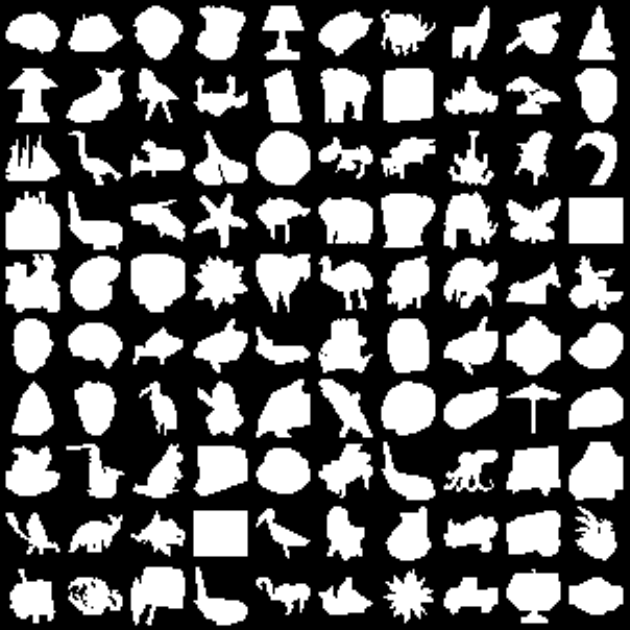}} 

\vspace{\imgvspace}

\centering
\hspace{.15cm}
\subfloat[Static]{\includegraphics[width=\imghwidth]{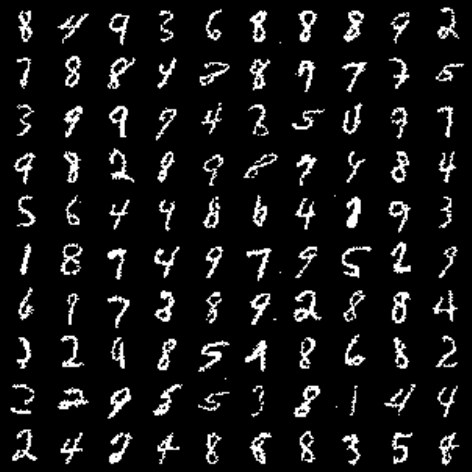}}\hspace*{\imghspace}
\subfloat[Dynamic]{\includegraphics[width=\imghwidth]{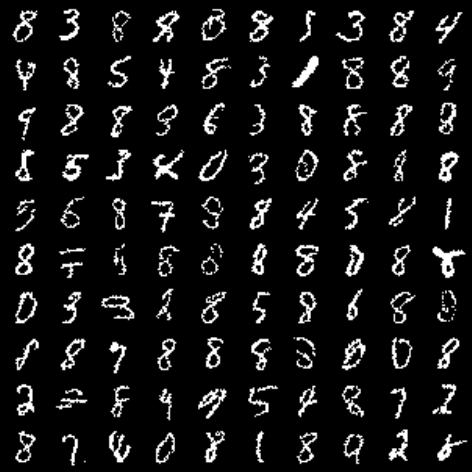}}\hspace*{\imghspace}
\subfloat[Omniglot]{\includegraphics[width=\imghwidth]{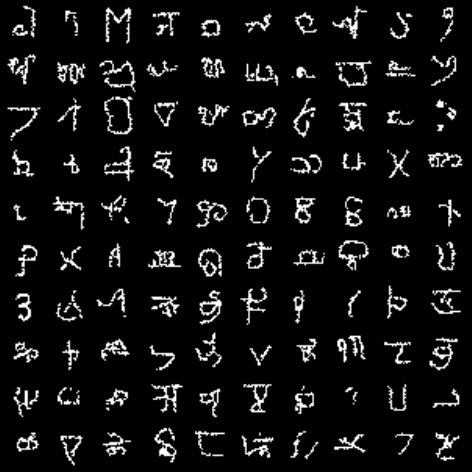}}\hspace*{\imghspace}
\subfloat[Caltech]{\includegraphics[width=\imghwidth]{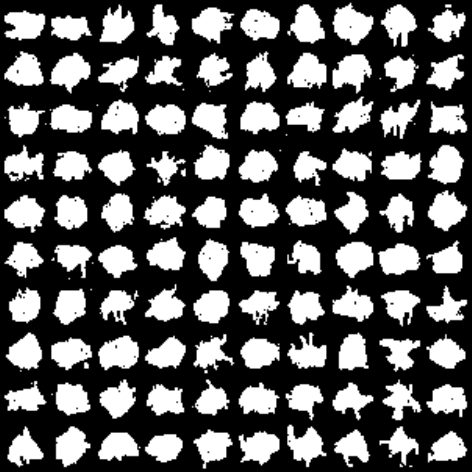}} \hspace*{\imghspace}
\caption{The images on the top row are examples from the dataset, while the bottom row are from the trained EBM. The images generated from our algorithm are similar to those from the dataset, demonstrating that the model is capable of generating high-quality samples.}
\label{img_ebm_learn_genimgs}
\end{figure}


\newpage
\section*{NeurIPS Paper Checklist}
\begin{enumerate}

\item {\bf Claims}
    \item[] Question: Do the main claims made in the abstract and introduction accurately reflect the paper's contributions and scope?
    \item[] Answer: \answerYes{} 
    \item[] Justification: The main claims made in the abstract and introduction accurately reflect the paper’s contributions and scope. See the Abstract and Introduction sections.
    \item[] Guidelines:
    \begin{itemize}
        \item The answer NA means that the abstract and introduction do not include the claims made in the paper.
        \item The abstract and/or introduction should clearly state the claims made, including the contributions made in the paper and important assumptions and limitations. A No or NA answer to this question will not be perceived well by the reviewers. 
        \item The claims made should match theoretical and experimental results, and reflect how much the results can be expected to generalize to other settings. 
        \item It is fine to include aspirational goals as motivation as long as it is clear that these goals are not attained by the paper. 
    \end{itemize}

\item {\bf Limitations}
    \item[] Question: Does the paper discuss the limitations of the work performed by the authors?
    \item[] Answer: \answerYes{} 
    \item[] Justification: The limitation of the work is discussed in the Limitations section.
    \item[] Guidelines:
    \begin{itemize}
        \item The answer NA means that the paper has no limitation while the answer No means that the paper has limitations, but those are not discussed in the paper. 
        \item The authors are encouraged to create a separate "Limitations" section in their paper.
        \item The paper should point out any strong assumptions and how robust the results are to violations of these assumptions (e.g., independence assumptions, noiseless settings, model well-specification, asymptotic approximations only holding locally). The authors should reflect on how these assumptions might be violated in practice and what the implications would be.
        \item The authors should reflect on the scope of the claims made, e.g., if the approach was only tested on a few datasets or with a few runs. In general, empirical results often depend on implicit assumptions, which should be articulated.
        \item The authors should reflect on the factors that influence the performance of the approach. For example, a facial recognition algorithm may perform poorly when image resolution is low or images are taken in low lighting. Or a speech-to-text system might not be used reliably to provide closed captions for online lectures because it fails to handle technical jargon.
        \item The authors should discuss the computational efficiency of the proposed algorithms and how they scale with dataset size.
        \item If applicable, the authors should discuss possible limitations of their approach to address problems of privacy and fairness.
        \item While the authors might fear that complete honesty about limitations might be used by reviewers as grounds for rejection, a worse outcome might be that reviewers discover limitations that aren't acknowledged in the paper. The authors should use their best judgment and recognize that individual actions in favor of transparency play an important role in developing norms that preserve the integrity of the community. Reviewers will be specifically instructed to not penalize honesty concerning limitations.
    \end{itemize}

\item {\bf Theory assumptions and proofs}
    \item[] Question: For each theoretical result, does the paper provide the full set of assumptions and a complete (and correct) proof?
    \item[] Answer: \answerYes{} 
    \item[] Justification: For all theoretical results, the paper provides the corresponding proofs in the appendix.
    \item[] Guidelines:
    \begin{itemize}
        \item The answer NA means that the paper does not include theoretical results. 
        \item All the theorems, formulas, and proofs in the paper should be numbered and cross-referenced.
        \item All assumptions should be clearly stated or referenced in the statement of any theorems.
        \item The proofs can either appear in the main paper or the supplemental material, but if they appear in the supplemental material, the authors are encouraged to provide a short proof sketch to provide intuition. 
        \item Inversely, any informal proof provided in the core of the paper should be complemented by formal proofs provided in appendix or supplemental material.
        \item Theorems and Lemmas that the proof relies upon should be properly referenced. 
    \end{itemize}

    \item {\bf Experimental result reproducibility}
    \item[] Question: Does the paper fully disclose all the information needed to reproduce the main experimental results of the paper to the extent that it affects the main claims and/or conclusions of the paper (regardless of whether the code and data are provided or not)?
    \item[] Answer: \answerYes{} 
    \item[] Justification: The paper fully discloses all the information needed to reproduce the main experimental results in the paper. See the Experiments section.
    \item[] Guidelines:
    \begin{itemize}
        \item The answer NA means that the paper does not include experiments.
        \item If the paper includes experiments, a No answer to this question will not be perceived well by the reviewers: Making the paper reproducible is important, regardless of whether the code and data are provided or not.
        \item If the contribution is a dataset and/or model, the authors should describe the steps taken to make their results reproducible or verifiable. 
        \item Depending on the contribution, reproducibility can be accomplished in various ways. For example, if the contribution is a novel architecture, describing the architecture fully might suffice, or if the contribution is a specific model and empirical evaluation, it may be necessary to either make it possible for others to replicate the model with the same dataset, or provide access to the model. In general. releasing code and data is often one good way to accomplish this, but reproducibility can also be provided via detailed instructions for how to replicate the results, access to a hosted model (e.g., in the case of a large language model), releasing of a model checkpoint, or other means that are appropriate to the research performed.
        \item While NeurIPS does not require releasing code, the conference does require all submissions to provide some reasonable avenue for reproducibility, which may depend on the nature of the contribution. For example
        \begin{enumerate}
            \item If the contribution is primarily a new algorithm, the paper should make it clear how to reproduce that algorithm.
            \item If the contribution is primarily a new model architecture, the paper should describe the architecture clearly and fully.
            \item If the contribution is a new model (e.g., a large language model), then there should either be a way to access this model for reproducing the results or a way to reproduce the model (e.g., with an open-source dataset or instructions for how to construct the dataset).
            \item We recognize that reproducibility may be tricky in some cases, in which case authors are welcome to describe the particular way they provide for reproducibility. In the case of closed-source models, it may be that access to the model is limited in some way (e.g., to registered users), but it should be possible for other researchers to have some path to reproducing or verifying the results.
        \end{enumerate}
    \end{itemize}

\item {\bf Open access to data and code}
    \item[] Question: Does the paper provide open access to the data and code, with sufficient instructions to faithfully reproduce the main experimental results, as described in supplemental material?
    \item[] Answer: \answerYes{} 
    \item[] Justification: We provide the data and code in the supplemental material to reproduce the main experimental results.
    \item[] Guidelines:
    \begin{itemize}
        \item The answer NA means that paper does not include experiments requiring code.
        \item Please see the NeurIPS code and data submission guidelines (\url{https://nips.cc/public/guides/CodeSubmissionPolicy}) for more details.
        \item While we encourage the release of code and data, we understand that this might not be possible, so “No” is an acceptable answer. Papers cannot be rejected simply for not including code, unless this is central to the contribution (e.g., for a new open-source benchmark).
        \item The instructions should contain the exact command and environment needed to run to reproduce the results. See the NeurIPS code and data submission guidelines (\url{https://nips.cc/public/guides/CodeSubmissionPolicy}) for more details.
        \item The authors should provide instructions on data access and preparation, including how to access the raw data, preprocessed data, intermediate data, and generated data, etc.
        \item The authors should provide scripts to reproduce all experimental results for the new proposed method and baselines. If only a subset of experiments are reproducible, they should state which ones are omitted from the script and why.
        \item At submission time, to preserve anonymity, the authors should release anonymized versions (if applicable).
        \item Providing as much information as possible in supplemental material (appended to the paper) is recommended, but including URLs to data and code is permitted.
    \end{itemize}

\item {\bf Experimental setting/details}
    \item[] Question: Does the paper specify all the training and test details (e.g., data splits, hyperparameters, how they were chosen, type of optimizer, etc.) necessary to understand the results?
    \item[] Answer: \answerYes{} 
    \item[] Justification: We specify all the training and test details necessary to understand the results. See the Experiments section.
    \item[] Guidelines:
    \begin{itemize}
        \item The answer NA means that the paper does not include experiments.
        \item The experimental setting should be presented in the core of the paper to a level of detail that is necessary to appreciate the results and make sense of them.
        \item The full details can be provided either with the code, in appendix, or as supplemental material.
    \end{itemize}

\item {\bf Experiment statistical significance}
    \item[] Question: Does the paper report error bars suitably and correctly defined or other appropriate information about the statistical significance of the experiments?
    \item[] Answer: \answerYes{} 
    \item[] Justification: We report the statistical significance of the experiments. In the tables presenting the experimental results, we provide the standard deviations across multiple runs.
    \item[] Guidelines:
    \begin{itemize}
        \item The answer NA means that the paper does not include experiments.
        \item The authors should answer "Yes" if the results are accompanied by error bars, confidence intervals, or statistical significance tests, at least for the experiments that support the main claims of the paper.
        \item The factors of variability that the error bars are capturing should be clearly stated (for example, train/test split, initialization, random drawing of some parameter, or overall run with given experimental conditions).
        \item The method for calculating the error bars should be explained (closed form formula, call to a library function, bootstrap, etc.)
        \item The assumptions made should be given (e.g., Normally distributed errors).
        \item It should be clear whether the error bar is the standard deviation or the standard error of the mean.
        \item It is OK to report 1-sigma error bars, but one should state it. The authors should preferably report a 2-sigma error bar than state that they have a 96\% CI, if the hypothesis of Normality of errors is not verified.
        \item For asymmetric distributions, the authors should be careful not to show in tables or figures symmetric error bars that would yield results that are out of range (e.g. negative error rates).
        \item If error bars are reported in tables or plots, The authors should explain in the text how they were calculated and reference the corresponding figures or tables in the text.
    \end{itemize}

\item {\bf Experiments compute resources}
    \item[] Question: For each experiment, does the paper provide sufficient information on the computer resources (type of compute workers, memory, time of execution) needed to reproduce the experiments?
    \item[] Answer: \answerYes{} 
    \item[] Justification: All experiments are conducted on a normal laptop and we provide the time of execution needed to reproduce the experiments. See the Experiments section.
    \item[] Guidelines:
    \begin{itemize}
        \item The answer NA means that the paper does not include experiments.
        \item The paper should indicate the type of compute workers CPU or GPU, internal cluster, or cloud provider, including relevant memory and storage.
        \item The paper should provide the amount of compute required for each of the individual experimental runs as well as estimate the total compute. 
        \item The paper should disclose whether the full research project required more compute than the experiments reported in the paper (e.g., preliminary or failed experiments that didn't make it into the paper). 
    \end{itemize}
    
\item {\bf Code of ethics}
    \item[] Question: Does the research conducted in the paper conform, in every respect, with the NeurIPS Code of Ethics \url{https://neurips.cc/public/EthicsGuidelines}?
    \item[] Answer: \answerYes{} 
    \item[] Justification: The research conducted in the paper conform with the NeurIPS Code of Ethics.
    \item[] Guidelines:
    \begin{itemize}
        \item The answer NA means that the authors have not reviewed the NeurIPS Code of Ethics.
        \item If the authors answer No, they should explain the special circumstances that require a deviation from the Code of Ethics.
        \item The authors should make sure to preserve anonymity (e.g., if there is a special consideration due to laws or regulations in their jurisdiction).
    \end{itemize}

\item {\bf Broader impacts}
    \item[] Question: Does the paper discuss both potential positive societal impacts and negative societal impacts of the work performed?
    \item[] Answer: \answerNo{} 
    \item[] Justification:   This paper presents work whose goal is to advance the field of machine learning. There are many potential societal consequences of our work, none of which we feel must be specifically highlighted here.
    \item[] Guidelines:
    \begin{itemize}
        \item The answer NA means that there is no societal impact of the work performed.
        \item If the authors answer NA or No, they should explain why their work has no societal impact or why the paper does not address societal impact.
        \item Examples of negative societal impacts include potential malicious or unintended uses (e.g., disinformation, generating fake profiles, surveillance), fairness considerations (e.g., deployment of technologies that could make decisions that unfairly impact specific groups), privacy considerations, and security considerations.
        \item The conference expects that many papers will be foundational research and not tied to particular applications, let alone deployments. However, if there is a direct path to any negative applications, the authors should point it out. For example, it is legitimate to point out that an improvement in the quality of generative models could be used to generate deepfakes for disinformation. On the other hand, it is not needed to point out that a generic algorithm for optimizing neural networks could enable people to train models that generate Deepfakes faster.
        \item The authors should consider possible harms that could arise when the technology is being used as intended and functioning correctly, harms that could arise when the technology is being used as intended but gives incorrect results, and harms following from (intentional or unintentional) misuse of the technology.
        \item If there are negative societal impacts, the authors could also discuss possible mitigation strategies (e.g., gated release of models, providing defenses in addition to attacks, mechanisms for monitoring misuse, mechanisms to monitor how a system learns from feedback over time, improving the efficiency and accessibility of ML).
    \end{itemize}
    
\item {\bf Safeguards}
    \item[] Question: Does the paper describe safeguards that have been put in place for responsible release of data or models that have a high risk for misuse (e.g., pretrained language models, image generators, or scraped datasets)?
    \item[] Answer: \answerNA{} 
    \item[] Justification: The paper poses no such risks.
    \item[] Guidelines:
    \begin{itemize}
        \item The answer NA means that the paper poses no such risks.
        \item Released models that have a high risk for misuse or dual-use should be released with necessary safeguards to allow for controlled use of the model, for example by requiring that users adhere to usage guidelines or restrictions to access the model or implementing safety filters. 
        \item Datasets that have been scraped from the Internet could pose safety risks. The authors should describe how they avoided releasing unsafe images.
        \item We recognize that providing effective safeguards is challenging, and many papers do not require this, but we encourage authors to take this into account and make a best faith effort.
    \end{itemize}

\item {\bf Licenses for existing assets}
    \item[] Question: Are the creators or original owners of assets (e.g., code, data, models), used in the paper, properly credited and are the license and terms of use explicitly mentioned and properly respected?
    \item[] Answer: \answerYes{} 
    \item[] Justification: All code, models, and datasets mentioned in the text are appropriately cited with their original papers.
    \item[] Guidelines:
    \begin{itemize}
        \item The answer NA means that the paper does not use existing assets.
        \item The authors should cite the original paper that produced the code package or dataset.
        \item The authors should state which version of the asset is used and, if possible, include a URL.
        \item The name of the license (e.g., CC-BY 4.0) should be included for each asset.
        \item For scraped data from a particular source (e.g., website), the copyright and terms of service of that source should be provided.
        \item If assets are released, the license, copyright information, and terms of use in the package should be provided. For popular datasets, \url{paperswithcode.com/datasets} has curated licenses for some datasets. Their licensing guide can help determine the license of a dataset.
        \item For existing datasets that are re-packaged, both the original license and the license of the derived asset (if it has changed) should be provided.
        \item If this information is not available online, the authors are encouraged to reach out to the asset's creators.
    \end{itemize}

\item {\bf New assets}
    \item[] Question: Are new assets introduced in the paper well documented and is the documentation provided alongside the assets?
    \item[] Answer: \answerYes{} 
    \item[] Justification: New assets introduced in the paper, such as code, are well documented. The documentation is provided alongside the assets in the supplementary material.
    \item[] Guidelines:
    \begin{itemize}
        \item The answer NA means that the paper does not release new assets.
        \item Researchers should communicate the details of the dataset/code/model as part of their submissions via structured templates. This includes details about training, license, limitations, etc. 
        \item The paper should discuss whether and how consent was obtained from people whose asset is used.
        \item At submission time, remember to anonymize your assets (if applicable). You can either create an anonymized URL or include an anonymized zip file.
    \end{itemize}

\item {\bf Crowdsourcing and research with human subjects}
    \item[] Question: For crowdsourcing experiments and research with human subjects, does the paper include the full text of instructions given to participants and screenshots, if applicable, as well as details about compensation (if any)? 
    \item[] Answer: \answerNA{} 
    \item[] Justification: The paper does not involve crowdsourcing nor research with human subjects.
    \item[] Guidelines:
    \begin{itemize}
        \item The answer NA means that the paper does not involve crowdsourcing nor research with human subjects.
        \item Including this information in the supplemental material is fine, but if the main contribution of the paper involves human subjects, then as much detail as possible should be included in the main paper. 
        \item According to the NeurIPS Code of Ethics, workers involved in data collection, curation, or other labor should be paid at least the minimum wage in the country of the data collector. 
    \end{itemize}

\item {\bf Institutional review board (IRB) approvals or equivalent for research with human subjects}
    \item[] Question: Does the paper describe potential risks incurred by study participants, whether such risks were disclosed to the subjects, and whether Institutional Review Board (IRB) approvals (or an equivalent approval/review based on the requirements of your country or institution) were obtained?
    \item[] Answer: \answerNA{} 
    \item[] Justification: The paper does not involve crowdsourcing nor research with human subjects.
    \item[] Guidelines:
    \begin{itemize}
        \item The answer NA means that the paper does not involve crowdsourcing nor research with human subjects.
        \item Depending on the country in which research is conducted, IRB approval (or equivalent) may be required for any human subjects research. If you obtained IRB approval, you should clearly state this in the paper. 
        \item We recognize that the procedures for this may vary significantly between institutions and locations, and we expect authors to adhere to the NeurIPS Code of Ethics and the guidelines for their institution. 
        \item For initial submissions, do not include any information that would break anonymity (if applicable), such as the institution conducting the review.
    \end{itemize}

\item {\bf Declaration of LLM usage}
    \item[] Question: Does the paper describe the usage of LLMs if it is an important, original, or non-standard component of the core methods in this research? Note that if the LLM is used only for writing, editing, or formatting purposes and does not impact the core methodology, scientific rigorousness, or originality of the research, declaration is not required.
    \item[] Answer: \answerNA{} 
    \item[] Justification: The development of our algorithm does not involve LLMs.
    \item[] Guidelines:
    \begin{itemize}
        \item The answer NA means that the core method development in this research does not involve LLMs as any important, original, or non-standard components.
        \item Please refer to our LLM policy (\url{https://neurips.cc/Conferences/2025/LLM}) for what should or should not be described.
    \end{itemize}

\end{enumerate}

\end{document}